\DocumentMetadata 
{
	lang		= en-US,
	pdfversion  = 1.7,
    pdfstandard = a-2b,
}

\documentclass[letterpaper, journal]{IEEEtran} 

\usepackage{amsfonts}
\usepackage{amsmath}
\usepackage{amssymb}
\usepackage{amsthm} 

\usepackage{listings}
\usepackage{prettyref}
\usepackage{xspace}
\usepackage{xcolor}
\usepackage{graphicx}
\usepackage{bm}

\usepackage{multirow}
\usepackage{hhline}
\usepackage{booktabs}
\usepackage{adjustbox}

\usepackage[caption=false,font=normalsize,labelfont=sf,textfont=sf]{subfig}
\usepackage{floatrow}

\usepackage[short]{optidef}

\usepackage[shortlabels]{enumitem}
\usepackage[ruled,linesnumbered]{algorithm2e}

\usepackage{tikz}
\usepackage{tikz-3dplot}
\usetikzlibrary{math}
\usepackage{pgfplots}
\usetikzlibrary{fillbetween}
\usepgfplotslibrary{fillbetween}
\pgfplotsset{compat=1.18}
\pgfdeclarelayer{bg} 
\pgfsetlayers{bg,main}

\pdfinfo{
    /Author (Lorenzo Shaikewitz)
    /Title (Conformal Pose Uncertainty)
    /CreationDate (D:20250707110000)
    /Subject (Robotics)
    /Keywords (Robotics;Perception;Tracking)
}
 \usepackage[letterpaper,
            left=1in,
            right=1in,
            top=1in,
            bottom=1in]{geometry}
\usepackage{hyperref}

\graphicspath{ {figures/}}


\newrefformat{prob}{Problem\,\ref{#1}}
\newrefformat{def}{Definition\,\ref{#1}}
\newrefformat{sec}{Section\,\ref{#1}}
\newrefformat{sub}{Section\,\ref{#1}}
\newrefformat{prop}{Proposition\,\ref{#1}}
\newrefformat{app}{Appendix\,\ref{#1}}
\newrefformat{alg}{Algorithm\,\ref{#1}}
\newrefformat{cor}{Corollary\,\ref{#1}}
\newrefformat{thm}{Theorem\,\ref{#1}}
\newrefformat{lem}{Lemma\,\ref{#1}}
\newrefformat{fig}{Fig.\,\ref{#1}}
\newrefformat{tab}{Table\,\ref{#1}}

\newtheorem{theorem}{Theorem}
\newtheorem{problem}{Problem}

\newtheorem{lemma}[theorem]{Lemma}

\newtheorem{proposition}[theorem]{Proposition}
\newtheorem{remark}[theorem]{Remark}

\newcommand{\bdmath}{\begin{dmath}}
\newcommand{\edmath}{\end{dmath}}
\newcommand{\beq}{\begin{equation}}
\newcommand{\eeq}{\end{equation}}
\newcommand{\bdm}{\begin{displaymath}}
\newcommand{\edm}{\end{displaymath}}
\newcommand{\bea}{\begin{eqnarray}}
\newcommand{\eea}{\end{eqnarray}}
\newcommand{\beal}{\beq \begin{array}{ll}}
\newcommand{\eeal}{\end{array} \eeq}
\newcommand{\beas}{\begin{eqnarray*}}
\newcommand{\eeas}{\end{eqnarray*}}
\newcommand{\ba}{\begin{array}}
\newcommand{\ea}{\end{array}}
\newcommand{\bit}{\begin{itemize}}
\newcommand{\eit}{\end{itemize}}
\newcommand{\ben}{\begin{enumerate}}
\newcommand{\een}{\end{enumerate}}

\newcommand{\eg}{\emph{e.g.,}\xspace}
\newcommand{\ie}{\emph{i.e.,}\xspace}

\newcommand{\hide}[1]{}

\newcommand{\hiddenText}{{\color{gray} hidden text.}}
\newcommand{\hideWithText}[1]{\hiddenText}

\newcommand{\eye}{{\mathbf I}}

\newcommand{\at}[1]{^{(#1)}}

\newcommand{\SO}[1]{\ensuremath{\mathrm{SO}(#1)}\xspace}

\newcommand{\T}{\mathsf{T}}

\newcommand{\blue}[1]{{\color{blue}#1}}

\newcommand{\linkToPdf}[1]{\href{#1}{\blue{(pdf)}}}
\newcommand{\linkToPpt}[1]{\href{#1}{\blue{(ppt)}}}
\newcommand{\linkToCode}[1]{\href{#1}{\blue{(code)}}}
\newcommand{\linkToWeb}[1]{\href{#1}{\blue{(web)}}}
\newcommand{\linkToVideo}[1]{\href{#1}{\blue{(video)}}}
\newcommand{\linkToMedia}[1]{\href{#1}{\blue{(media)}}}
\newcommand{\award}[1]{\xspace}

\newcommand{\eps}{\epsilon}
\newcommand{\RR}{\mathbb{R}}

\newrefformat{appendix}{Appendix~\ref{#1}}
\newrefformat{sec}{Section~\ref{#1}}
\newrefformat{prob}{Problem~\ref{#1}}
\newrefformat{def}{Definition~\ref{#1}}
\newrefformat{prop}{Proposition~\ref{#1}}
\newrefformat{alg}{Algorithm~\ref{#1}}
\newrefformat{cor}{Corollary~\ref{#1}}
\newrefformat{thm}{Therorem~\ref{#1}}
\newrefformat{lem}{Lemma~\ref{#1}}
\newrefformat{fig}{Figure~\ref{#1}}
\newrefformat{tab}{Table~\ref{#1}}

\renewcommand{\linkToPdf}[1]{\href{#1}{}}
\renewcommand{\linkToPpt}[1]{\href{#1}{}}
\renewcommand{\linkToCode}[1]{\href{#1}{}}
\renewcommand{\linkToWeb}[1]{\href{#1}{}}
\renewcommand{\linkToVideo}[1]{\href{#1}{}}
\renewcommand{\linkToMedia}[1]{\href{#1}{}}

\def\*#1{\mathbf{#1}}
\def\'#1{\bm{#1}}

\newcommand{\ez}{\mathbf{\hat{e}}_3}

\title{Uncertainty Quantification for Visual Object Pose Estimation:
\\[-0.6ex] S-Lemma Ellipsoidal Bounds}

\author{Lorenzo Shaikewitz,
    Charis Georgiou, and
    Luca Carlone
\thanks{Manuscript received November 22, 2025; revised April 29, 2026; accepted June 29, 2026. This article was recommended for publication by editor Jeanette Bohg upon evaluation of the reviewers' comments. (\emph{Corresponding author: Lorenzo Shaikewitz}).}\thanks{This work was supported by the AFOSR “Certifiable and Self-Supervised Category-Level Tracking” program, Carlone’s NSF CAREER award, and the ONR RAPID program. L. Shaikewitz is supported by an NSF graduate research fellowship. L. Carlone holds concurrent appointments at MIT and as an Amazon Scholar. This paper describes work performed at MIT and is not associated with Amazon.}\thanks{All authors are with the Laboratory for Information and Decision Systems, Massachusetts Institute of Technology, Cambridge, MA. Emails: {\tt\footnotesize \{lorenzos, cgeo, lcarlone\}@mit.edu}.}
}



\begin{document}

\maketitle

\thispagestyle{empty}
\begin{tikzpicture}[overlay, remember picture]
\node[anchor=north, yshift=-0.5cm, text width=2\textwidth, align=center] at (current page.north) {
    \textbf{This paper has been accepted for publication in \emph{IEEE Transactions on Robotics.}}\\
    Please cite the paper as: Lorenzo Shaikewitz, Charis Georgiou, and Luca Carlone,\\
    ``Uncertainty Quantification for Visual Object Pose Estimation: S-Lemma Ellipsoidal Bounds,''
    \emph{IEEE Trans. Robotics}, 2026.
};
\end{tikzpicture}
\vspace{-0.5cm}

\begin{abstract}
    Quantifying the uncertainty of an object's pose estimate is essential for robust control and planning. Although pose estimation is a well-studied robotics problem, attaching statistically rigorous uncertainty is not well understood without strict distributional assumptions. We develop distribution-free pose uncertainty bounds about a given pose estimate in the monocular setting. Our pose uncertainty only requires high probability noise bounds on pixel detections of 2D semantic \emph{keypoints} on a known object. This noise model induces an implicit, non-convex set of pose uncertainty constraints. Our key contribution is SLUE (S-Lemma Uncertainty Estimation), a convex program to reduce this set to a single ellipsoidal uncertainty bound that is guaranteed to contain the true object pose with high probability. SLUE solves a relaxation of the minimum volume bounding ellipsoid problem inspired by the celebrated S-lemma. It requires no initial guess of the bound's shape or size and is guaranteed to contain the true object pose with high probability. For tighter uncertainty bounds at the same confidence, we extend SLUE to a sum-of-squares relaxation hierarchy which is guaranteed to converge to the minimum volume ellipsoidal uncertainty bound for a given set of keypoint constraints. We show this pose uncertainty bound can easily be projected to independent translation and axis-angle orientation bounds. We evaluate SLUE on two pose estimation datasets and a real-world drone tracking scenario. Compared to prior work, SLUE generates substantially smaller translation bounds and competitive orientation bounds.
\end{abstract} 

\section{Introduction}

\IEEEPARstart{V}{ision}-based object pose estimation is an important problem in robotics, used in planning and control for manipulation~\cite{Wen20iros-seTrack,Li24arxiv-DROP} and navigation~\cite{Peng23-trackingDriving}. As with any point estimate, a rigorous notion of uncertainty is crucial for robust downstream decision-making. An autonomous vehicle must slow down to avoid a traffic cone with uncertain position, but can drive past one clearly on the sidewalk. Among the plethora of pose estimation techniques, there are many uncertainty heuristics. These include ensembles~\cite{Shi21arxiv-FastUQPose}, Bayesian inference (\eg particle filters~\cite{Deng19rss-poserbpf}), and learned uncertainty models~\cite{Brachmann16cpvr-UncertaintyPose, Lin22icra-keypointTracking, Hodan20cvpr-epos, Okorn20iros-OrientationDistributions}. 
Although these approaches carry no statistical guarantees (or require strong assumptions to obtain guarantees), they generally capture uncertainty intrinsic to the problem. In monocular estimation, this includes scale ambiguity and object symmetries. Instead of a heuristic, we develop statistically rigorous pose uncertainty which also captures scale ambiguity (although it does not capture object symmetry). Further, our uncertainty bounds hold under minimal distributional assumptions and are agnostic to the choice of pose estimator.

\begin{figure}[tb]
    \centering
\includegraphics[width=0.99\linewidth]{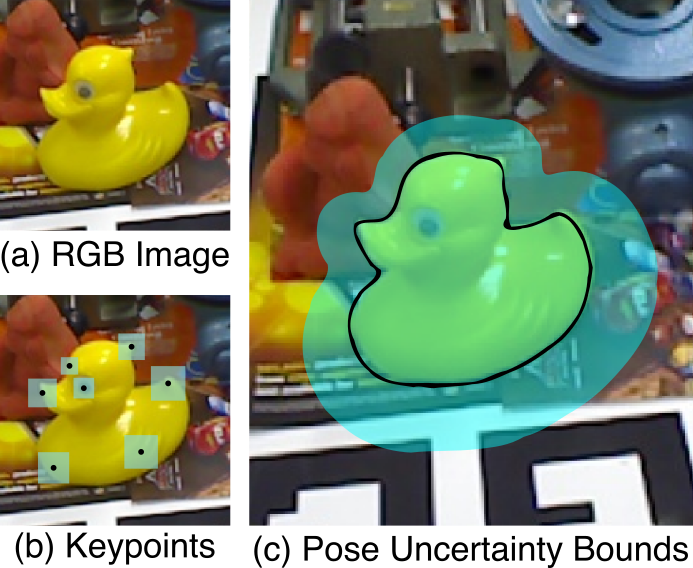}
    \caption{\textbf{Conformal Pose and Uncertainty Estimation.} Given an RGB image of an object (a), we extract 2D semantic keypoints and conformal uncertainty sets (b) which contain the ground truth keypoint with high probability. These sets imply a non-convex set of quadratic constraints on the object pose. We use a generalization of the S-Lemma and a projection scheme to reduce this set to an explicit bound (c) containing the true object pose with high probability and centered at the pose estimate, highlighted in black.}
    \label{fig:intro}
\end{figure}


We phrase the distribution-free uncertainty estimation problem as estimating an explicit uncertainty set which contains the true object pose with high probability. To capture the scale ambiguity inherent to the monocular setting, we begin with high-probability uncertainty sets about a sparse set of RGB \emph{keypoint} measurements (obtained with, \eg conformal prediction) and propagate keypoint uncertainty to pose uncertainty. Following~\cite{Yang23arxiv-ransag}, the uncertainty estimation problem lies in bounding this non-convex pose constraint set with an interpretable outer bound. Prior work simplifies this problem by fixing the shape of the outer bound, for example to a spherical bound~\cite{Yang23arxiv-ransag,Tang24l4dc-setMembership}. We remove this simplification and directly solve for minimum-volume ellipsoidal outer bounds on pose uncertainty.

\textbf{Contributions.} We present S-Lemma Uncertainty Ellipsoid estimation (SLUE), a relaxation hierarchy to compute explicit and expressive pose uncertainty bounds. As the name suggests, SLUE relies on a generalization of the classical S-lemma~\cite{Polik07-SLemma} to
optimize directly for the shape of an ellipsoidal bound on pose uncertainty. SLUE is guaranteed to give an outer bound on the pose constraints implied by keypoint uncertainty. The first-order relaxation gives a fast but conservative uncertainty bound, while the hierarchy trades computation for tighter bounds. We reduce the computational cost by using infinity-norm keypoint constraints and a quaternion formulation. The SLUE bound describes pose uncertainty in an interpretable manner which is easy to project to independent translation and axis-angle orientation bounds. We demonstrate SLUE on two object-focused datasets~\cite{Brachmann14cvpr,Xiang17rss-posecnn} and a real-world drone tracking scenario~\cite{Shaikewitz24ral-CAST}.

To summarize, our contributions are:
\begin{itemize}
    \item An efficient and statistically rigorous algorithm for pose uncertainty quantification using an ellipsoidal outer bound.
    \item A relaxation hierarchy guaranteed to converge to the minimum volume ellipsoidal uncertainty bound at a given confidence.
    \item A projection scheme to reduce joint pose uncertainty bounds to interpretable translational and angular uncertainty bounds.
    \item Extensive experiments on three real-world datasets.
\end{itemize}

The rest of the paper is organized as follows. We discuss related work and introduce notation in Sections \ref{sec:related} and \ref{sec:notation}, respectively. We then describe our keypoint noise model and develop the uncertainty estimation problems in \prettyref{sec:problem}. We relax the minimum volume ellipsoid problem into a tractable uncertainty estimation problem using the generalized S-lemma in \prettyref{sec:uncertainty}, and generalize this approach to a hierarchy of relaxations guaranteed to converge to the minimum ellipsoid bound in \prettyref{sec:hierarchy}. In \prettyref{sec:uncertainty_bounds} we reduce ellipsoidal uncertainty into interpretable bounds on translation and orientation. Lastly, we evaluate our method on three real-world datasets in \prettyref{sec:experiments}.

 \section{Related Work}
\label{sec:related}

\textbf{Object Pose Uncertainty.}
There are several popular heuristics for pose uncertainty quantification. Ensemble methods~\cite{Shi21arxiv-FastUQPose, Wursthorn24-ensemblePoseUQ} aggregate multiple pose estimates into an approximation of estimator variance. Such a strategy has no grounding in the quality of the estimator and thus no statistical guarantees. A more precise approach is to discretize the object into particles to track~\cite{Deng19rss-poserbpf} or distinct regions~\cite{Hodan20cvpr-epos} and model uncertainty locally. This and other continuous correspondence ambiguity approaches~\cite{Okorn20iros-OrientationDistributions,Haugaard22cvpr-surfemb} are generally used to capture orientation uncertainty in the case of symmetric objects. 
However, they come with no uncertainty guarantees and do not easily translate into explicit uncertainty bounds.
The most common way uncertainty estimation is integrated into pose estimation pipelines is through learned estimates, which are often used to select the best pose in a multi-hypothesis setting~\cite{Liu24arxiv-poseSurvey, Brachmann16cpvr-UncertaintyPose,Tian20icra-robustPoseRGBD, wen24cvpr-foundationPose, Lin22icra-keypointTracking}. Among these works~\cite{Lin22icra-keypointTracking} is most similar to ours, using a neural network to propagate estimated keypoint uncertainty to a pose. All of these approaches, however, lack rigorous statistical guarantees.

We draw on an emerging body of statistically rigorous distribution-free pose uncertainty quantification from conformal prediction~\cite{Shafer08jmlr-TutorialConformal}. In particular, Yang and Pavone~\cite{Yang23arxiv-ransag} compute conformal bounds on keypoint error and use a semidefinite relaxation to maximize the pose error consistent with the keypoint bounds. Follow-up works improve uncertainty estimation with sample-based inner approximation~\cite{Yang24rss-closure} (thus sacrificing uncertainty guarantees) or simplify to a generalized Chebyshev center problem~\cite{Tang24l4dc-setMembership}. A key limitation of these approaches is expressiveness: \cite{Yang23arxiv-ransag, Tang24l4dc-setMembership} fix the shape of the outer bound (\ie consider spherical bounds) and optimize only for scale. This hides the effects of scale ambiguity and leads to conservative bounds. A simpler approach for fixed uncertainty bound shape is to apply conformal prediction directly to the pose estimate. In this paper, we solve a more expressive relaxation that captures scale ambiguities by optimizing directly for the shape of a statistically rigorous outer uncertainty bound.

\textbf{Minimal Bounding Sets.}
Reducing a compact set into a simple geometric object, such as a minimum volume bounding ellipsoid, has major computational and interpretability advantages~\cite{Lasserre15-ellipsoidBounding}. When the set is convex, finding a bounding ellipsoid of minimal volume is known as the Löwner-John ellipsoid problem~\cite{BenTal01-ConvexOpt} and admits a solution using convex optimization. The non-convex case is more difficult. Nie et al. \cite{Nie05optim-MinimalEnclosingEllipsoid} propose a sum-of-squares relaxation hierarchy which always returns an ellipsoidal outer bound and converges to the minimum-trace bounding ellipsoid. The log determinant objective, which gives minimum volume, is dismissed as impractical. Other works~\cite{Casini14tac-SetUncertainty,Tang24l4dc-setMembership} rely on strong simplifying assumptions. In particular,~\cite{Tang24l4dc-setMembership,Yang23arxiv-ransag} fix the shape of the ellipsoid bound, trading expressiveness for certifiably optimal scale estimation. In this paper, we give a new perspective on the sum-of-squares approach based on the generalized S-Lemma. Under our approach, the log determinant objective is numerically stable and we directly solve for the minimum volume bound.

 \section{Notation}
\label{sec:notation}
Symbols for vectors and matrices are bolded to distinguish them from scalars. When there are multiple subscripts, we wrap a vector or matrix in parentheses to indicate specific elements. For example, $(\epsilon_i)_1$ denotes the first element of the vector $\'\epsilon_i$. For integer indexing, we use the shorthand $[n]\triangleq {1,...,n}$. We write the $i$th 3D basis vector as $\*{\hat e}_i$ for $i\in[3]$. We denote the Kronecker product with $\otimes$ and use the $\mathrm{vec}(\cdot)$ operator to denote the vectorization of a matrix by stacking its columns. The set $\mathcal{S}^n$ is all symmetric $n\times n$ matrices and the set $\mathbb{S}^n$ is any $n$-dimensional unit vector.

\textbf{Polynomials.} For non-negative integer $\kappa$, we denote the $\kappa$-order monomial basis in variable $\*x\in\mathbb{R}^n$ by $[\*x]_\kappa\in\mathbb{R}^{C(n + \kappa - 1, \kappa)}$, where $C(\cdot, \cdot)$ is shorthand for the binomial coefficient. For example, $[\*x]_2$ is defined as:
\begin{equation}
    \label{eq:secondorderbasis}
    [\*x]_2 \triangleq \begin{bmatrix}
        x_1^2 & x_1 x_2 & \hdots & x_n^2
    \end{bmatrix}^\T\!.
\end{equation}
Note that $[\*x]_1 = \*x$ and $[\*x]_0 = 1$. We denote the set of polynomials in $\*x$ of order at most $\kappa$ by $\RR_{\kappa}[\*x]$. Let $x_1 = 1$ so that $[\*x]_\kappa$ contains all monomials up to order $\kappa$. Then, any $p\in\RR_{2\kappa}[\*x]$ may be written in quadratic form as $p(\*x) = [\*x]_\kappa^\T \*A [\*x]_\kappa$ for some (possibly non-unique) constant matrix $\*A$. We use the notation $p(\*x) \succeq_{sos} 0$ to mean $p(\*x)$ may be represented by some matrix $\*A \succeq 0$. %

\begin{figure}[t!]
    \centering
    \includegraphics[width=0.99\linewidth]{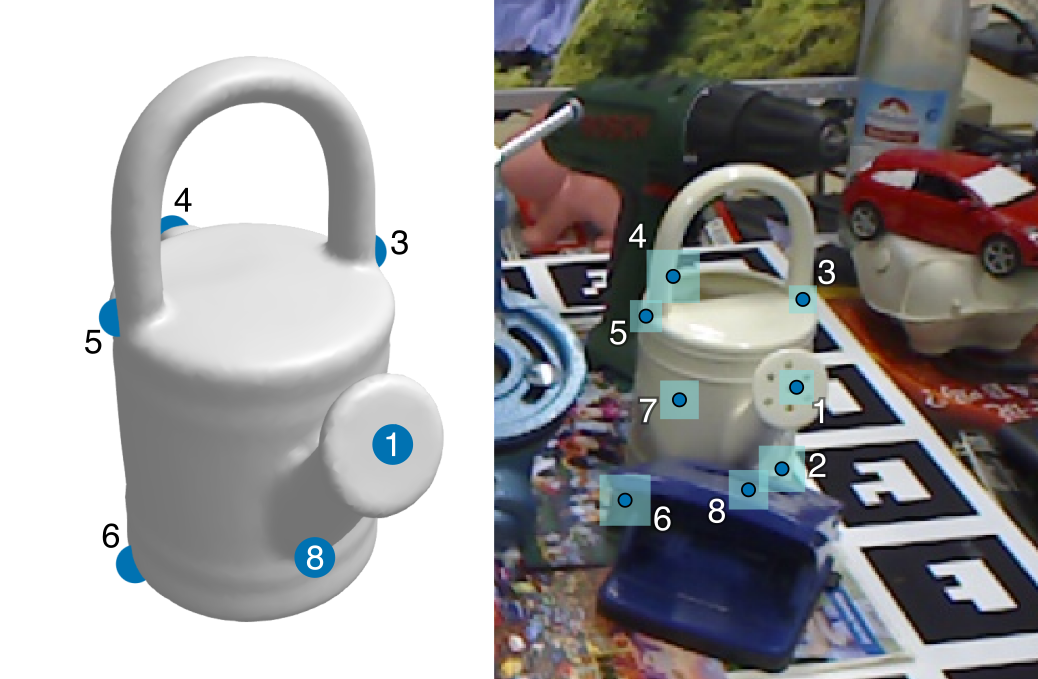}
\caption{\textbf{Keypoint measurements.} Given a 3D model with annotated 3D keypoints (left), we assume pixel detections of the location of each keypoint in the image frame (right). Pixel keypoint measurements also carry an uncertainty bound, shaded in blue.}
    \label{fig:cad}
\end{figure}

\section{Problem Formulation: Pose Uncertainty Estimation from Bounded Keypoint Noise}
\label{sec:problem}
We seek an estimate of object pose uncertainty from RGB pixel measurements of object keypoints and their corresponding keypoint uncertainty sets (see \prettyref{fig:cad}). 
We describe this measurement uncertainty model in \prettyref{sec:meas} and propagate keypoint uncertainty to pose uncertainty constraints in \prettyref{sec:poseuncertaintyset}. Distinct from~\cite{Yang23arxiv-ransag}, we consider infinity-norm keypoint uncertainty and explicitly propagate the coverage probability to the pose uncertainty constraint set.

\begin{figure*}[t!]
    \centering
\begin{tikzpicture}[scale=1.0]

\begin{scope}[shift={(-5,0)}]  

\draw[thick, black] (-2,-2) rectangle (2,2);
            \node[above, font=\small] at (0,2.2) {Image Plane};

\draw[gray, thick, dotted] (-0.8, 0.9) -- (1.0, 0.1);
            \draw[gray, thick, dotted] (-0.8, 0.9) -- (0, -1.3);
            \draw[gray, thick, dotted] (-0.8, 0.9) -- (0.7, 0.7);
            \draw[gray, thick, dotted] (0, -1.3) -- (1.0, 0.1);
            \draw[gray, thick, dotted] (0.7, 0.7) -- (1.0, 0.1);
            \draw[gray, thick, dotted] (0, -1.3) -- (0.7, 0.7);
            
\fill[red] (-0.8, 0.9) circle (2pt);
            \draw[red, thick] (-1.1, 0.6) rectangle (-0.5, 1.2);
            \node[red, font=\tiny] at (-0.82, 0.44) {$\*y_1$};
            
\fill[blue] (1.0, 0.1) circle (2pt);
            \draw[blue, thick] (0.85, -0.05) rectangle (1.15, 0.25);
            \node[blue, font=\tiny] at (1.05, -0.2) {$\*y_2$};
            
\fill[green!70!black] (0, -1.3) circle (2pt);
            \draw[green!70!black, thick] (-0.25, -1.55) rectangle (0.25, -1.05);
            \node[green!70!black, font=\tiny] at (0.05, -1.7) {$\*y_3$};
            
\fill[purple] (0.7, 0.7) circle (2pt);
            \draw[purple, thick] (0.5, 0.5) rectangle (0.9, 0.9);
            \node[purple, font=\tiny] at (0.7, 1.04) {$\*y_4$};
            
\node[font=\small] at (2.3, 0) {$v$};
            \node[font=\small] at (0, -2.3) {$u$};
        \end{scope}

\draw[->, ultra thick] (0.5,0) -- (1.5,0);
        \node[above, font=\small] at (1,0.2) {Backprojected 3D points};
        \node[below, font=\small] at (1,-0.2) {for each 2D keypoint};

\begin{scope}[shift={(7,-2)}, scale=0.7]  \tdplotsetmaincoords{60}{120}
        \begin{scope}[tdplot_main_coords]
\draw[->] (0,0,0) -- (2,0,0) node[anchor=east, font=\small]{$X$};
            \draw[->] (0,0,0) -- (0,2,0) node[anchor=west, font=\small]{$Y$};
            \draw[->] (0,0,0) -- (0,0,8) node[anchor=south, font=\small]{$Z$};

\coordinate (O) at (0,0,0);
            \node[below right, font=\small] at (0,0.3,0.7) {Camera};
            \fill[black] (O) circle (2pt);

            \coordinate (Y1_1) at (-1.2,1.2,6);
            \coordinate (Y2_1) at (1.84,0.08,6);
            \coordinate (Y3_1) at (0.16,-4.06,4.8);
            \coordinate (Y4_1) at (1.96,1.78,5.6);

            \fill[black] (Y1_1) circle (2pt);
            \fill[black] (Y2_1) circle (2pt);
            \fill[black] (Y3_1) circle (2pt);
            \fill[black] (Y4_1) circle (2pt);

            \draw[black!80, thick] (Y4_1) -- (Y1_1);
            \draw[black!80, thick] (Y4_1) -- (Y2_1);
            \draw[black!80, thick] (Y3_1) -- (Y2_1);
            \draw[black!80, thick] (Y3_1) -- (Y1_1);
            \draw[black!80, thick] (Y2_1) -- (Y1_1);
            \draw[black!80, thick] (Y3_1) -- (Y4_1);

\coordinate (A2_1) at (-0.9, 2.1, 7.5);
            \coordinate (B2_1) at (-2.1, 2.1, 7.5);
            \coordinate (C2_1) at (-2.1, 0.9, 7.5);
            \coordinate (D2_1) at (-0.9, 0.9, 7.5);

\fill[white, opacity=0.7] (A2_1) -- (B2_1) -- (C2_1) -- (D2_1);
            
\fill[red!30, opacity=0.7] (O) -- (A2_1) -- (B2_1);
            \fill[red!40, opacity=0.7] (O) -- (D2_1) -- (A2_1);
            
\draw[red!80, thick] (C2_1) -- (D2_1);
            \draw[red!80, thick] (C2_1) -- (B2_1);
            
\draw[red!80, thick] (O) -- (A2_1);
            \draw[red!80, thick] (O) -- (B2_1);
            \draw[red!80, thick] (O) -- (D2_1);
            \draw[red!80, thick] (A2_1) -- (B2_1);
            \draw[red!80, thick] (A2_1) -- (D2_1);
\begin{pgfonlayer}{bg}
                \draw[red!80, thick, dotted] (O) -- (C2_1);
            \end{pgfonlayer}
            
\coordinate (A2_2) at (3.0, 0.5, 7.5);
            \coordinate (B2_2) at (2.0, 0.5, 7.5);
            \coordinate (C2_2) at (2.0, -0.5, 7.5);
            \coordinate (D2_2) at (3.0, -0.5, 7.5);

\fill[white, opacity=0.7] (A2_2) -- (B2_2) -- (C2_2) -- (D2_2);
            
\fill[blue!30, opacity=0.7] (O) -- (A2_2) -- (D2_2) -- cycle;
            \fill[blue!40, opacity=0.7] (O) -- (B2_2) -- (A2_2) -- cycle;
            
\draw[blue!80, thick] (C2_2) -- (B2_2);
            \draw[blue!80, thick] (C2_2) -- (D2_2);
            
\draw[blue!80, thick] (O) -- (A2_2);
            \draw[blue!80, thick] (O) -- (B2_2);
            \draw[blue!80, thick] (O) -- (D2_2);
            \draw[blue!80, thick] (A2_2) -- (B2_2);
            \draw[blue!80, thick] (A2_2) -- (D2_2);
\begin{pgfonlayer}{bg}
                \draw[blue!80, thick, dotted] (O) -- (C2_2);
            \end{pgfonlayer}
            
\coordinate (A2_3) at (0.875, -4.2, 6);
            \coordinate (B2_3) at (-0.875, -4.2, 6);
            \coordinate (C2_3) at (-0.875, -5.95, 6);
            \coordinate (D2_3) at (0.875, -5.95, 6);

\fill[white, opacity=0.7] (A2_3) -- (B2_3) -- (C2_3) -- (D2_3);
            
\fill[green!30, opacity=0.7] (O) -- (D2_3) -- (A2_3);
            \fill[green!40, opacity=0.7] (O) -- (A2_3) -- (B2_3);
            
\draw[green!70!black, thick] (O) -- (A2_3);
            \draw[green!70!black, thick] (O) -- (D2_3);
            \draw[green!70!black, thick] (O) -- (B2_3);
            \draw[green!70!black, thick] (A2_3) -- (B2_3);
            \draw[green!70!black, thick] (B2_3) -- (C2_3);
            \draw[green!70!black, thick] (A2_3) -- (D2_3);
            \draw[green!70!black, thick] (C2_3) -- (D2_3);
\begin{pgfonlayer}{bg}
                \draw[green!70!black, thick, dotted] (O) -- (C2_3);
            \end{pgfonlayer}
            
\coordinate (A2_4) at (2.86, 2.86, 7);
            \coordinate (B2_4) at (1.59, 2.86, 7);
            \coordinate (C2_4) at (1.59, 1.59, 7);
            \coordinate (D2_4) at (2.86, 1.59, 7);

\fill[white, opacity=0.7] (A2_4) -- (B2_4) -- (C2_4) -- (D2_4);
            
\fill[purple!40, opacity=0.7] (O) -- (A2_4) -- (B2_4);
            \fill[purple!35, opacity=0.9] (O) -- (A2_4) -- (D2_4);
            
\draw[purple!80, thick] (C2_4) -- (B2_4);
            \draw[purple!80, thick] (C2_4) -- (D2_4);
            
\draw[purple!80, thick] (O) -- (A2_4);
            \draw[purple!80, thick] (O) -- (B2_4);
            \draw[purple!80, thick] (O) -- (D2_4);
            \draw[purple!80, thick] (A2_4) -- (B2_4);
            \draw[purple!80, thick] (A2_4) -- (D2_4);
\begin{pgfonlayer}{bg}
                \draw[purple!80, thick, dotted] (O) -- (C2_4);
            \end{pgfonlayer}

        \end{scope}
        \end{scope}

    \end{tikzpicture}
\caption{\textbf{Pose Uncertainty Constraint Set.} The infinity-norm bounds on keypoint error (left) each imply a cone of backprojected 3D feasible keypoint positions (right). Combining the bounds for multiple keypoints and imposing object shape constraints yields an implicit \emph{pose uncertainty constraint set} which contains many feasible poses.}
    \label{fig:conformal_calibration}
\end{figure*}
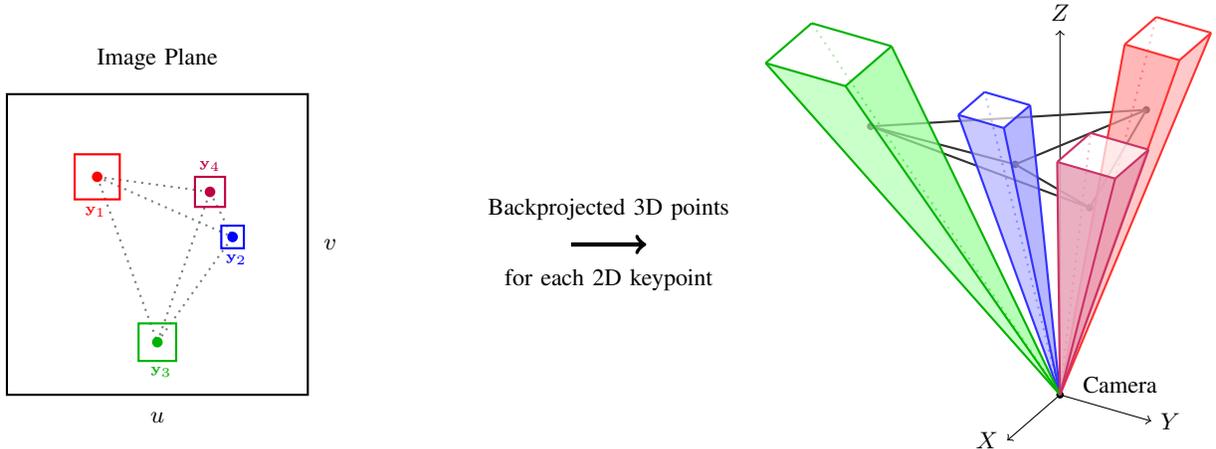

\subsection{Measurement Model: Keypoint Uncertainty Sets}
\label{sec:meas}
Given an RGB image of a target object, we measure the pixel locations of a sparse set of 3D CAD-model-frame \emph{keypoints} (\prettyref{fig:cad}). For each 3D keypoint, we use a front-end (\ie a neural network) to detect its noisy pixel location in the image frame~\cite{He17iccv-maskRCNN}. We model measurement noise using \emph{keypoint uncertainty sets}. That is, we assume the ground truth pixel keypoint is within a norm-ball about the detected pixel keypoint with high probability. In practice, we obtain these sets using conformal prediction (\prettyref{sec:keypoints}). This gives the following bounded uncertainty model.

Denote the fixed 3D model-frame keypoints by $\*b_i\in\RR^3$ for $i\in[N]$ and their corresponding pixel measurements by $\*y_i = [u_i, v_i, 1]^\T$. For an object with position $\*t\in\RR^3$ and orientation $\*R\in\SO3$ in the frame of a camera with intrinsics $\*K$, the keypoint measurement model reprojects the 3D points and adds arbitrary noise:
\begin{equation}
    \label{eq:conformal_meas}
    \*y_i = \frac{\*K(\*R\*b_i + \*t)}{\*{\hat{e}}_3\cdot(\*R\*b_i + \*t)} + \'\eps_i,
\end{equation}
where $\'\eps_i\in\RR^2$ is some measurement noise in homogeneous form ($(\eps_i)_3\equiv0$).

We make no distributional assumptions for $\'\eps_i$. Instead, assume its $p$-norm is bounded with probability $1-\alpha_i$:
\begin{equation}
    \label{eq:conformal_boundedprob}
    \mathbb{P}(\|\'\eps_i\|_p \leq r_i(\alpha_i)) \geq 1 - \alpha_i.
\end{equation}
In practice, we obtain the radii $r_i(\alpha_i)$ for a given confidence $\alpha_i$ using split conformal prediction~\cite{Angelopoulos24-ConformalTextbook} (detailed in \prettyref{sec:keypoints}). In this paper we treat $r_i(\alpha_i)$ as a given constant and drop the $\alpha_i$ dependence. In the following, we specialize to the case $p=\infty$, which corresponds to square axis-aligned uncertainty sets with side length $2r_i$ (see~\prettyref{fig:conformal_calibration}). As we will show, the $\infty$-norm has strong computational advantages over the $2$-norm used in~\cite{Yang23arxiv-ransag}; we give the $p=2$ case in~\prettyref{appendix:2norm} for completeness.

\subsection{Uncertainty Constraint Set for 6D Pose}
\label{sec:poseuncertaintyset}
Given the measurement model~\eqref{eq:conformal_meas} and noise bound~\eqref{eq:conformal_boundedprob} we now derive a \emph{pose uncertainty constraint set} that contains the true pose with high probability. Combining eqs.~(\ref{eq:conformal_meas},~\ref{eq:conformal_boundedprob}), the following \emph{reprojection} constraint holds with probability at least $1-\alpha_i$:
\begin{equation}
    \label{eq:yminusgt}
    \left\|\*y_i - \frac{\*K(\*R\*b_i + \*t)}{\*{\hat{e}}_3\cdot(\*R\*b_i + \*t)}\right\|_\infty = \|\'\eps_i\|_\infty \leq r_i.
\end{equation}

It is also reasonable to assume detected keypoints are visible to the camera. Thus, the projection of $\*b_i$ must have positive depth. This is the \emph{chirality} (front-of-camera) constraint for keypoint $i$:
\begin{equation}
    \label{eq:frontofcamera}
    \tag{FoC}
    \*{\hat{e}}_3\cdot(\*R\*b_i + \*t) > 0.
\end{equation}

To proceed we convert the rational reprojection constraint~\eqref{eq:yminusgt} to a polynomial \emph{backprojection} constraint. The infinity norm constrains each coordinate independently. Multiplying~\eqref{eq:yminusgt} by the depth and using chirality~\eqref{eq:frontofcamera} to drop its absolute value, eq.~\eqref{eq:yminusgt} reduces to two inequality constraints which hold with probability at least $1-\alpha_i$:
\begin{equation}
    \label{eq:bpinf}
    \tag{$\text{BP}_\infty$}
    \begin{aligned}
        \left|
        (\*y_i\*{\hat{e}}_3^\T - \*K)(\*R\*b_i + \*t) 
        \cdot\*{\hat{e}}_j \right|
        \leq r_i\*{\hat{e}}_3^\T(\*R\*b_i + \*t)\\
    \end{aligned},\!
\end{equation}
for $j=1,2$. Each absolute value yields two constraints linear in $\*R$ and $\*t$; thus, \eqref{eq:bpinf} is four linear inequalities.

\textbf{Pose Uncertainty Constraint Set.}
Combining the backprojection and chirality constraints for each keypoint gives the following \emph{pose uncertainty constraint set}, visualized in \prettyref{fig:conformal_calibration}.

\begin{proposition}[Pose Uncertainty Constraint Set]
    \label{prop:pus}
    Assume measurements of $N$ object keypoints of the form~\eqref{eq:conformal_meas} and noise bounded in infinity-norm with high probability as~\eqref{eq:conformal_boundedprob}. The true position $\*t_\mathrm{gt}$ and orientation $\*R_\mathrm{gt}$ of the object are contained in the following constraint set:
    \begin{equation}
        \label{eq:purse}
        \tag{$\mathcal{P}_\infty$}
        \left\{
            \begin{array}{ll}
                \*t\in\RR^3\\
                \*R\in\SO3
            \end{array}
            \left|\vphantom{\sum}\right.
            \begin{array}{ll}
                \eqref{eq:frontofcamera}_i\\
                \eqref{eq:bpinf}_i
            \end{array}
            \text{ for }
            i \in [N]
        \right\}\!,
    \end{equation}
    with probability at least $\beta$.

    For arbitrary dependence among $\'\eps_i$, $\beta \geq 1-\sum_{i=1}^N\alpha_i$. In the extreme where $\'\eps_i$ is independent from $\'\eps_j$ for all $i\neq j$ then $\beta=\prod_{i=1}^N(1-\alpha_i)$. When the $\'\eps_i$ are perfectly positively correlated and all $\alpha_i \equiv \alpha$, we have $\beta=1-\alpha$.

    \begin{proof}
        For perfect correlation the probability of~$\eqref{eq:bpinf}_i$ for all $i$ is the same as the probability for any choice of $i$. Under independence, the probability of the intersection of~$\eqref{eq:bpinf}_i$ for each $i$ reduces to a product. For arbitrary dependence, the result follows from a union bound.
\end{proof}
\end{proposition}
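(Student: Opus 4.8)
The plan is to reduce the event $\{(\*t_\mathrm{gt},\*R_\mathrm{gt})\in\eqref{eq:purse}\}$ to a single per-keypoint event and then combine these events under each of the three dependence models. First I would note that the true pose lies in \eqref{eq:purse} precisely when, for every $i\in[N]$, both $\eqref{eq:frontofcamera}_i$ and $\eqref{eq:bpinf}_i$ hold at $(\*t_\mathrm{gt},\*R_\mathrm{gt})$. The chirality constraint holds deterministically at the true pose by the visibility assumption, since a detected keypoint has positive depth $\*{\hat{e}}_3\cdot(\*R_\mathrm{gt}\*b_i+\*t_\mathrm{gt})>0$. The content is then the equivalence, already set up in the derivation of \eqref{eq:bpinf}, that \emph{given} positive depth $\eqref{eq:bpinf}_i$ is equivalent to the reprojection bound \eqref{eq:yminusgt}: multiplying through by the depth turns $\*y_i-\*K(\*R\*b_i+\*t)/(\*{\hat{e}}_3\cdot(\*R\*b_i+\*t))$ into $(\*y_i\*{\hat{e}}_3^\T-\*K)(\*R\*b_i+\*t)$, and because the multiplier is strictly positive the $\infty$-norm bound $\leq r_i$ passes coordinatewise into the absolute-value inequalities of $\eqref{eq:bpinf}_i$ without any sign flip. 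Hence, at the true pose, $\eqref{eq:bpinf}_i$ holds if and only if $\|\'\eps_i\|_\infty\leq r_i$.

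Next I would define $A_i$ as the event that $(\*t_\mathrm{gt},\*R_\mathrm{gt})$ satisfies $\eqref{eq:frontofcamera}_i$ and $\eqref{eq:bpinf}_i$. By the previous paragraph $A_i$ coincides with $\{\|\'\eps_i\|_\infty\leq r_i\}$ up to the probability-one chirality event, so \eqref{eq:conformal_boundedprob} gives $\mathbb{P}(A_i)\geq 1-\alpha_i$. Since the true pose lies in \eqref{eq:purse} exactly on $\bigcap_{i=1}^N A_i$, we have $\beta=\mathbb{P}\!\left(\bigcap_{i=1}^N A_i\right)$, and it remains to bound this intersection. For arbitrary dependence I pass to complements and apply the union bound, $\mathbb{P}(\bigcap_i A_i)=1-\mathbb{P}(\bigcup_i A_i^c)\geq 1-\sum_i\mathbb{P}(A_i^c)\geq 1-\sum_i\alpha_i$. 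Under independence of the $\'\eps_i$ the events $A_i$ factor, giving $\beta=\prod_i\mathbb{P}(A_i)=\prod_i(1-\alpha_i)$. For perfectly positively correlated (comonotone) noise with a common level $\alpha_i\equiv\alpha$, the events $A_i$ are nested, so $\bigcap_i A_i$ coincides with any single $A_i$ and $\beta=1-\alpha$.

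The per-keypoint equivalence of the first paragraph is the step that carries the real weight, and the care there is to confirm that chirality makes the depth multiplier strictly positive so that clearing the rational constraint \eqref{eq:yminusgt} neither flips an inequality nor admits spurious poses, and that the $\infty$-norm genuinely decomposes into the four linear inequalities comprising $\eqref{eq:bpinf}_i$. Once this is in hand the probabilistic combination is routine. The only subtlety I would flag is that writing the independence case as the exact equality $\beta=\prod_i(1-\alpha_i)$ (and similarly the comonotone case as $\beta=1-\alpha$) reads the coverage bound \eqref{eq:conformal_boundedprob} as tight; with only the inequality $\mathbb{P}(A_i)\geq 1-\alpha_i$ these become the corresponding lower bounds, which is all that is needed for the guarantee.
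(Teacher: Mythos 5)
Your proof is correct and takes essentially the same route as the paper's: a union bound for arbitrary dependence, factorization of the intersection under independence, and coincidence of the intersection with any single event under perfect positive correlation. You additionally make explicit the reduction that the paper leaves implicit --- that chirality holds deterministically at the true pose and that, given positive depth, $\eqref{eq:bpinf}_i$ at the true pose is equivalent to $\|\'\eps_i\|_\infty\leq r_i$ --- and your remark that the exact equalities in the independence and comonotone cases presume the coverage bound \eqref{eq:conformal_boundedprob} is tight is a fair refinement rather than a disagreement.
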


Contrary to~\cite{Yang23arxiv-ransag}, we observe the statistical guarantee of covering the ground truth pose holds with probability less than $1-\alpha$ in general. The worst case confidence can quickly become uninformative. For example, if $N=10$ and $\alpha_i\equiv\alpha=0.1$ for all $i$, the set~\eqref{eq:purse} is guaranteed to contain the ground truth with probability greater than $0$. In practice, however, we expect some positive correlation; thus, coverage should generally exceed the independence bound. For our choice of $\alpha$ and $N$, the independence bound gives a $35\%$ coverage probability. We provide empirical coverage results in~\prettyref{sec:experiments}.

\begin{remark}[Low-Confidence Keypoints and Outliers]
    Low-confidence or missing keypoints have large or infinite noise bounds, respectively, and thus have minimal effect on \eqref{eq:purse}. Outlier keypoints only skew \eqref{eq:purse} if they occur with low probability ($\leq \alpha_i$), which is accounted for in the coverage probability of \prettyref{prop:pus}.
\end{remark}

The set~\eqref{eq:purse} is an \emph{implicit} representation of the set of poses which are consistent with the backprojection and chirality constraints (see \prettyref{fig:conformal_calibration}). We first seek a simpler characterization of this set, choosing to model uncertainty as an ellipsoidal bound centered on a single pose estimate (as in \prettyref{fig:bounding_ellipsoid}). Our procedure is agnostic to the choice of pose estimator, and the estimate itself does not need to be inside \eqref{eq:purse}. We use standard perspective-n-point~\cite{Terzakis20eccv-sqpnp} to generate a pose estimate $(\*{\bar R}, \*{\bar t})$ (see~\prettyref{appendix:conformal_exp}). Given an estimated pose, we seek a single ellipsoid which bounds the pose uncertainty.

\begin{problem}[Joint Ellipsoidal Bound]
    \label{prob:joint}
    Reduce the pose constraints~\eqref{eq:purse} into a single ellipsoidal uncertainty bound of minimal volume (\prettyref{fig:bounding_ellipsoid}) which is centered at the pose estimate and contains the ground truth pose with probability at least $\beta$. That is, find $\*H\succ0$ such that $(\*R, \*t)\in\eqref{eq:purse}$ implies:
    \begin{equation}
        \label{eq:ellipse_understandable}
        \begin{bmatrix}
            \mathrm{vec}(\*R - \*{\bar R})\\
            \*t - \*{\bar t}
        \end{bmatrix}^\T
        \*H
        \begin{bmatrix}
            \mathrm{vec}(\*R - \*{\bar R})\\
            \*t - \*{\bar t}
        \end{bmatrix}
        \leq 1.
    \end{equation}
\end{problem}

\prettyref{prob:joint} seeks a simple convex representation of the pose uncertainty constraints~\eqref{eq:purse}. We also desire explicit bounds on translation and axis-angle uncertainty.

\begin{problem}[Rotation and Translation Bounds]
    \label{prob:uncertainty}
    Reduce the joint ellipsoidal bound~\eqref{eq:ellipse_understandable} into translational and angular ellipsoids centered at the pose estimate $(\*{\bar R}, \*{\bar t})$ and containing the ground truth pose with probability at least $\beta$.
\end{problem}

 \section{Uncertainty Ellipsoid via S-Lemma}
\label{sec:uncertainty}
In this section we compute the ellipsoidal uncertainty bound~\eqref{eq:ellipse_understandable} from~\prettyref{prob:joint} (see \prettyref{fig:bounding_ellipsoid}a). Our procedure, motivated by a generalized S-lemma, offers a new and more intuitive perspective on the sum-of-squares approach for a minimum volume bounding ellipsoid~\cite{Nie05optim-MinimalEnclosingEllipsoid}. We first write the pose constraints in quadratic form (\prettyref{sec:quadform}) and develop an S-lemma relaxation for the bounding ellipsoid problem in \prettyref{sec:boundingellipsoid}. In \prettyref{sec:hierarchy} we extend this to a hierarchy of relaxations which is guaranteed to converge to the true minimum volume ellipsoid. Our approach is faster and more expressive than the outer approximations in~\cite{Tang24l4dc-setMembership,Yang23arxiv-ransag}, optimizing directly for ellipsoid shape. We name the combined method \emph{SLUE}, for S-Lemma Uncertainty Ellipsoid estimation. SLUE is summarized in \prettyref{alg:slue}.

\begin{algorithm}[t]
    \SetKwComment{tcp}{$\triangleright$ }{}
    \SetKwComment{tcc}{// }{}
    \DontPrintSemicolon

    \KwIn{Keypoint measurements $\*y_i$, keypoint bounds $r_i$ satisfying \eqref{eq:conformal_boundedprob}, $i\in[N]$, center $(\*{\bar R}, \*{\bar t})$, relaxation order $\kappa$.}
    \KwOut{Pose uncertainty ellipsoids $\*H$, $\*H_t$, $\*H_\theta$.}
\tcc{form pose constraint set \eqref{eq:purse}}
    \For{$i\leftarrow 1$ \KwTo $N$}{
        $\*A_i^{(1)}\leftarrow$ {\small \textsc{chirality}}$(\*y_i, r_i)$ 
        \tcp*[r]{\textnormal{Eq. \eqref{eq:frontofcamera}}}
        $\*A_i^{(2:5)} \leftarrow$ {\small \textsc{backproj}}$(\*y_i, r_i)$
        \tcp*[r]{\textnormal{Eq. \eqref{eq:bpinf}}}
    }
    \tcc{uncertainty ellipsoids}
    $\*H \leftarrow$ {\small \textsc{solveSDP}}$(\*A, \*{\bar R}, \*{\bar t}, \kappa)$
    \tcp*[r]{\textnormal{\prettyref{prop:bounding_hierarchy}}}
    $\*H_t \leftarrow$ $\left(\*P_t \*H^{-1} \*P_t\right)^{-1}$
    \tcp*[r]{\textnormal{Eq. \eqref{eq:marginalized_translations}}}
    $\*H_\theta \leftarrow$ {\small \textsc{angProject}}$(\*H, \*{\bar R})$
    \tcp*[r]{\textnormal{\prettyref{prop:ang_bounds}}}

    \caption{S-Lemma Uncertainty Ellipsoids}
    \label{alg:slue}
\end{algorithm}

\subsection{Pose Uncertainty Set in Quadratic Form}
\label{sec:quadform}
First, we write~\eqref{eq:purse} as a set of quadratic constraints in the homogenized variable $\*x \triangleq [1, \*r^\T, \*t^\T]^\T$, where $\*r \triangleq \mathrm{vec}(\*R)\in\RR^9$. To write the chirality and backprojection constraints in terms of $\*r$ we use the Kronecker identity: $\*M\*R\*b_i = (\*b_i^\T \otimes \*M)\*r$ for $\*M=\*K$ or $\*M=\eye_3$. This gives the following quadratic form of~\eqref{eq:purse}:
\begin{equation}
    \label{eq:purse_qcqp}
    \left\{
        \begin{array}{cc}
            \*x\in\RR^{13}\\
            x_{1} = 1
        \end{array}
        \left|\vphantom{\sum_{i}^N}\right.
        \begin{array}{rll}
            \*x^\T\*A_i\*x\hspace{-6pt} &\leq 0,& i\in[5N]\\
            \*x^\T\*Q_j\*x\hspace{-6pt} &= 0,& j\in[15]
        \end{array}
    \right\}\!.
\end{equation}
In~\eqref{eq:purse_qcqp}, the equality constraints enforce $\*R\in\SO{3}$ and the inequalities enforce the backprojection and chirality constraints. We give explicit $\*A_i$ and $\*Q_j$ in~\prettyref{appendix:roteq}.

\subsection{Bounding Ellipsoid via S-Lemma}
\label{sec:boundingellipsoid}
\begin{figure*}[tb]
    \centering
    \subfloat[$\kappa=1$]{\includegraphics[width=0.22\linewidth]{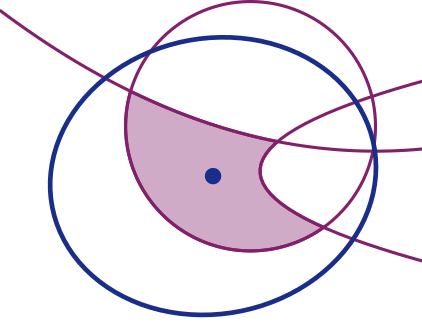}}
    \hspace{0.2cm}
    \subfloat[$\kappa=2$]{\includegraphics[width=0.22\linewidth]{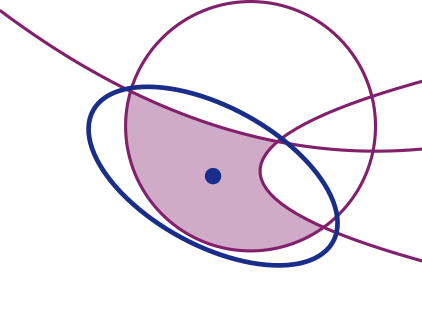}}
    \hspace{0.2cm}
    \subfloat[$\kappa=3$]{\includegraphics[width=0.22\linewidth]{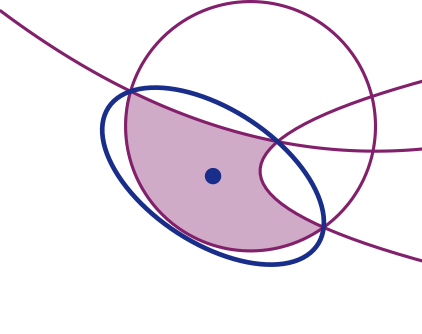}}
    \hspace{0.2cm}
    \subfloat[$\kappa=4$]{\includegraphics[width=0.22\linewidth]{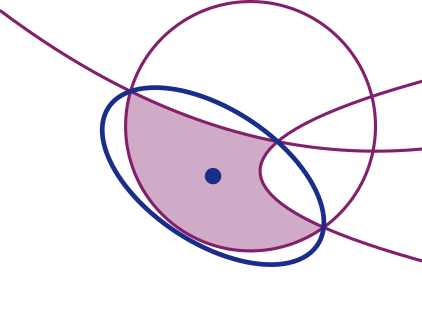}}
    \caption{\textbf{Hierarchy of Bounding Ellipsoids.} We solve for an ellipsoidal representation \eqref{eq:ellipse_understandable}, in blue, of a set defined by several quadratic constraints \eqref{eq:purse}, in purple, which may be non-convex. Our approach (SLUE) admits a hierarchy of ellipsoidal bounds guaranteed to converge to the minimum volume bound as relaxation order $\kappa$ tends to infinity. In this 2D toy example, the relaxation converges by $\kappa=3$.}
    \label{fig:bounding_ellipsoid}
\end{figure*}

Recalling~\prettyref{prob:joint}, we seek the minimum volume ellipsoid which outer-bounds the set~\eqref{eq:purse_qcqp} centered at the pose estimate ($\*{\bar R}$, $\*{\bar t}$). Let $\*{\bar r}\triangleq \mathrm{vec}(\*{\bar R})$ and $\*{\bar x} \triangleq [\*{\bar r}^\T, \*{\bar t}^\T]^\T$. In quadratic form, we seek $\*H$ such that:

\begin{equation}
    \label{eq:ellipse_problem}
    \eqref{eq:purse_qcqp}
    \implies
    \*x^\T
    \underbrace{
    \begin{bmatrix}
        \*{\bar x}^\T\*H\*{\bar x} - 1 & -\*{\bar x}^\T \*H \\
        -\*H\*{\bar x} & \*H
    \end{bmatrix}
    }_{\triangleq \*W(\*H)}
    \*x
    \leq 0.
\end{equation}

In words, any pose that satisfies \eqref{eq:ellipse_problem} and hence is in \eqref{eq:purse} must also be in the ellipsoid \eqref{eq:ellipse_understandable}. Recall that $\*H\succ 0$ is an unknown matrix defining the size and shape of the ellipsoid. To find the minimum volume $\*H$ satisfying~\eqref{eq:ellipse_problem}, we turn to a generalization of the celebrated S-lemma~\cite{Polik07-SLemma}.

\begin{proposition}[Generalized S-Lemma]
    \label{prop:slemma}
    Let $\*x\in\RR^n$ and $\*W,\,\*Y_i,\,\*Z_j\in\mathcal{S}^{n}$ be given matrices for $i\in[N]$ and $j\in[M]$. For the following statements, $\text{(ii)} \implies \text{(i)}$.
    \begin{enumerate}[(i)]
        \item \emph{(primal)} $\*x^\T\*Y_i\*x\leq 0$ and $\*x^\T\*Z_j\*x = 0$ for $i\in[N]$ and $j\in[M]$
$\implies \*x^\T\*W\*x\leq 0$.
        \item \emph{(dual)} There exist $\'\mu\in\RR^M$ and $\'\lambda\in\RR^N$, $\lambda_i \geq 0$ $\forall\ i$ such that $\*W \preceq \sum_{i=1}^N \lambda_i \*Y_i + \sum_{j=1}^M \mu_j \*Z_j$.
    \end{enumerate}
    \begin{proof}
        Pre and post-multiplying (ii) by $\*x^\T$ and $\*x$ with non-negative $\'\lambda$, statement (i) holds by substitution.
\end{proof}
\end{proposition}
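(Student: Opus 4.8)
The plan is to establish only the single implication (ii) $\implies$ (i), which is the ``easy'' direction of the S-lemma and, unlike the converse, needs neither convexity nor any Slater-type regularity condition. The entire argument reduces to reading the matrix inequality in (ii) as a family of scalar inequalities obtained by conjugation with an arbitrary vector.

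First I would assume the dual condition (ii): there exist $\'\lambda$ with $\lambda_i \geq 0$ and a free $\'\mu$ such that $\*W \preceq \sum_{i=1}^N \lambda_i \*Y_i + \sum_{j=1}^M \mu_j \*Z_j$. By definition of the positive-semidefinite order, subtracting $\*W$ leaves a PSD matrix, so conjugating by any $\*x\in\RR^n$ preserves the inequality and yields
\begin{equation}
    \label{eq:slemma_scalar}
    \*x^\T \*W \*x \leq \sum_{i=1}^N \lambda_i\, \*x^\T \*Y_i \*x + \sum_{j=1}^M \mu_j\, \*x^\T \*Z_j \*x,
\end{equation}
which holds for every $\*x$.

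Next I would specialize \eqref{eq:slemma_scalar} to those $\*x$ satisfying the hypotheses of (i), namely $\*x^\T \*Y_i \*x \leq 0$ and $\*x^\T \*Z_j \*x = 0$. The decisive point is that the sign of each multiplier is matched to the sign of its constraint: because $\lambda_i \geq 0$ multiplies a nonpositive quantity, each inequality term is $\leq 0$; and because $\*x^\T \*Z_j \*x = 0$, each equality term vanishes no matter the sign of $\mu_j$ (this is exactly why $\'\mu$ is left free). Thus the whole right-hand side of \eqref{eq:slemma_scalar} is nonpositive, forcing $\*x^\T \*W \*x \leq 0$, which is the conclusion of (i).

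I do not anticipate any real obstacle in this direction; the only thing requiring care is the sign bookkeeping, i.e.\ confirming that it is the nonnegativity of $\'\lambda$ (and the freedom of $\'\mu$) that gives each summand in \eqref{eq:slemma_scalar} the correct sign. The genuinely hard content of a full S-lemma lies in the converse (i) $\implies$ (ii), which would demand a separating-hyperplane or SDP-duality argument and typically a rank or constraint-qualification hypothesis; since the proposition does not claim it, that difficulty is avoided here.
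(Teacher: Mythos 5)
Your proof is correct and follows essentially the same route as the paper's: conjugating the linear matrix inequality in (ii) by $\*x$ and then using the sign of $\'\lambda$ (and the vanishing of the equality terms) to conclude $\*x^\T\*W\*x \leq 0$. You simply spell out the sign bookkeeping that the paper's one-line proof compresses into ``holds by substitution,'' so there is nothing to add or correct.
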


The classical S-lemma states that the primal and dual are equivalent ($\text{(ii)} \iff \text{(i)}$) when there is only one inequality constraint: $N=1$ and $M=0$. \prettyref{prop:slemma} (also called the \emph{lossy} S-lemma~\cite{Lessard21-SLemma}) extends this to multiple constraints at the cost of a possible duality gap.

To apply~\prettyref{prop:slemma} to our problem, notice that the primal \emph{(i)} corresponds exactly to~\eqref{eq:ellipse_problem}, while the dual \emph{(ii)} provides a semidefinite constraint which may be incorporated into a convex optimization problem. Thus, we relax the constraint~\eqref{eq:purse_qcqp} to its convex dual \emph{(ii)} and optimize for the minimum volume ellipsoid satisfying \emph{(ii)}. The ellipsoid's volume is inversely proportional to $\log\det(\*H)$~\cite{Boyd04book}, leading to the following result.

\begin{proposition}[Bounding Ellipsoid]
    \label{prop:bounding_ellipse}
    An outer bounding ellipsoid for the set~\eqref{eq:purse_qcqp} is given by the solution to the following convex optimization problem.
    \begin{maxi}
{\substack{
            \*H\in\mathcal{S}^{12},\,\*H\succeq 0\\
            \'\lambda\in\RR^{5N}, \,\'\mu\in\RR^{15}
        }}
{\hspace{-5pt}\log\det(\*H)}
        {\label{eq:conformal_ellipsoid}}{}
\addConstraint{
            \hspace{-12pt}
\*W(\*H)
        }{\preceq \sum_{i=1}^{5N} \lambda_i \*A_i + \sum_{j=1}^{15} \mu_j \*Q_j}
        \addConstraint{
            \lambda_i \geq 0
        }{,\quad i\in[5N].}
    \end{maxi}
    The maximizer $\*H^\star$ is an ellipsoid centered at $\*{\bar{x}}$ as~\eqref{eq:ellipse_understandable}.
\end{proposition}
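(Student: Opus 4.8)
The plan is to recognize \eqref{eq:conformal_ellipsoid} as a direct instantiation of the generalized S-lemma (\prettyref{prop:slemma}) applied to the quadratic description \eqref{eq:purse_qcqp}. I would set $\*Y_i = \*A_i$ for the $5N$ inequality constraints, $\*Z_j = \*Q_j$ for the $15$ rotation equality constraints, and $\*W = \*W(\*H)$. With this identification, the linear matrix inequality together with $\lambda_i \ge 0$ in \eqref{eq:conformal_ellipsoid} is \emph{verbatim} statement (ii) of \prettyref{prop:slemma}. Invoking the implication $\text{(ii)}\implies\text{(i)}$ then yields statement (i), which is precisely \eqref{eq:ellipse_problem}: every $\*x$ satisfying the constraints of \eqref{eq:purse_qcqp} also satisfies $\*x^\T\*W(\*H)\*x \le 0$. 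Thus any feasible $(\*H, \'\lambda, \'\mu)$ — in particular the optimizer $\*H^\star$ — certifies that the constraint set is contained in the quadratic region $\{\*x : \*x^\T\*W(\*H)\*x \le 0\}$.

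Next I would carry out the one-line algebraic check that this quadratic region is the claimed ellipsoid \eqref{eq:ellipse_understandable}. Writing $\*x = [1, \*v^\T]^\T$ with $\*v \triangleq [\*r^\T, \*t^\T]^\T$ and expanding the block form of $\*W(\*H)$ gives
\begin{equation*}
    \*x^\T\*W(\*H)\*x = (\*v - \*{\bar x})^\T \*H (\*v - \*{\bar x}) - 1,
\end{equation*}
so $\*x^\T\*W(\*H)\*x \le 0$ is exactly \eqref{eq:ellipse_understandable}, an ellipsoid (since $\*H \succeq 0$) centered at $\*{\bar x}$. Composing with \prettyref{prop:pus}, which states that $(\*R_\mathrm{gt}, \*t_\mathrm{gt})$ lies in \eqref{eq:purse} with probability at least $\beta$, this establishes both halves of the claim: the feasible ellipsoid outer-bounds the constraint set, and therefore contains the true pose with probability at least $\beta$.

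Finally I would confirm that \eqref{eq:conformal_ellipsoid} is a genuine convex program with the correct objective. The key observation is that $\*W(\*H)$ is \emph{affine} in $\*H$ (each block — $\*{\bar x}^\T\*H\*{\bar x}$, $-\*{\bar x}^\T\*H$, $-\*H\*{\bar x}$, and $\*H$ — is linear in the entries of $\*H$), so the constraint is a linear matrix inequality jointly in $(\*H, \'\lambda, \'\mu)$; together with $\'\lambda \ge 0$ and $\*H \succeq 0$ the feasible set is convex, and the objective $\log\det(\*H)$ is concave on the PSD cone. Since the volume of \eqref{eq:ellipse_understandable} scales with $\det(\*H)^{-1/2}$, maximizing $\log\det(\*H)$ minimizes ellipsoid volume, matching the intent of \prettyref{prob:joint}. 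A maximizer exists whenever the constraint set has nonempty interior, since then the enclosing ellipsoid has volume bounded away from zero, so $\det(\*H)$ — and hence $\log\det(\*H)$ — is bounded above over the feasible set.

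The main subtlety, and the reason this yields only \emph{an} outer bound rather than \emph{the} minimum-volume one, is that \prettyref{prop:slemma} supplies only the soundness direction $\text{(ii)}\implies\text{(i)}$ and not equivalence. This is exactly the direction needed to guarantee containment, so the bound is always valid; but the possible duality gap means the relaxation can be conservative. Closing this gap is precisely the role of the sum-of-squares hierarchy developed in \prettyref{sec:hierarchy}.
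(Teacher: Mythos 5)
Your proposal is correct and takes essentially the same approach as the paper: both prove the result by instantiating \prettyref{prop:slemma} with $\*Y_i = \*A_i$, $\*Z_j = \*Q_j$, $\*W = \*W(\*H)$, handle the dropped homogenization constraint by noting the implication holds for all $\*x$ and hence in particular on the slice $x_1 = 1$, and observe that the resulting quadratic inequality $\*x^\T\*W(\*H)\*x \leq 0$ is exactly the ellipsoid \eqref{eq:ellipse_understandable}. Your extra details (the explicit block-form expansion, the convexity of the feasible set, and the caveat that only $\text{(ii)}\implies\text{(i)}$ holds so the bound need not be minimal) match the paper's construction of $\*W(\*H)$ in \eqref{eq:ellipse_problem} and its remarks following \eqref{eq:conformal_ellipsoid}.
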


The proof follows directly from~\prettyref{prop:slemma}. In applying \prettyref{prop:slemma}, we implicitly dropped the homogenization constraint $x_1=1$. Clearly, if statement $(ii)$ holds for general $\*x$ then it holds when $x_1=1$. The S-lemma relaxation, however, means we only guarantee \eqref{eq:conformal_ellipsoid} returns \emph{some} outer ellipsoidal bound (not necessarily the smallest) on the pose uncertainty constraint set. To improve this bound, the next section extends~\eqref{eq:conformal_ellipsoid} into a relaxation hierarchy, depicted in \prettyref{fig:bounding_ellipsoid}, which is guaranteed to converge to the true minimum volume ellipsoidal bound.

\subsection{The S-Lemma Relaxation Hierarchy}
\label{sec:hierarchy}
In the previous section we used the generalized S-lemma to compute an ellipsoid which bounds the pose uncertainty set \eqref{eq:purse}. Here, we use ideas from sum-of-squares (SOS) programming~\cite{Lasserre01siopt-LasserreHierarchy} to derive an SOS S-lemma and extend the ellipsoid procedure into a hierarchy of relaxations. This hierarchy, visualized in \prettyref{fig:bounding_ellipsoid}, enjoys a guarantee of convergence to the true minimum volume ellipsoid bounding \eqref{eq:purse}. Our algorithm is similar to~\cite{Nie05optim-MinimalEnclosingEllipsoid}; our formulation is based on the S-lemma and solves directly for the minimum ellipsoid volume. Further, our approach overcomes previously reported numerical difficulties~\cite{Tang24l4dc-setMembership}, solving to optimality with off-the-shelf solvers such as MOSEK~\cite{mosek}.

The key idea of the SOS S-lemma hierarchy is to use \emph{dual polynomials} in the S-lemma relaxation. We introduce this by analogy to \prettyref{prop:slemma}. Written as an SOS inequality, the linear matrix inequality in statement (ii) of \prettyref{prop:slemma} reads:
\begin{equation}
    \label{eq:lmisos}
    \*x^\T \*W \*x \preceq_{sos} \sum_{i=1}^N \lambda_i \*x^\T \*Y_i \*x + \sum_{j=1}^M \mu_j \*x^\T \*Z_j \*x.
\end{equation}

The SOS S-lemma simply generalizes the dual variables in \eqref{eq:lmisos} to polynomials $\lambda_i(\*x)$ and $\mu_j(\*x)$.

\begin{proposition}[SOS Generalized S-Lemma]
    \label{prop:slemma_sos}
    Let $\*x\in\RR^n$ and $\*W,\,\*Y_i,\,\*Z_j\in\mathcal{S}^{n}$ be given matrices for $i\in[N]$ and $j\in[M]$. Fix integer $\kappa \geq 0$.
    For the following statements, $\text{(ii)} \implies \text{(i)}$.
    \begin{enumerate}[(i)]
        \item \emph{(primal)} $\*x^\T\*Y_i\*x\leq 0$ and $\*x^\T\*Z_j\*x = 0$ for $i\in[N]$ and $j\in[M]$
        $\implies \*x^\T\*W\*x\leq 0$.
        \item \emph{(dual)} There exist polynomials $\lambda_i,\,\mu_j\in\RR_{2\kappa}[\*x]$ with $\lambda_i(\*x) \succeq_{sos} 0$ for $i\in[N]$ and $j\in[M]$ such that:
    \end{enumerate}
    \begin{equation}
        \label{eq:sosslem}
        \quad\quad
        \*x^\T \*W \*x \preceq_{sos} \sum_{i=1}^N \lambda_i(\*x) \*x^\T \*Y_i \*x + \sum_{j=1}^M \mu_j(\*x) \*x^\T \*Z_j \*x.
    \end{equation}

    The quantity $\kappa+1$ is called the \emph{relaxation order}.
    \begin{proof}
        For polynomial $p(\*x)\in\RR[\*x]$, $p(\*x) \preceq_{sos} 0 \implies p(\*x) \leq 0$. Thus, statement \emph{(i)} holds by substitution.
    \end{proof}
\end{proposition}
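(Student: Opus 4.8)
The plan is to establish the sufficiency direction $\text{(ii)}\implies\text{(i)}$ by unwinding the definition of $\preceq_{sos}$ from \prettyref{sec:notation} and then evaluating the resulting sum-of-squares certificate pointwise on any $\*x$ feasible for the primal. First I would observe that, by definition, the inequality \eqref{eq:sosslem} asserts exactly that the polynomial
\[
    q(\*x) \triangleq \sum_{i=1}^N \lambda_i(\*x)\,\*x^\T\*Y_i\*x + \sum_{j=1}^M \mu_j(\*x)\,\*x^\T\*Z_j\*x - \*x^\T\*W\*x
\]
is SOS-representable, \ie $q(\*x) = [\*x]_{\kappa+1}^\T\*S[\*x]_{\kappa+1}$ for some $\*S\succeq 0$. (The degree bookkeeping: each $\lambda_i\in\RR_{2\kappa}[\*x]$ times a quadratic has degree $2(\kappa+1)$, so the basis $[\*x]_{\kappa+1}$ suffices, which is why $\kappa+1$ is the relaxation order.) The single analytic fact I need is that any such $q$ is a genuine sum of squares and is therefore nonnegative at every point, $q(\*x)\geq 0$ for all $\*x\in\RR^n$. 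This is the polynomial analogue of the scalar substitution step used to prove \prettyref{prop:slemma}.

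Next I would fix an arbitrary $\*x$ satisfying the primal hypotheses, $\*x^\T\*Y_i\*x\leq 0$ for $i\in[N]$ and $\*x^\T\*Z_j\*x = 0$ for $j\in[M]$, and track the sign of each term of $q$ at this point. The essential bookkeeping is that the two multiplier families play different roles. Because each $\lambda_i(\*x)\succeq_{sos}0$, the multiplier $\lambda_i$ is itself a sum of squares, hence $\lambda_i(\*x)\geq 0$ pointwise; together with $\*x^\T\*Y_i\*x\leq 0$ this forces $\lambda_i(\*x)\,\*x^\T\*Y_i\*x\leq 0$ for every $i$. For the equality constraints I would need no sign control at all: each factor $\*x^\T\*Z_j\*x$ vanishes at the feasible $\*x$, so $\mu_j(\*x)\,\*x^\T\*Z_j\*x = 0$ regardless of the (possibly indefinite) polynomial $\mu_j$. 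This explains precisely why the dual only imposes an SOS requirement on the $\lambda_i$ and leaves the $\mu_j$ free.

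Finally, I would combine these facts: since $q(\*x)\geq 0$ at the feasible point, rearranging gives
\[
    \*x^\T\*W\*x \;\leq\; \sum_{i=1}^N \lambda_i(\*x)\,\*x^\T\*Y_i\*x + \sum_{j=1}^M \mu_j(\*x)\,\*x^\T\*Z_j\*x \;\leq\; 0,
\]
with the last inequality holding because every inequality term is nonpositive and every equality term is zero. This is exactly conclusion (i), and since $\kappa$ enters only through the degree of the multipliers the argument is identical for every fixed relaxation order. I do not expect a real obstacle in this (sufficiency) direction — the only point demanding care is keeping the roles of $\lambda_i$ and $\mu_j$ straight. The genuinely hard content, namely that enlarging $\kappa$ drives the hierarchy to the true minimum-volume ellipsoid, is a \emph{converse} statement not claimed here and would instead require a Positivstellensatz-type completeness argument.
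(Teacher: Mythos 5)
Your proposal is correct and follows essentially the same route as the paper's proof: both rest on the single fact that SOS-representability implies pointwise nonnegativity, applied once to the certificate polynomial $q$ and once to each multiplier $\lambda_i$, followed by substitution of a feasible point (where the $\mu_j$ terms vanish and the $\lambda_i$ terms are nonpositive). You have merely written out in full the sign bookkeeping that the paper compresses into ``statement (i) holds by substitution,'' which is a fair and faithful elaboration rather than a different argument.
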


Similar to \prettyref{prop:slemma}, the SOS generalized S-lemma provides a tractable relaxation of the polynomial constraint set \eqref{eq:purse_qcqp}. Although the sum-of-squares notation makes it more difficult to interpret, \eqref{eq:sosslem} is simply a linear matrix inequality among matrices in $\mathcal{S}^d$, where $d$ is the dimension of the monomial basis $[\*x]_\kappa$. \prettyref{appendix:sosslem} gives an explicit example. The case $\kappa=0$ corresponds exactly to \eqref{eq:lmisos}, while larger $\kappa$ give higher-order relaxations. Notice that each $\kappa$ relaxation is a special case of the $\kappa+1$ relaxation with the highest order term of the dual polynomials set to zero. Thus, increasing $\kappa$ gives a more expressive relaxation. This provides an elegant way to trade off computation (the SOS inequality is a linear matrix inequality of size $d\times d$) for relaxation accuracy.

As in the first-order case, we use \prettyref{prop:slemma_sos} to derive a hierarchy of bounding ellipsoids.

\begin{proposition}[Bounding Ellipsoid Hierarchy]
    \label{prop:bounding_hierarchy}
    For fixed relaxation order $\kappa+1$, an outer bounding ellipsoid for the set~\eqref{eq:purse_qcqp} is given by the solution to the following convex optimization problem.
    \begin{maxi}
{\substack{
            \*H\in\mathcal{S}^{12},\,\*H\succeq 0\\
            \lambda_i,\,\mu_j\in\RR_{2\kappa}[\*x],\\
            i\in[5N],\,j\in[15]
        }}
{\hspace{-5pt}\log\det(\*H)}
        {\label{eq:conformal_ellipsoid_higherorder}}{}
\addConstraint{
            \hspace{-12pt}
\*x^\T\*W(\*H)\*x
        }{\preceq_{sos} \sum_{i=1}^{5N} \lambda_i(\*x) \*x^\T\*A_i\*x \\&&&\quad\ + \sum_{j=1}^{15} \mu_j(\*x) \*x^\T\*Q_j\*x}
        \addConstraint{
            \lambda_i \succeq_{sos} 0
        }{,\quad i\in[5N].}
    \end{maxi}
    The maximizer $\*H^\star_{\kappa}$ defines an ellipsoid in terms of $\*x$ centered at $\*{\bar{x}}$ as in~\eqref{eq:ellipse_understandable}. Further, for any $\kappa$ the $\kappa+1$ relaxation is an upper bound: $\log\det(\*H_\kappa^\star) \leq \log\det(\*H_{\kappa+1}^\star)$.
\end{proposition}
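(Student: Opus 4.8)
The plan is to prove the claim in three parts---validity of the outer bound, centering at $\*{\bar{x}}$, and monotonicity of the optimal value---of which only the last requires genuinely new work. For validity I would apply \prettyref{prop:slemma_sos} verbatim with $\*W = \*W(\*H)$, $\*Y_i = \*A_i$ for $i\in[5N]$, and $\*Z_j = \*Q_j$ for $j\in[15]$. The sum-of-squares constraint in~\eqref{eq:conformal_ellipsoid_higherorder} is exactly dual statement (ii), so the implication $\text{(ii)}\implies\text{(i)}$ yields: every $\*x$ obeying the (in)equalities of~\eqref{eq:purse_qcqp} satisfies $\*x^\T\*W(\*H)\*x \le 0$, which is~\eqref{eq:ellipse_problem}. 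Hence any feasible $\*H$, and in particular the maximizer $\*H^\star_\kappa$, gives a valid outer bound. For centering I note that $\*W(\*H)$ has precisely the block structure of~\eqref{eq:ellipse_problem}; substituting $x_1=1$ and $\*x=[1,\mathrm{vec}(\*R)^\T,\*t^\T]^\T$ turns $\*x^\T\*W(\*H)\*x\le 0$ into the ellipsoid~\eqref{eq:ellipse_understandable} centered at $\*{\bar{x}}$, identical to the computation behind~\prettyref{prop:bounding_ellipse}.

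The core of the argument is monotonicity, which I would prove by showing that the feasible set of the $\kappa$-th relaxation embeds into that of the $(\kappa+1)$-th. Let $(\*H,\{\lambda_i\},\{\mu_j\})$ be feasible with $\lambda_i,\mu_j\in\RR_{2\kappa}[\*x]$. Since $\RR_{2\kappa}[\*x]\subseteq\RR_{2(\kappa+1)}[\*x]$, the same $\*H$ and multipliers are admissible candidates at level $\kappa+1$; the only nontrivial point is that both sum-of-squares conditions survive enlargement of the monomial basis---from $[\*x]_\kappa$ to $[\*x]_{\kappa+1}$ for the constraints $\lambda_i\succeq_{sos}0$, and from $[\*x]_{\kappa+1}$ to $[\*x]_{\kappa+2}$ for the main inequality. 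I expect this to be the main obstacle, and it reduces to a monomial-embedding lemma: under the homogenization $x_1=1$, each degree-$\ell$ monomial $m$ in $[\*x]_\ell$ coincides with the degree-$(\ell+1)$ monomial $x_1 m$ appearing in $[\*x]_{\ell+1}$, so there is a $0/1$ selection matrix $\*S_\ell$ with $[\*x]_\ell=\*S_\ell[\*x]_{\ell+1}$ on the slice $\{x_1=1\}$. Consequently any $p=[\*x]_\ell^\T\*G[\*x]_\ell$ with $\*G\succeq0$ also equals $[\*x]_{\ell+1}^\T(\*S_\ell^\T\*G\*S_\ell)[\*x]_{\ell+1}$ with $\*S_\ell^\T\*G\*S_\ell\succeq0$, so SOS-representability is preserved when the basis grows by one degree. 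Applying this at $\ell=\kappa$ and $\ell=\kappa+1$ shows the lifted tuple is feasible at level $\kappa+1$.

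With the nesting established, the conclusion follows at once: the $(\kappa+1)$-th program maximizes the same objective $\log\det(\*H)$ over a feasible set containing an isomorphic copy of the $\kappa$-th feasible set, so its optimum cannot decrease, giving $\log\det(\*H^\star_\kappa)\le\log\det(\*H^\star_{\kappa+1})$. This makes rigorous the informal remark preceding the proposition---that the lower-order relaxation is recovered from the higher-order one by zeroing the top-degree coefficients of the dual polynomials. I would finish by observing that the case $\kappa=0$ (constant multipliers) reproduces the first-order program of~\prettyref{prop:bounding_ellipse}, placing it at the base of the chain.
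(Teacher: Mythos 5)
Your proposal is correct and follows essentially the same route as the paper: validity and centering come from applying \prettyref{prop:slemma_sos} with $\*Y_i=\*A_i$, $\*Z_j=\*Q_j$ and the block structure of $\*W(\*H)$, while monotonicity follows from the observation (stated informally in the paper just before the proposition) that the level-$\kappa$ program is the level-$(\kappa+1)$ program with the top-degree multiplier coefficients zeroed out. Your selection-matrix lemma showing that SOS-representability survives enlargement of the monomial basis on the slice $\{x_1=1\}$ is a welcome rigorization of that remark, but it does not change the underlying argument.
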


\prettyref{prop:bounding_hierarchy} simply applies the SOS generalized S-lemma (\prettyref{prop:slemma_sos}) to relax the bounding ellipsoid problem. It can be efficiently expressed as a semidefinite program by converting the SOS inequality into a linear matrix inequality. We can also show the relaxation is asymptotically \emph{tight} with $\kappa$.

\begin{theorem}[Hierarchy Convergence~\cite{Nie05optim-MinimalEnclosingEllipsoid,Tang24l4dc-setMembership}]
    \label{thm:convergence}
    The maximizer $\*H^\star_\kappa$ of \eqref{eq:conformal_ellipsoid_higherorder} gives the minimum volume ellipsoid bounding~\eqref{eq:purse_qcqp} as relaxation order $\kappa$ tends to infinity.
\end{theorem}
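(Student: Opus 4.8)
The plan is to treat \prettyref{thm:convergence} as the standard convergence statement for a Lasserre-type sum-of-squares hierarchy and to prove it by a sandwich argument: a monotone-plus-bounded argument from one side and a Positivstellensatz argument from the other. Write $\*H_{\mathrm{opt}}$ for the (unique) maximizer of $\log\det(\*H)$ over all $\*H\succeq0$ that genuinely outer-bound \eqref{eq:purse_qcqp}, i.e., the true minimum-volume ellipsoidal bound. First I would record that the relaxation \eqref{eq:conformal_ellipsoid_higherorder} is always a \emph{restriction} of this idealized problem: by \prettyref{prop:slemma_sos}, any $\*H$ admitting an order-$\kappa$ SOS certificate is a valid outer bound, so its feasible set $\calF_\kappa$ satisfies $\calF_\kappa\subseteq\{\*H : \*H \text{ outer-bounds } \eqref{eq:purse_qcqp}\}$. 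Hence $\log\det(\*H_\kappa^\star)\le\log\det(\*H_{\mathrm{opt}})$ for every $\kappa$. Combined with the monotonicity $\calF_\kappa\subseteq\calF_{\kappa+1}$ already established in \prettyref{prop:bounding_hierarchy}, the sequence $\log\det(\*H_\kappa^\star)$ is nondecreasing and bounded above, so it converges to some limit $L\le\log\det(\*H_{\mathrm{opt}})$.

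The crux is the reverse inequality $L\ge\log\det(\*H_{\mathrm{opt}})$, which I would obtain by exhibiting near-optimal feasible points at high relaxation order. For $\delta\in(0,1)$ set $\*H_\delta\triangleq(1-\delta)\*H_{\mathrm{opt}}$, a slightly inflated ellipsoid. Because $\*H_{\mathrm{opt}}$ already contains the constraint set, every point of \eqref{eq:purse_qcqp} lies strictly inside the $\*H_\delta$-ellipsoid, so the polynomial $-\*x^\T\*W(\*H_\delta)\*x$ is \emph{strictly} positive on the (compact) feasible set. Invoking Putinar's Positivstellensatz, this strictly positive polynomial admits a representation as an SOS combination of the constraint polynomials $-\*x^\T\*A_i\*x$ (with SOS multipliers) and $\*x^\T\*Q_j\*x$ (with free polynomial multipliers) of some finite degree $2\kappa(\delta)$. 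Rearranging this identity yields exactly the dual inequality \eqref{eq:sosslem} for $\*W(\*H_\delta)$, so $\*H_\delta\in\calF_{\kappa(\delta)}$ and therefore $L\ge\log\det(\*H_\delta)=12\log(1-\delta)+\log\det(\*H_{\mathrm{opt}})$. Letting $\delta\to0$ gives $L\ge\log\det(\*H_{\mathrm{opt}})$, hence equality. Convergence of the maximizers $\*H_\kappa^\star\to\*H_{\mathrm{opt}}$ (not just their objective values) then follows from strict concavity of $\log\det$ on the PSD cone and uniqueness of the minimum-volume bound.

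The main obstacle I anticipate is verifying the compactness / Archimedean hypothesis that Putinar's theorem requires, because the monocular setting carries exactly the scale ambiguity that can make a single backprojection cone unbounded. I would argue that with the rigid-body equalities enforcing $\*R\in\SO3$ (which already bounds $\mathrm{vec}(\*R)$) together with backprojection constraints from several keypoints in non-degenerate configuration, the admissible translations are confined to a bounded range, so \eqref{eq:purse_qcqp} is compact; if needed, a redundant ball constraint $\|\*x\|^2\le\rho^2$ implied by this bound can be appended to the generators to guarantee the Archimedean condition without changing the feasible set. A secondary technical point is reconciling the homogeneous quadratic form $\*x^\T\*W(\*H)\*x$ with the inhomogeneous Positivstellensatz: since the monomial basis fixes $x_1=1$, the SOS certificate must be interpreted on the affine slice, and I would make the correspondence between the dehomogenized and homogeneous representations explicit exactly as in the first-order passage below \eqref{eq:conformal_ellipsoid}.
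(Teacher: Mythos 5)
Your proposal is correct and follows essentially the same route as the paper's proof: the upper bound comes from the fact that any SOS certificate yields a valid outer bound, and the reverse inequality comes from slightly inflating the optimal ellipsoid (you scale $\*H_{\mathrm{opt}}$ by $(1-\delta)$ where the paper shifts $\*H^\star$ by $-\epsilon\eye_{12}$, an immaterial difference) so that its defining polynomial is strictly negative on the compact set \eqref{eq:purse_qcqp}, then invoking the Positivstellensatz of Nie and Demmel to obtain a finite-order SOS certificate and passing to the limit. Your extra attention to the Archimedean/compactness hypothesis and to convergence of the maximizers themselves (not just the objective values) is slightly more careful than the paper, which simply assumes compactness and stops at convergence of the optimal volumes.
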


We prove \prettyref{thm:convergence} in \prettyref{appendix:sosslem}, drawing from~\cite{Nie05optim-MinimalEnclosingEllipsoid}. Although the theorem only guarantees asymptotic convergence, we observe finite convergence at low relaxation orders occurs in practice (see \eg \prettyref{fig:bounding_ellipsoid}) and finite convergence guarantees have been obtained in related problems~\cite{Nie14mp-finiteConvergenceLassere}. In this paper we only solve~\eqref{eq:conformal_ellipsoid_higherorder} up to second-order ($\kappa=1$). For speed, we prefer the quaternion representation of the set~\eqref{eq:purse} which requires only $4$ variables to represent rotations instead of $9$ (see~\prettyref{appendix:quaternions}). We use TSSOS~\cite{Wang20arXiv-cs-tssos} to write the polynomial inequality \eqref{eq:sosslem} and modify the dual problem to match~\eqref{eq:conformal_ellipsoid_higherorder}. When it is not clear from context, we use $\mathrm{SLUE}$-$\kappa$ to refer to SLUE with a $\kappa$-order relaxation. 

\section{Translational and Angular Uncertainty}
\label{sec:uncertainty_bounds}
The ellipsoid~\eqref{eq:ellipse_understandable} expresses rotation and translation uncertainty jointly. However, it is not immediate to infer rotation uncertainty from the ellipsoidal bound on joint translation and rotation matrix uncertainty. In view of \prettyref{prob:uncertainty}, we next describe a simple projection scheme to obtain separate bounding ellipsoids for translation and orientation, each interpretable as explicit uncertainty bounds.

\textbf{Translational Ellipsoid.}
To reduce \eqref{eq:ellipse_understandable} into a translation-only ellipsoid, we project $\*H$ onto its last three coordinates via orthogonal projection \cite{Karl92tr}. Let $\*P_t \triangleq \begin{bmatrix}\*0_{3\times9} & \eye_3\end{bmatrix}$. The translations $\*t\in\RR^3$ satisfying~\eqref{eq:ellipse_understandable} also satisfy the following ellipsoidal constraint:
\begin{equation}
    \label{eq:marginalized_translations}
    (\*t - \*{\bar t})^\T \*H_t(\*t - \*{\bar t}) \leq 1,
\end{equation}
where $\*H_t \triangleq \left(\*P_t\*H^{-1}\*P_t^\T\right)^{-1}$ is the ellipsoid matrix.

\textbf{Angular Ellipsoid.}
The angular ellipsoid requires more care. Similar to translations, we can project $\*H$ onto its first nine coordinates. This gives the ellipsoid:
\begin{equation}
    \label{eq:marginalized_rotations}
    (\*r - \*{\bar r})^\T \*H_r(\*r - \*{\bar r}) \leq 1,
\end{equation}
where $\*H_r \triangleq \left(\*P_r\*H^{-1}\*P_r^\T\right)^{-1}$ and $\*P_r \triangleq \begin{bmatrix}\eye_9 & \*0_{9\times3}\end{bmatrix}$. 

Eq.~\eqref{eq:marginalized_rotations} is still an \emph{implicit} rotation constraint because rotation matrices must satisfy the additional constraints $\*R\in\SO3$. Thus, we transform \eqref{eq:marginalized_rotations} into an ellipsoid over \emph{axis-angle} deviation from the rotation estimate $\*{\bar R}$. For an angle $\theta$ and axis $\'\omega\in\mathbb{S}^2$, any rotation matrix $\*R\in\SO3$ can be written as an axis-angle perturbation of $\*{\bar R}$. That is, $\*R=\*R_{\'\omega}(\theta)\*{\bar R}$, where $\*R_{\'\omega}(\theta)$ rotates by angle $\theta$ about axis $\'\omega$ according to Rodrigues' rotation formula~\cite[Ch. 6.2]{Barfoot17book}. Specifically, $\*R_{\'\omega}(\theta) = \eye_3 + \'{\hat \omega}\sin(\theta) + \'{\hat \omega}^2(1-\cos(\theta))$, where $\'{\hat \omega}$ is the skew-symmetric cross-product matrix for $\'\omega$:
\begin{equation}
    \label{eq:skew}
    \'{\hat \omega} \triangleq \begin{bmatrix}
        0 & -\omega_3 & \omega_2 \\
        \omega_3 & 0 & -\omega_1 \\
        -\omega_2 & \omega_1 & 0
    \end{bmatrix}
\end{equation}

For convenience, we represent axis-angle using the vector $\'\omega\sin(\theta)$. Assuming $\theta \leq 90^\circ$, we can recover $\'\omega$ from its direction and $\theta$ from its norm. The vector $\'\omega\sin(\theta)$ is particularly convenient because it is related to the skew part of the rotation matrix. From Rodrigues' formula, $\*R - \*R^\T = 2\sin(\theta)\'{\hat \omega}$. In vector form, this is:

\begin{equation}
    \label{eq:axang}
    2\'\omega\sin(\theta) = 
    \begin{bmatrix}
        \left[R_{\'\omega}(\theta)\right]_{3,2} - \left[R_{\'\omega}(\theta)\right]_{2,3}\\
        \left[R_{\'\omega}(\theta)\right]_{1,3} - \left[R_{\'\omega}(\theta)\right]_{3,1}\\
        \left[R_{\'\omega}(\theta)\right]_{2,1} - \left[R_{\'\omega}(\theta)\right]_{1,2}
    \end{bmatrix}\!.
\end{equation}

Using~\eqref{eq:axang}, we can project the rotation matrix ellipsoid to an ellipsoid in $\'\omega\sin(\theta)$. As an intermediate step, we rewrite~\eqref{eq:marginalized_rotations} as an ellipsoid in $\*R_{\'\omega}(\theta)$. Notice that $\*r - \*{\bar r} = \mathrm{vec}\left[(\*R_{\'\omega}(\theta) - \eye_3)\*{\bar R}\right]$. By the Kronecker identity,
\begin{equation}
    \*r - \*{\bar r} = 
    (\*{\bar R}^\T \otimes \eye_3)\mathrm{vec}(\*R_{\'\omega}(\theta) - \eye_3).
\end{equation}
The vector $\mathrm{vec}(\*R_{\'\omega}(\theta) - \eye_3)$ can then be projected to $\'\omega\sin(\theta)$ using \eqref{eq:axang} (subtracting the identity leaves the skew part unchanged). This procedure is summarized in the proposition below.

\begin{proposition}
    \label{prop:ang_bounds}
    Let $\*P_\theta$ be the projection matrix such that $\*P_\theta\mathrm{vec}(\*R_{\'\omega}(\theta)) = \eqref{eq:axang}$. Define the matrix $\*H_\theta$ as below:
    \begin{equation}
        \label{eq:Htheta}
        \*H_\theta \triangleq 4\left(\*P_\theta
        \left[(\*{\bar R}^\T \otimes \eye_3)^\T\*H_r (\*{\bar R}^\T \otimes \eye_3)\right]^{-1}
        \*P_\theta^\T\right)^{-1}\!\!.
    \end{equation}
    
    If $\theta\leq 90^\circ$, the set of axis-angle deviations ($\'\omega,\theta$) from the rotation estimate $\*{\bar R}$ satisfying~\eqref{eq:ellipse_understandable} also satisfy the following ellipsoidal constraint.
    \begin{equation}
        \label{eq:marginalized_angles}
        (\'\omega\sin(\theta))^\T \*H_\theta (\'\omega\sin(\theta)) \leq 1.
    \end{equation}
    At a given $\'\xi$ in~\eqref{eq:marginalized_angles}, $\'\omega = \'\xi / \|\'\xi\|_2$ and $\sin(\theta) = \|\'\xi\|_2$.
\end{proposition}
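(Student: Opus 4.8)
The plan is to chain together two applications of the ellipsoid-projection (shadow) identity already used for the translational and rotational marginals~\cite{Karl92tr}, interleaved with an exact linear change of variables from the Kronecker identity. Recall the identity I will invoke twice: for $\*M\succ0$ and a full-row-rank matrix $\*P$, the image of the ellipsoid $\{\*v : \*v^\T\*M\*v\le1\}$ under $\*v\mapsto\*P\*v$ is exactly the ellipsoid $\{\*w : \*w^\T(\*P\*M^{-1}\*P^\T)^{-1}\*w\le1\}$. In particular, every $\*P\*v$ with $\*v^\T\*M\*v\le1$ lies in this projected ellipsoid, which is precisely the (outer-bound) containment direction the proposition asserts.

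First I would start from the rotation-only ellipsoid~\eqref{eq:marginalized_rotations}, which already bounds $\*r-\*{\bar r}$ after marginalizing out $\*t$ (and $\*H_r\succ0$ since $\*H\succ0$ and $\*P_r$ has full row rank). Using $\*R=\*R_{\'\omega}(\theta)\*{\bar R}$ and the Kronecker identity as in the paragraph preceding the proposition, substitute $\*r-\*{\bar r}=(\*{\bar R}^\T\otimes\eye_3)\,\mathrm{vec}(\*R_{\'\omega}(\theta)-\eye_3)$ into~\eqref{eq:marginalized_rotations}. Because $\*{\bar R}\in\SO3$, the map $(\*{\bar R}^\T\otimes\eye_3)$ is orthogonal, hence an invertible change of coordinates, giving the equivalent ellipsoid $\*v^\T\*M\*v\le1$ with $\*v\triangleq\mathrm{vec}(\*R_{\'\omega}(\theta)-\eye_3)$ and $\*M\triangleq(\*{\bar R}^\T\otimes\eye_3)^\T\*H_r(\*{\bar R}^\T\otimes\eye_3)\succ0$.

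Next I would evaluate $\*P_\theta\*v$. By the defining property of $\*P_\theta$, the quantity $\*P_\theta\,\mathrm{vec}(\*R_{\'\omega}(\theta))$ equals the skew vector~\eqref{eq:axang}, namely $2\'\omega\sin\theta$; and since $\*P_\theta$ only forms differences of off-diagonal entries it annihilates $\mathrm{vec}(\eye_3)$, so $\*P_\theta\*v=\*P_\theta\,\mathrm{vec}(\*R_{\'\omega}(\theta))=2\'\omega\sin\theta$. Applying the projection identity to $\*v^\T\*M\*v\le1$ under $\*P_\theta$ (which has full row rank $3$, so $\*P_\theta\*M^{-1}\*P_\theta^\T\succ0$) places $2\'\omega\sin\theta$ in the ellipsoid with matrix $(\*P_\theta\*M^{-1}\*P_\theta^\T)^{-1}$. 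Pulling the factor $2$ out of both slots contributes a factor $4$, which yields exactly $(\'\omega\sin\theta)^\T\*H_\theta(\'\omega\sin\theta)\le1$ with $\*H_\theta$ as in~\eqref{eq:Htheta}, i.e.~\eqref{eq:marginalized_angles}.

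Finally, the recovery claim uses $\'\omega\in\mathbb{S}^2$: for $\'\xi\triangleq\'\omega\sin\theta$ we have $\|\'\xi\|_2=|\sin\theta|$ and $\'\xi/\|\'\xi\|_2=\'\omega$, and the hypothesis $\theta\le90^\circ$ makes $\sin\theta\ge0$ with $\sin$ injective on $[0,90^\circ]$, so $\sin\theta=\|\'\xi\|_2$ determines $\theta$ unambiguously. I expect the main obstacle to be the bookkeeping in the middle step: verifying that $\*P_\theta$ applied to $\mathrm{vec}(\*R_{\'\omega}(\theta)-\eye_3)$ is exactly $2\'\omega\sin\theta$ (combining Rodrigues' identity $\*R_{\'\omega}(\theta)-\*R_{\'\omega}(\theta)^\T=2\sin\theta\,\'{\hat\omega}$ with the fact that subtracting $\eye_3$ perturbs only diagonal entries), and confirming that $\*P_\theta$ has full row rank so that both invocations of the projection identity are legitimate.
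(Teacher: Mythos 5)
Your proposal is correct and follows essentially the same route as the paper's own derivation (given in the text preceding the proposition): project the joint ellipsoid to the rotation-matrix marginal $\*H_r$, substitute $\*r-\*{\bar r}=(\*{\bar R}^\T\otimes\eye_3)\,\mathrm{vec}(\*R_{\'\omega}(\theta)-\eye_3)$ via the Kronecker identity, project with $\*P_\theta$ (noting it kills $\mathrm{vec}(\eye_3)$ and extracts $2\'\omega\sin\theta$), and absorb the factor $2$ as the factor $4$ in \eqref{eq:Htheta}. Your write-up merely makes explicit the details the paper leaves implicit — the ellipsoid-shadow identity, the full row rank of $\*P_\theta$, and positive definiteness of the transformed matrix — all of which check out.
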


Restricting the angular deviation ($\theta\leq90^\circ$) is not strictly necessary. Using the quaternion rotation representation gives a similar marginalization scheme which does not require any orientation restrictions, see~\prettyref{appendix:quaternions}. 

\section{Experiments}
\label{sec:experiments}
In this section, we evaluate SLUE in three object-focused scenarios. We first describe the three datasets. Then, in \prettyref{sec:keypoints}, we show how to obtain the high-probability keypoint uncertainty bounds \eqref{eq:conformal_boundedprob} from split conformal prediction \cite{Angelopoulos24-ConformalTextbook} and provide quantitative coverage results in \prettyref{sec:coverage}. 
Our main results are in \prettyref{sec:results_ellipse}. In particular, we compute angular and translational bounds for each dataset. Compared to prior work, using SLUE to optimize directly for the bounding ellipsoid's shape yields significantly smaller uncertainty sets without sacrificing outer bound guarantees. In \prettyref{sec:runtime} we show SLUE is the fastest bounding approach despite solving also for ellipsoid shape. Further qualitative results and ablations are in \prettyref{appendix:conformal_exp}.

\begin{figure*}[htb!]
    \centering
    \subfloat[\textsf{CAST} position uncertainty]{\includegraphics[width=0.32\linewidth]{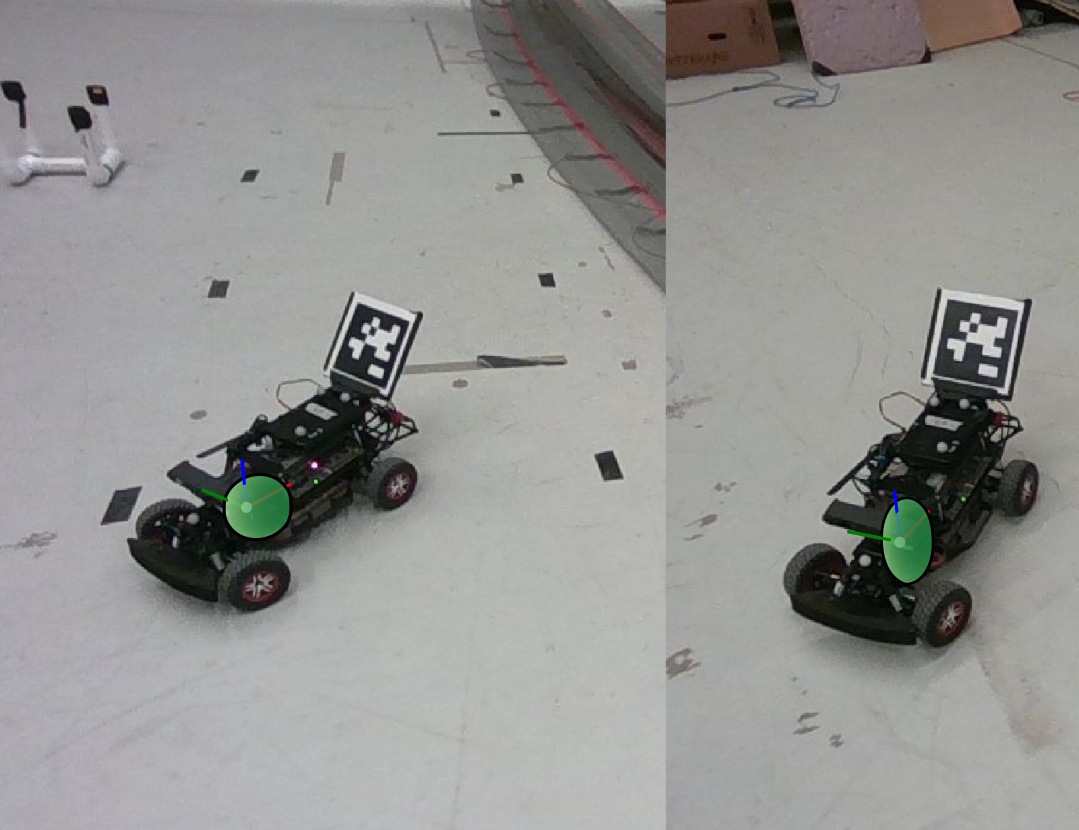}}
    \hspace{0.2cm}
    \subfloat[\textsf{LM-O} uncertainty]{\includegraphics[width=0.32\linewidth]{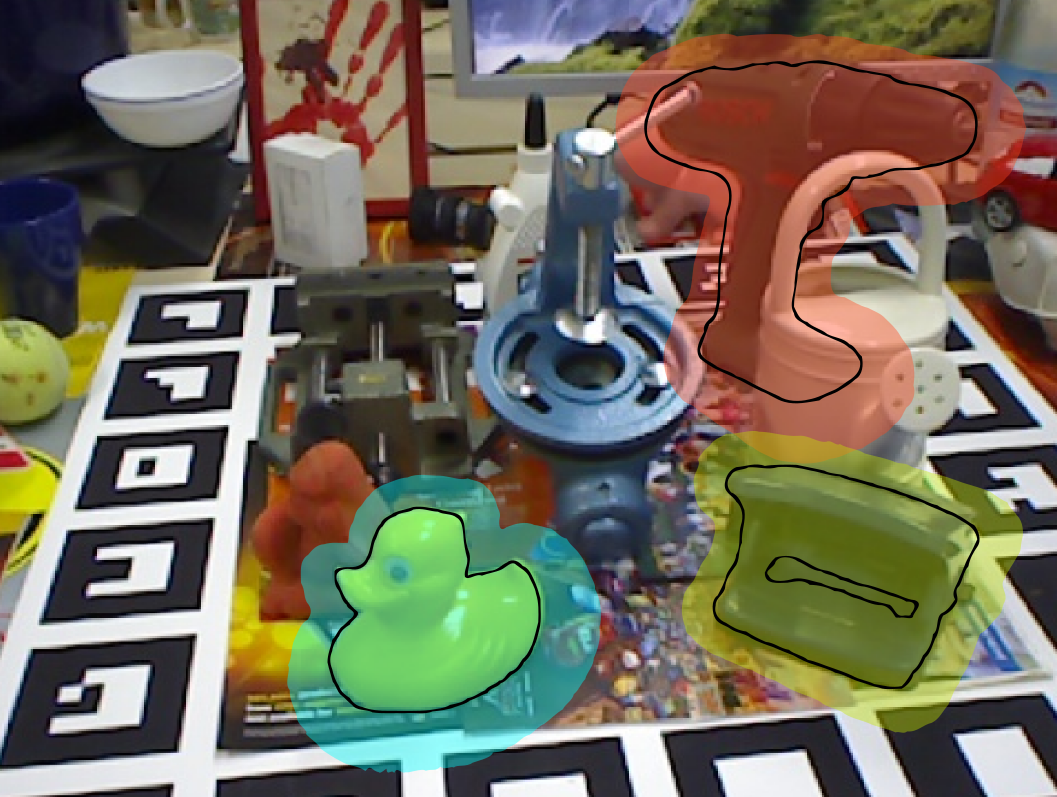}}
    \hspace{0.2cm}
    \subfloat[\textsf{YCB-V} uncertainty]{\includegraphics[width=0.32\linewidth]{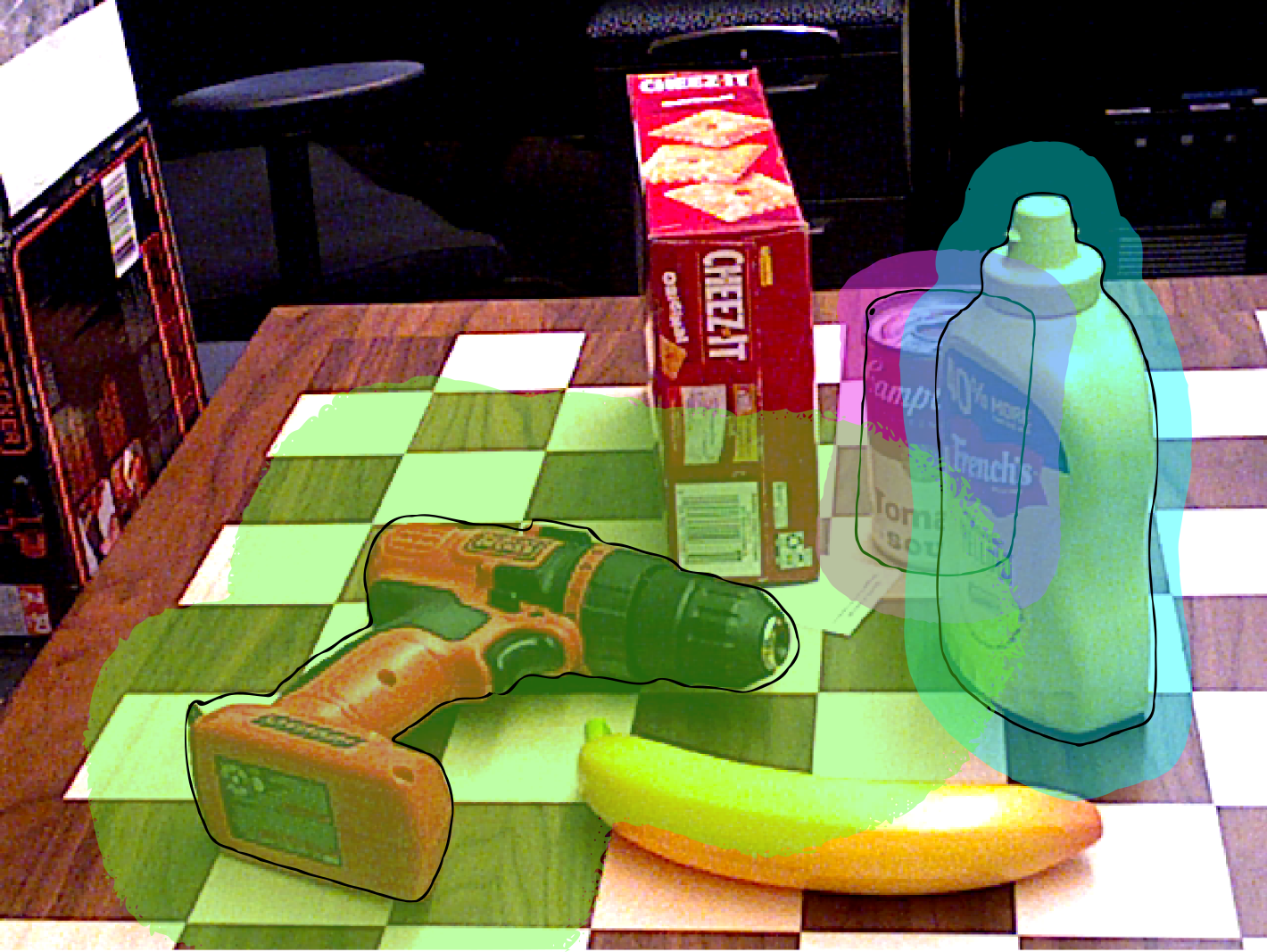}}
    \caption{\textbf{Image-plane projections of ellipsoidal pose uncertainty.} Plots show the set of possible poses in the second-order joint ellipsoidal bound for $\alpha=0.1$. Uncertainty is mostly concentrated along the optical axis. For \textsf{\small CAST}, we only show translation uncertainty about the pose estimate.}
    \label{fig:qual}
\end{figure*}

\textbf{Datasets.} We test SLUE on the LineMOD-Occlusion (\textsf{\small LM-O}) dataset~\cite{Brachmann14cvpr}, the YCB-Video (\textsf{\small YCB-V}) dataset~\cite{Xiang17rss-posecnn}, and the drone tracking scenario from~\cite{Shaikewitz24ral-CAST} (\textsf{\small CAST}). \textsf{\small LM-O} and \textsf{\small YCB-V} are object-focused datasets, with RGB images of $8$ and $21$ tabletop objects, respectively. For \textsf{\small LM-O} we omit the eggbox object following~\cite{Yang23arxiv-ransag}. For \textsf{\small YCB-V}, we test on the BOP subset of images~\cite{Hodan20eccvw-BOPChallenge} and include objects $1$ to $15$ (except for object $13$, the bowl). We omit object $13$ due to symmetry and objects $16$ through $21$ because there are few examples in the BOP subset. For a robotics setting, the drone tracking scenario \textsf{\small CAST} contains RGB images of a remote-controlled racecar taken from an aerial quadcopter. We use the entire image sequence which includes several laps of the racecar.  See \prettyref{fig:qual} for example images.

\subsection{Keypoint Bounds via Conformal Prediction}
\label{sec:keypoints}
\textbf{Keypoint Detection.} Given an RGB image, we use a learned front-end to extract the pixel positions of target object keypoints. In the \textsf{\small LM-O} and \textsf{\small YCB-V} datasets, we use the heatmap-based convolutional neural network from~\cite{Schmeckpeper22arxiv-singleRGBpose} for keypoint detection. This network was trained on the BOP synthetic image split~\cite{Hodan18eccv-BOPBenchmark}. Like~\cite{Yang23arxiv-ransag}, we take the pixel with highest confidence in each heatmap as detection. For \textsf{\small CAST}, we use the author-provided keypoints, which come from a ResNet~\cite{He16cvpr-ResNet} detector trained on real-world images~\cite{Shaikewitz24ral-CAST}.

\textbf{Conformal Prediction.} We obtain keypoint uncertainty bounds, the input to SLUE through eq. \eqref{eq:conformal_boundedprob}, using \emph{split conformal prediction} \cite{Shafer08jmlr-TutorialConformal,Angelopoulos24-ConformalTextbook}. Conformal prediction provides high-probability error bounds through a calibration process. During calibration, we test keypoint detection on a subset of images (separate from training or testing data) where the ground truth keypoint positions are known (\eg from human annotation). Under the assumption of \emph{exchangeability} (a slight relaxation of independent and identically distributed, see \cite{Angelopoulos24-ConformalTextbook}), performance on the calibration data predicts performance on test data.

We now describe our calibration routine. For each keypoint $\*y_i$ we compute uncertainty bounds $r_i(\alpha)$ at fixed confidence $\alpha$. For \textsf{\small LM-O} we calibrate on the BOP image subset (following~\cite{Yang23arxiv-ransag}), which contains $200$ images selected from the full dataset. For \textsf{\small CAST}, we calibrate on a randomly selected subset of $200$ images in the video sequence. In both datasets, we remove the calibration images from our evaluation set to ensure exchangeability with the test images. In \textsf{\small YCB-V}, we instead calibrate on $2000$ \emph{synthetic} images of the objects freshly generated with BlenderProc~\cite{Denninger23joss-blenderproc}. Calibrating on synthetic images may violate exchangeability (the calibration images are no longer randomly drawn from the test data), but is more realistic for a real-time system.

Given an RGB image, let $\*y_i$ denote the detected pixel location of keypoint $i$ and let $c_i\in[0,1]$ be its associated confidence (for \textsf{\small CAST}, the network confidence is always $1$). Using \eqref{eq:conformal_meas}, the ground truth pixel coordinate is $\*z_i \triangleq {\*K(\*R_\mathrm{gt}\*b_i + \*t_\mathrm{gt})}/({\*{\hat{e}}_3\cdot(\*R_\mathrm{gt}\*b_i + \*t_\mathrm{gt})})$. We measure performance during conformal calibration using the following confidence-weighted distance metric:
\begin{equation}
    \label{eq:calibration}
    s((\*y_i, c_i), \*z_i) = c_i\|\*y_i - \*z_i\|_p,
\end{equation}
where $p=2$ or $p=\infty$.

The keypoint uncertainty bound $r_i(\alpha)$ (visualized in \prettyref{fig:conformal_calibration}) is given by an adjusted $1-\alpha$ quantile of calibration distance scores. Specifically, let $\tilde{r}_i(\alpha)$ be the adjusted $(1-\alpha)(1+1/n)$ quantile of the scores of keypoint $i$ on all the calibration images, where $n$ is the number of calibration images picturing keypoint $i$. Then, $r_i(\alpha) \triangleq {\tilde{r}_i(\alpha)}/{c_i}$. By~\cite[Thm. 3.2]{Angelopoulos24-ConformalTextbook}, the high-probability error bound~\eqref{eq:conformal_boundedprob} holds when calibration and evaluation data are exchangeable.

\begin{table}[tb]
    \centering
    \caption{Mean Empirical Coverage (\%) of Ground Truth Keypoints (Kpts), Pose Uncertainty Constraint Sets~\eqref{eq:purse}, and SLUE Ellipsoids
}
    \label{tab:calibration}
    \adjustbox{width=\linewidth}
    {\begin{tabular}{rcccccc}
        \toprule
        & \multicolumn{3}{c}{$\alpha=0.1$} & \multicolumn{3}{c}{$\alpha=0.4$}\\
        \cmidrule(lr){2-4}\cmidrule(lr){5-7}
        & \multicolumn{1}{c}{Kpts} & \multicolumn{1}{c}{\eqref{eq:purse}} & \multicolumn{1}{c}{SLUE} & 
        \multicolumn{1}{c}{Kpts} & \multicolumn{1}{c}{\eqref{eq:purse}} & \multicolumn{1}{c}{SLUE}\\
        \midrule
        \textsf{CAST}
        & 90.9 & 56.5 & 86.5 & 58.3 & 5.6 & 19.5
        \\
        \textsf{LM-O}
        & 90.7 & 68.5 & 91.1 & 60.2 & 15.1 & 45.8
        \\
        \textsf{YCB-V}
        & 93.0 & 72.9 & 90.9 & 47.4 & 5.4 & 20.2
        \\
        \bottomrule
    \end{tabular}}
\end{table}

\subsection{Empirical Coverage Results.}
\label{sec:coverage}
To evaluate our high-probability uncertainty guarantees, we compute keypoint and pose \emph{coverage}. Coverage is defined as the percentage of keypoints (or poses) with uncertainty sets that contain the ground truth keypoint (or pose). In \prettyref{tab:calibration} we provide empirical coverage of ground truth keypoints in each keypoint uncertainty set at two selected confidences: $\alpha=0.1$ and $\alpha=0.4$. We also give coverage of the ground truth pose in both the pose uncertainty constraint set~\eqref{eq:purse} and the first-order bounding ellipsoid from SLUE~\eqref{eq:ellipse_understandable} (more details in~\prettyref{sec:results_ellipse}). The former roughly quantifies the probability $\beta$ in~\prettyref{prop:pus}. We caution that coverage results are subject to several confounding factors including the inaccuracy of hand-labeled ground truth poses given by the datasets.

We report the mean coverage for each dataset in \prettyref{tab:calibration}. For the datasets with real-world calibration (\textsf{\small LM-O} and \textsf{\small CAST}), the empirical coverage nearly matches the $1-\alpha$ guarantee. For \textsf{\small YCB-V}, which uses synthetic calibration, the $\alpha=0.1$ bounds slightly over cover while the $\alpha=0.4$ bounds under cover. We caution that \textsf{\small LM-O} and \textsf{\small YCB-V} ground truth poses are imprecise~\cite{Yang23arxiv-ransag}. Small errors in pose are magnified when propagated to keypoints.

For pose coverage, \prettyref{tab:calibration} shows the pose uncertainty constraint set suffers a $30-45\%$ drop in coverage compared to keypoints. There are $8-11$ keypoints per object for \textsf{\small LM-O}, and $7$ per object for \textsf{\small CAST}. This renders the worst-case bound $1-N\alpha$ (\prettyref{prop:pus}) near useless. The pose coverage is slightly better than the independence case, suggesting our bounds benefit from some positive correlation between the keypoints.

Lastly, the ellipsoidal uncertainty set at order $1$ (SLUE) is not overly conservative. For $\alpha=0.1$, it achieves around 90\% coverage (this is a coincidence, we can only guarantee the ellipsoid is more conservative than~\eqref{eq:purse}). The ellipsoid is a little tighter for $\alpha=0.4$. We provide quantitative and qualitative uncertainty bound results in the next section.

\subsection{Pose Uncertainty Bounds}
\label{sec:results_ellipse}
In this section, we compute ellipsoidal pose uncertainty bounds for the set~\eqref{eq:purse} using SLUE. In particular, we compute first and second-order bounding ellipsoids (\prettyref{sec:uncertainty}) using the pose estimates from PnP as center. After solving for these ellipsoids we marginalize to ellipsoids in translation and rotation space via projection as outlined in~\prettyref{sec:uncertainty_bounds}. We show quantitative and qualitative results for $\alpha=0.1$ and leave qualitative results for $\alpha=0.4$ to~\prettyref{appendix:conformal_exp}.

\begin{figure*}[htb!]
    \centering
    \subfloat[Translation CDF for \textsf{CAST}.]{\includegraphics[width=0.308\linewidth]{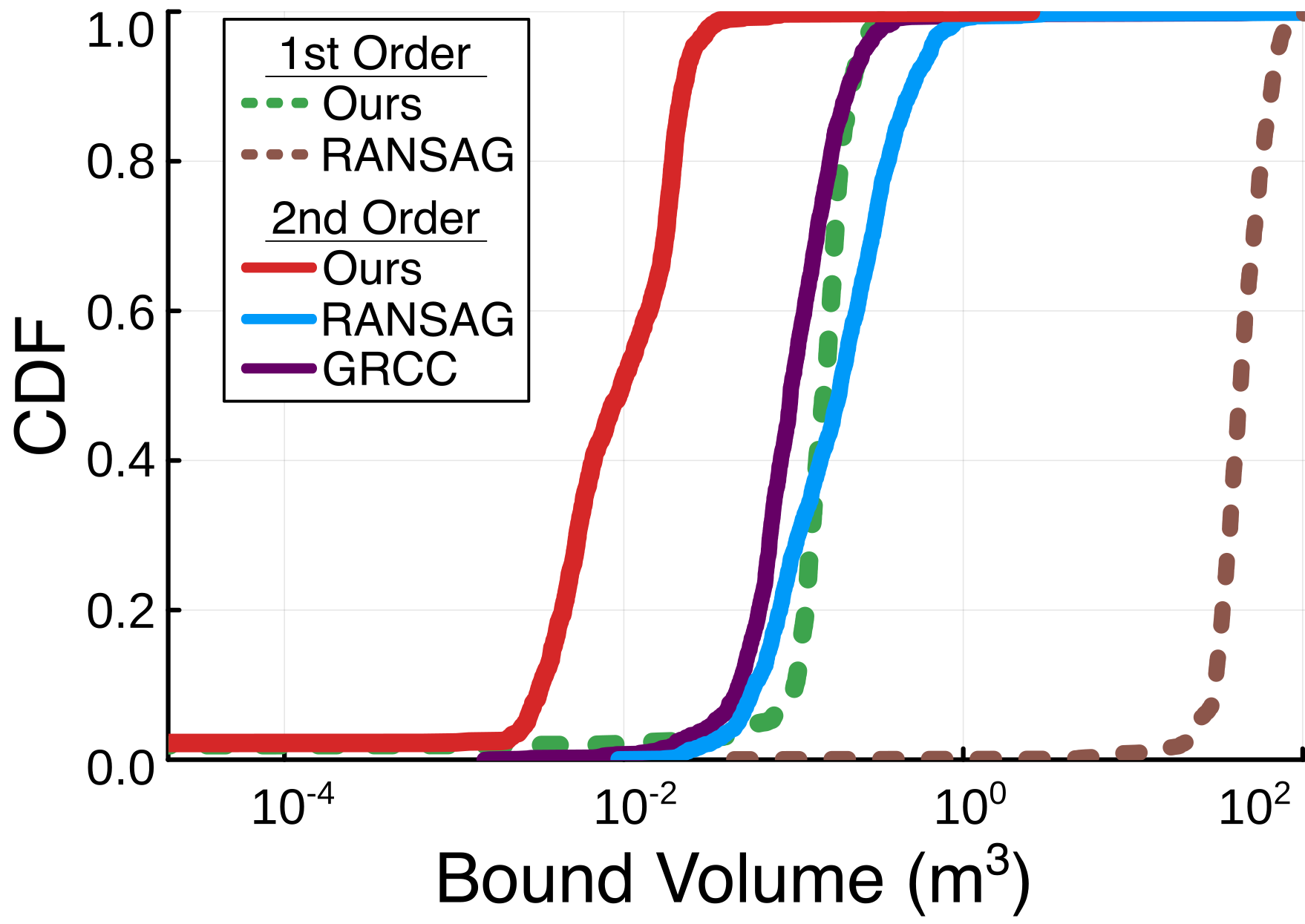}}
    \hspace{0.2cm}
    \subfloat[Translation CDF for \textsf{LM-O}.]{\includegraphics[width=0.32\linewidth]{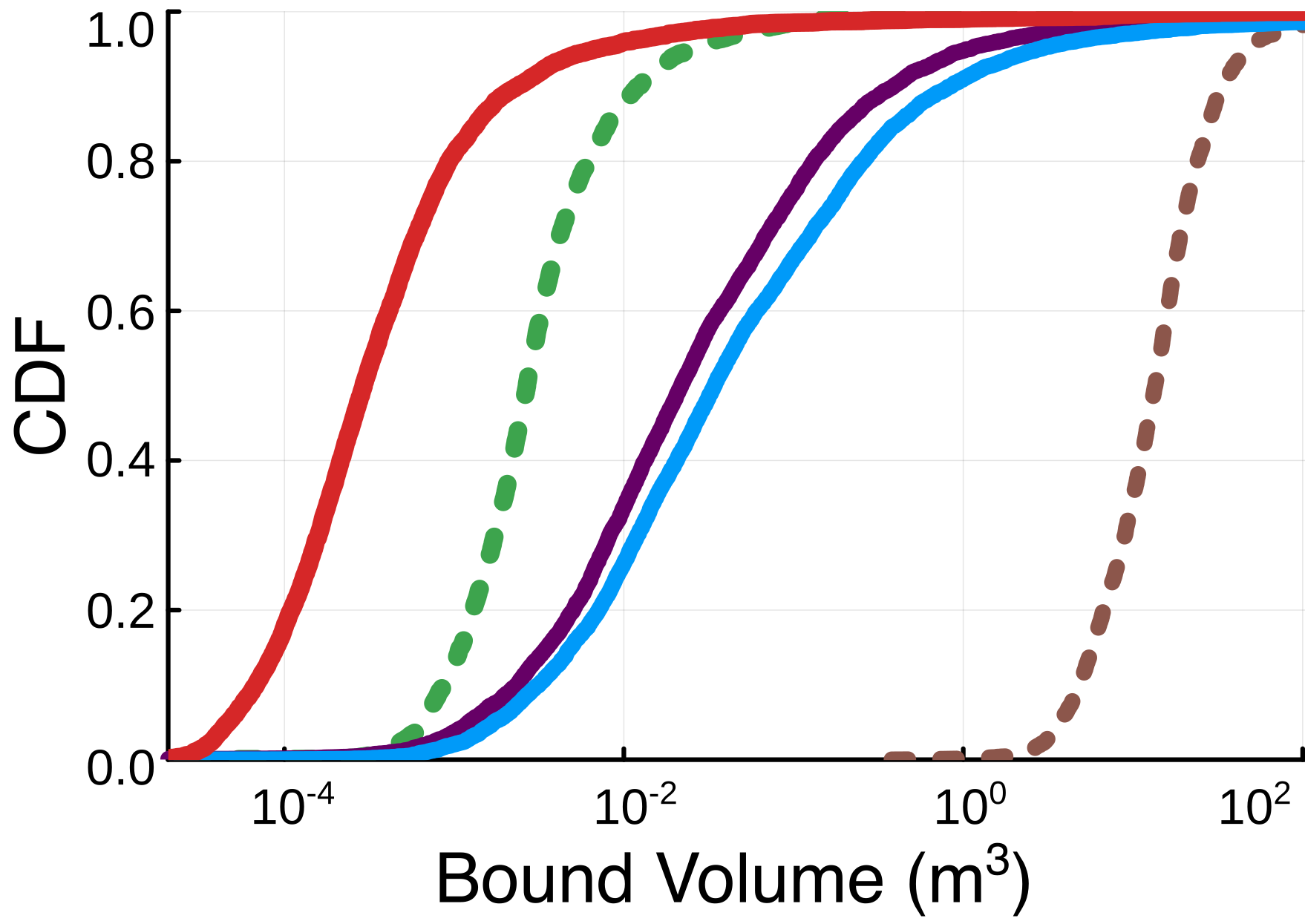}}
    \hspace{0.2cm}
    \subfloat[Translation CDF for \textsf{YCB-V}.]{\includegraphics[width=0.32\linewidth]{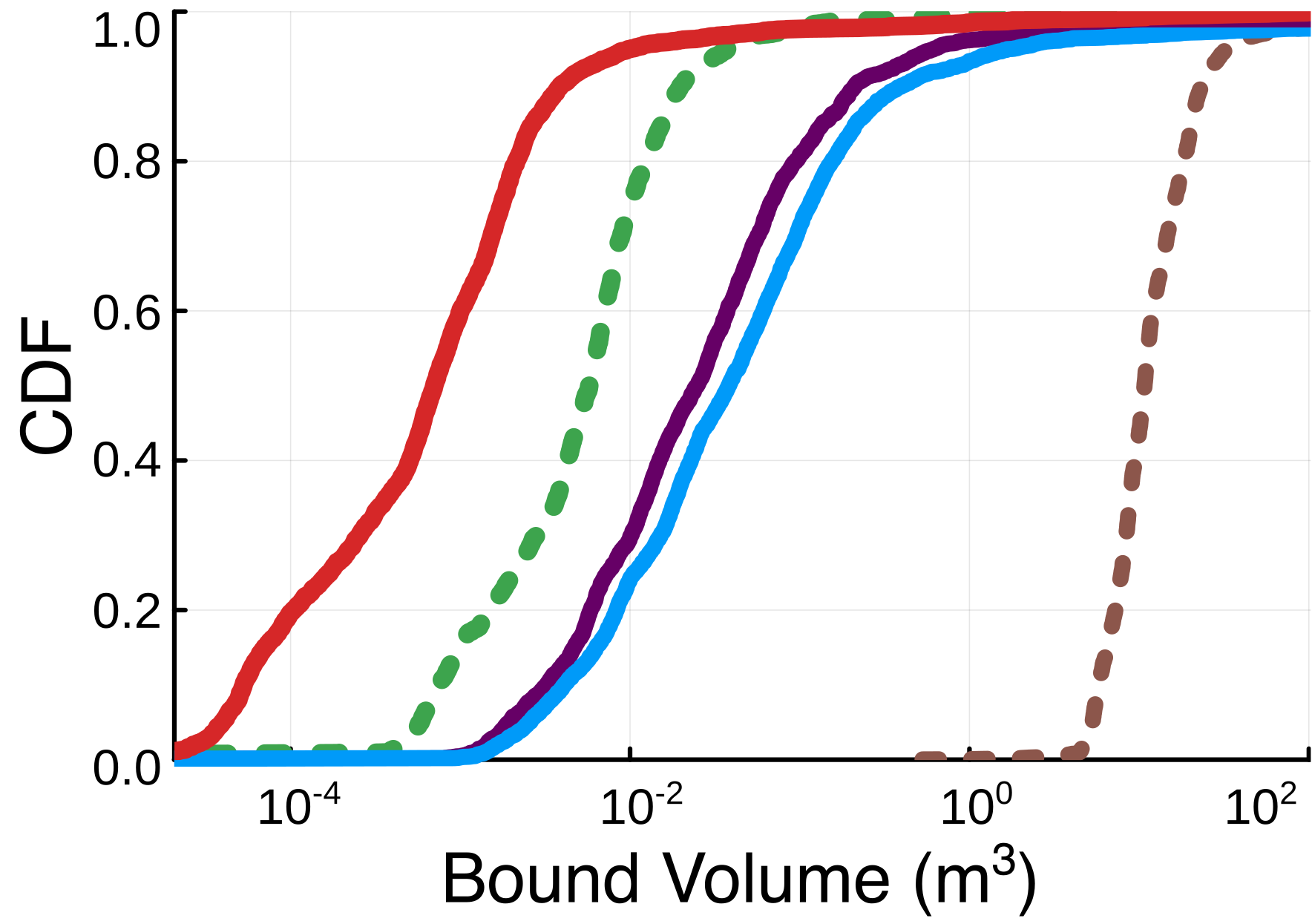}}\\
    \subfloat[Angular CDF for \textsf{CAST}.]{\includegraphics[width=0.32\linewidth]{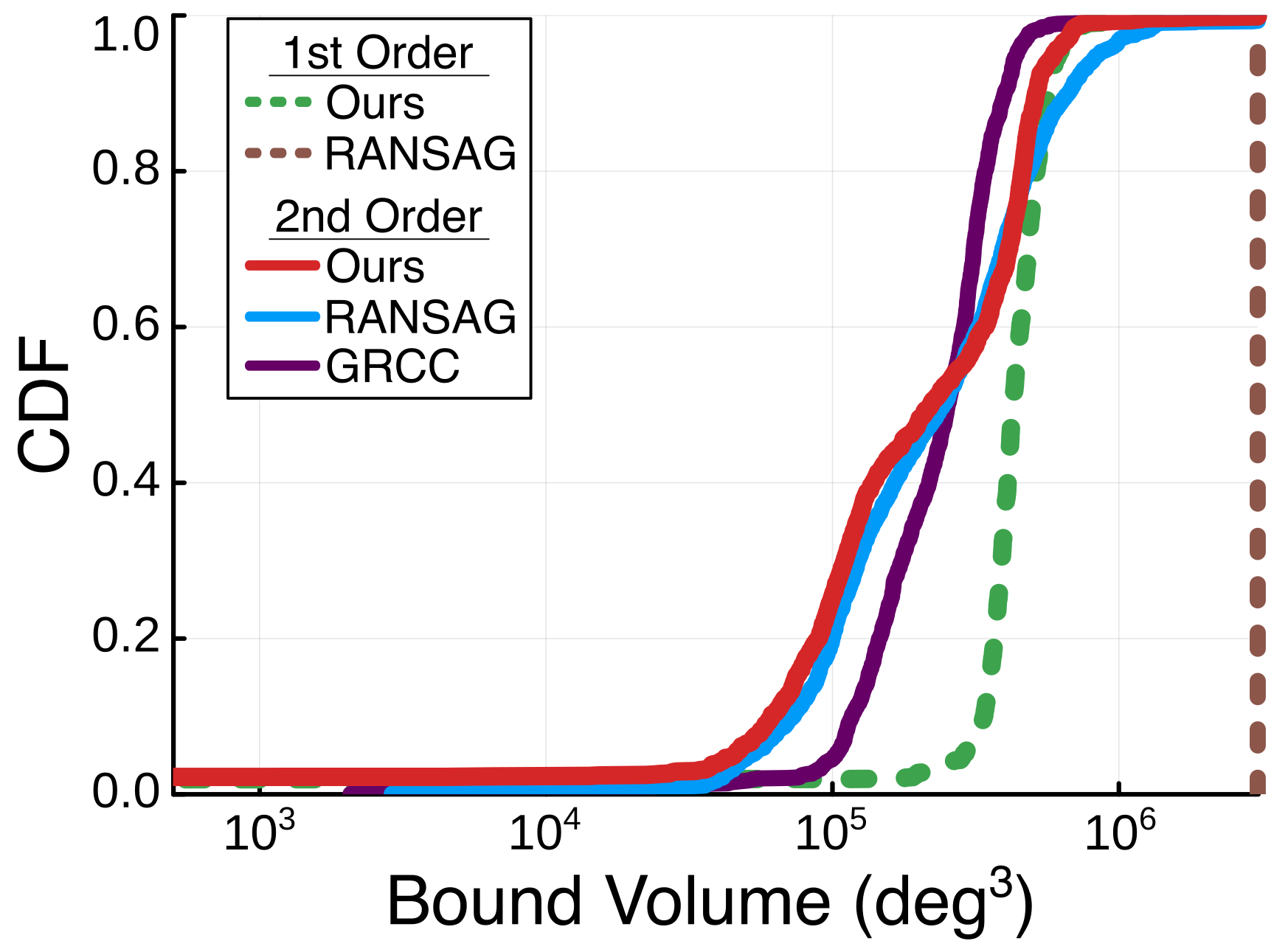}}
    \hspace{0.2cm}
    \subfloat[Angular CDF for \textsf{LM-O}.]{\includegraphics[width=0.32\linewidth]{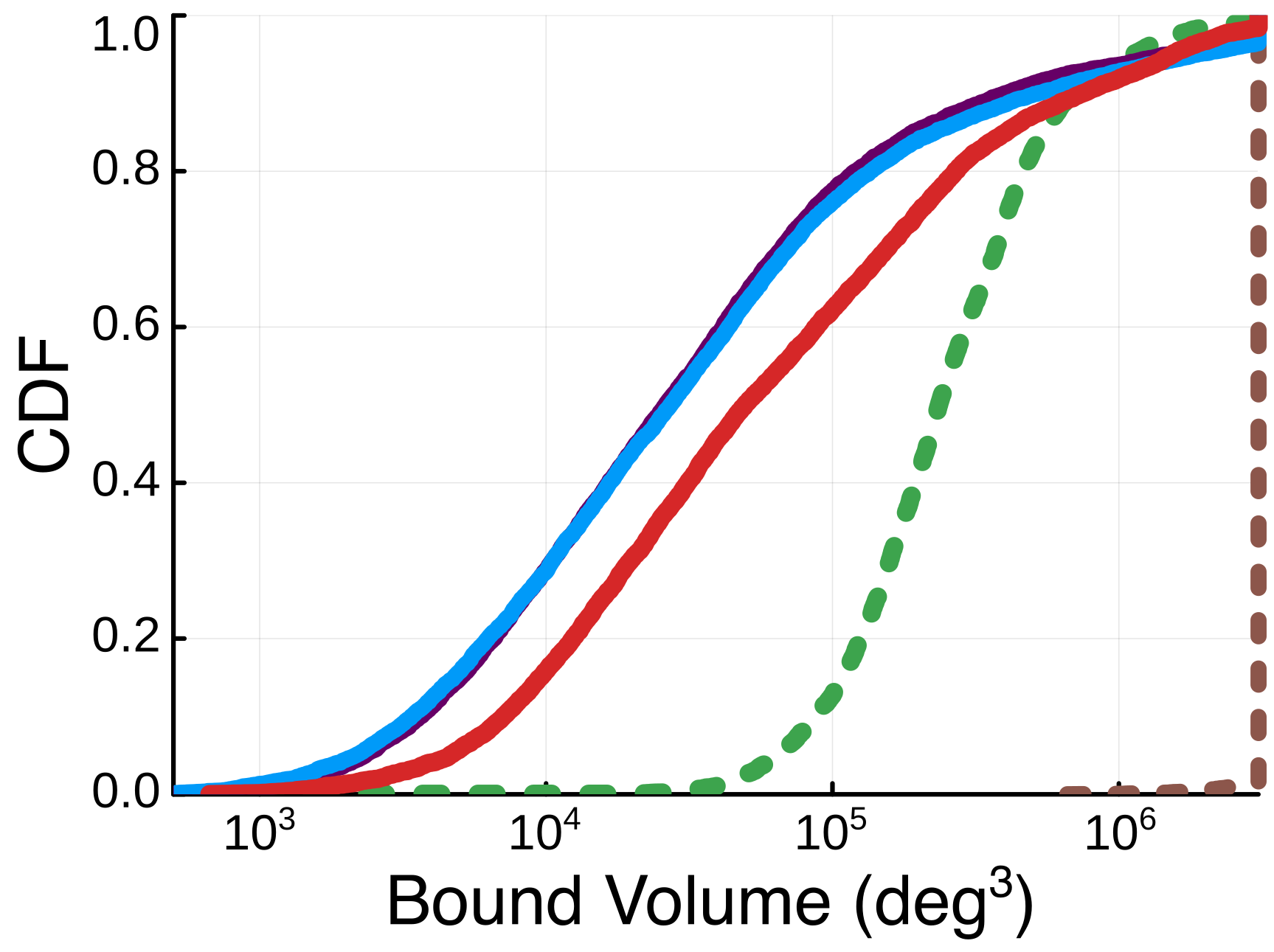}}
    \hspace{0.2cm}
    \subfloat[Angular CDF for \textsf{YCB-V}.]{\includegraphics[width=0.32\linewidth]{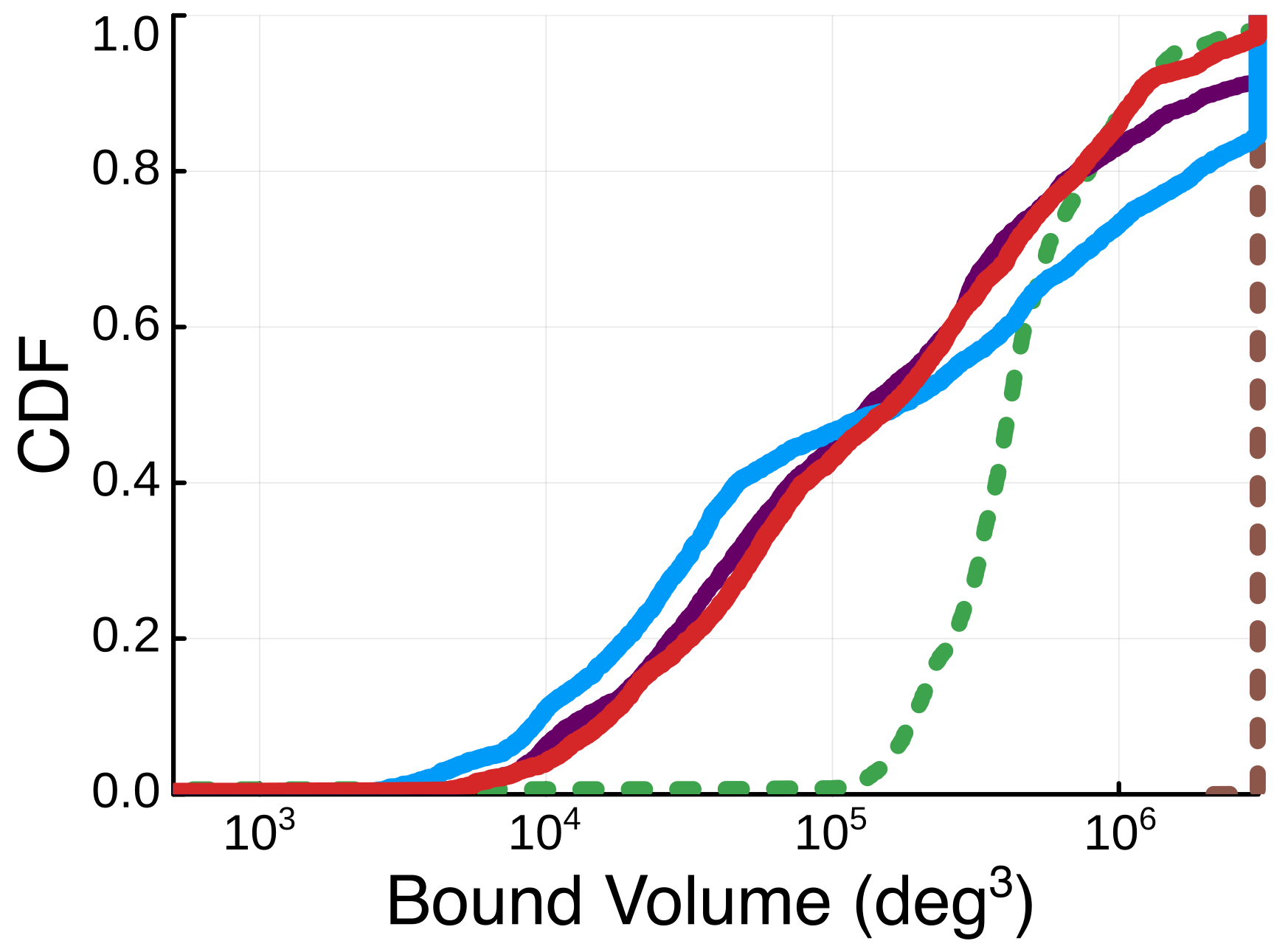}}
    \caption{\textbf{Volume of Translational and Angular Uncertainty Ellipsoids.} We plot the cumulative distribution function (CDF) of each approach at first and second-order. Our translation and angular bounds are significantly tighter at first-order, while only the translation bounds are conclusively tighter for a second-order relaxation.}
    \label{fig:uncertainty_bounds}
\end{figure*}

\textbf{Baselines.} We compare SLUE against RANSAG~\cite{Yang23arxiv-ransag} and GRCC~\cite{Tang24l4dc-setMembership}. Both methods solve relaxations of the bounding ellipsoid problem where the ellipsoid shape is fixed and optimize only for its scale, solving the rotation and translation problems separately. They also use $2$-norm keypoint uncertainty, which does not affect coverage guarantees. Following~\cite{Tang24l4dc-setMembership}, we use a spherical bound and solve for its scale. We test RANSAG with a first and second-order relaxation and GRCC with only a second-order relaxation (GRCC requires at least second-order). We use a custom Julia implementation of RANSAG and rely on the open-source MATLAB code for GRCC. Results exclude image frames where one or more methods reports failure, leaving $6070/6190$ images for \textsf{\small LM-O}, $3153/3256$ images for \textsf{\small YCB-V}, and $1587/1691$ images for \textsf{\small CAST}.

\textbf{Quantitative Results.} \prettyref{fig:uncertainty_bounds} shows the cumulative distribution function (CDF) of the volume of the translation and rotation ellipsoids for each method in \prettyref{fig:uncertainty_bounds}. The translation volume is $(4\pi/3) \log\det(\*H_t)$. For the orientation ellipsoid, the volume is $(4\pi/3) \theta_1 \theta_2\theta_3$, where $\theta_i$ is the $i$th principal axis of the ellipsoid $\*H_\theta$, in degrees, clipped above $90^\circ$ to be consistent with our assumptions. Each method computes an outer approximation of pose uncertainty at a fixed confidence, so a smaller volume generally indicates a better approximation.

As \prettyref{fig:uncertainty_bounds} shows, our approach (SLUE) excels at producing tight translation bounds. In particular, the first and second-order S-lemma ellipsoids are smaller than their baselines by multiple orders of magnitude. Interestingly, SLUE-$1$ is often much smaller than even the second-order ellipsoid generated by RANSAG or GRCC. The key reason for this improvement is optimizing directly for the shape of the uncertainty ellipsoid.

\begin{figure*}[htb!]
    \centering
    \subfloat[Translation uncertainty ellipsoids.]{\includegraphics[width=0.27\linewidth]{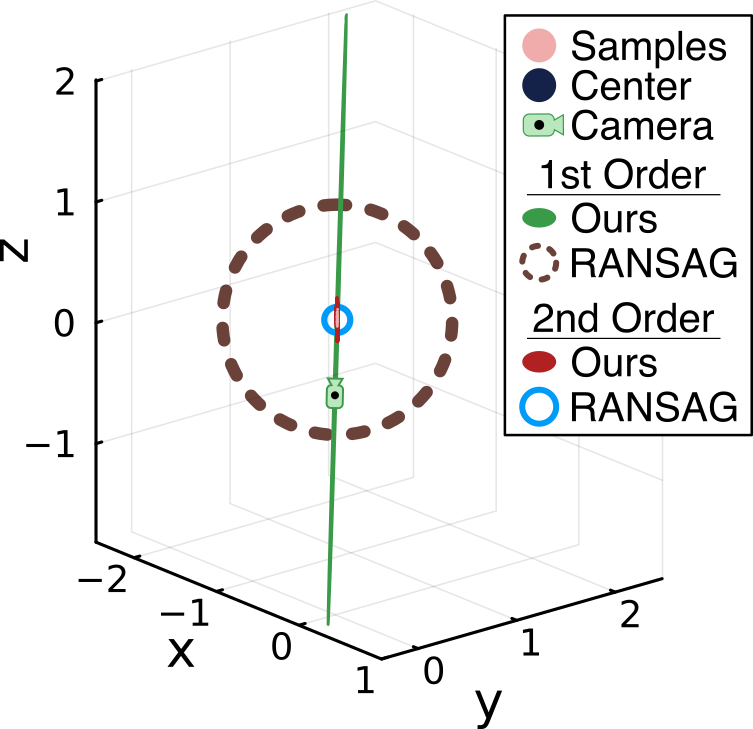}}
    \hspace{0.5cm}
    \subfloat[Close view of second-order translation uncertainty ellipsoids.]{\includegraphics[width=0.27\linewidth]{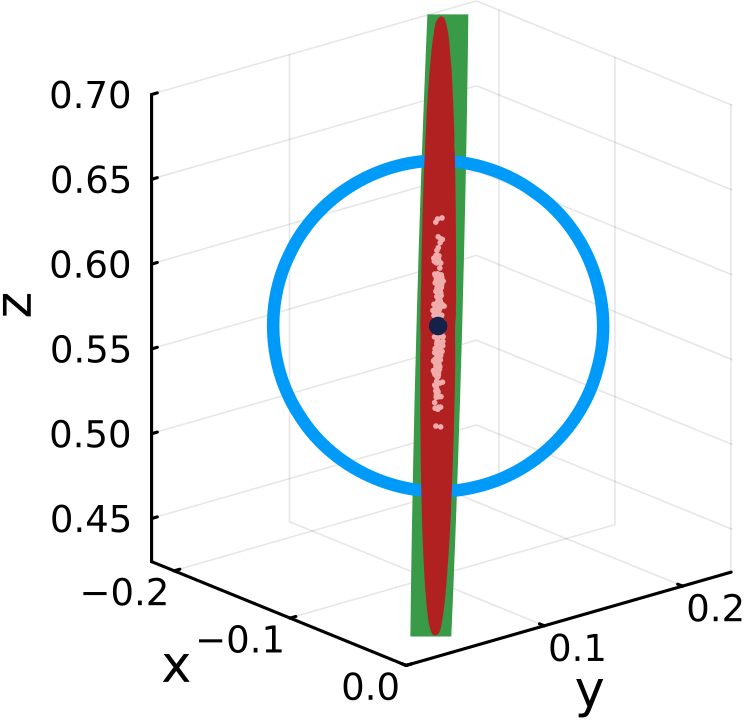}}
    \hspace{0.5cm}
    \subfloat[Angular uncertainty ellipsoids.]{\includegraphics[width=0.32\linewidth]{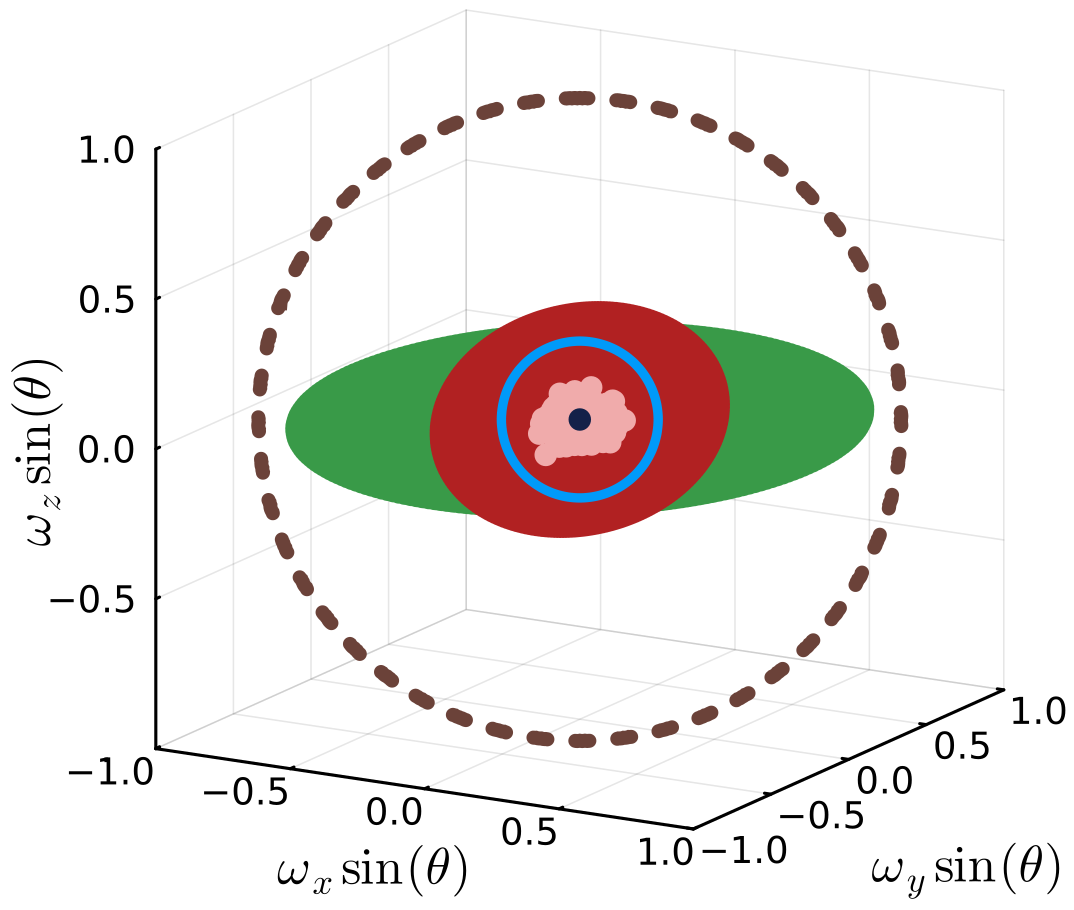}}
    \caption{\textbf{Qualitative bounding sets in rotation and translation space.} Projections of the first and second-order uncertainty ellipsoids generated by SLUE and RANSAG~\cite{Yang23arxiv-ransag} for the duck object on \textsf{\small LM-O} ($\alpha = 0.1$). SLUE optimizes directly for the shape of the ellipsoid, leading to expressive uncertainty sets. The ellipsoids enclose all samples from~\eqref{eq:purse}.}
    \label{fig:ellipses_10}
\end{figure*}

In the angular case, SLUE significantly outperforms RANSAG for a first-order relaxation but its advantages are less clear at second-order. A confounding variable is the choice of keypoint uncertainty set norm. RANSAG and GRCC use $p=2$, while SLUE uses $p=\infty$ for its computational benefits. Although the uncertainty set confidences are the same, the $p=\infty$ case may allow more extreme angular changes. Another difference is that RANSAG and GRCC optimize for an independent rotation bound instead of solving a single problem for a joint ellipsoid. Nevertheless, SLUE-$2$ is competitive across the datasets. A key strength of our approach is that it only requires solving a single optimization problem to obtain an angular and translational bounding ellipsoid. We quantify this computational advantage in \prettyref{sec:runtime}.

\textbf{Qualitative Results.}
To complement the CDF plots, we provide qualitative plots of the ellipsoids generated by SLUE and RANSAG. \prettyref{fig:ellipses_10} shows translational and angular uncertainty ellipsoids at first and second-order. The ellipsoids are plotted for duck object in frame $342$ of the \textsf{\small LM-O} dataset, but they are representative examples. \prettyref{fig:qual} shows the image-plane projection of the second-order ellipsoid bounds at this frame, along with representative frames for the other two datasets.

\prettyref{fig:ellipses_10}a captures the phenomenon of increased translation uncertainty along the optical axis of the camera. The first-order ellipsoid is clearly very conservative along the optical axis (it expands to include the origin), but remains relatively tight along orthogonal axes. The second-order ellipsoid (\prettyref{fig:ellipses_10}b) is small enough to serve as a useful translation bound, and clearly contains all the sampled points. Both SLUE ellipsoids are far more expressive than the RANSAG ellipsoids and accurately capture the uncertainty due to scale ambiguity. In contrast, the qualitative rotation ellipsoids (\prettyref{fig:ellipses_10}c) do not clearly show an elongated axis from the monocular setup. In this example, the first-order RANSAG orientation bound is greater than $90$ degrees, forming the unit sphere in our axis-angle coordinates. While the second-order SLUE ellipsoid is smaller, the first-order SLUE ellipsoid is slightly tighter along the $z$ axis. In general, the SLUE angular bounds are quite conservative relative to the samples and second-order RANSAG. This is an artifact of increased rotation uncertainty from using $\infty$-norm keypoint sets and the difficultly of bounding quadratic equality constraints with an ellipsoid.

For a more interpretable perspective, \prettyref{fig:qual} shows image-plane projections of poses on the surface of the joint second-order uncertainty ellipsoid. We show frames $568$ and $1000$ for \textsf{\small CAST}, frame $342$ for \textsf{\small LM-O}, and frame $501125$ for \textsf{\small YCB-V}. Due to reprojection, the uncertainty bounds are dominated by the pose with translation closest to the camera. The uncertainty bounds are relatively tight and a much richer description of pose than a single estimate. The image-plane bounds benefit from the joint uncertainty representation, which maintains correlation between translation and orientation uncertainty. For \textsf{\small CAST} we only plot translation uncertainty (no 3D model is available). The translation bounds are relatively small. We show additional results and ablations in \prettyref{appendix:conformal_exp}.

\subsection{Runtime Breakdown}
\label{sec:runtime}
We conclude our experiments with a comparison of runtimes on \textsf{\small LM-O}. In~\prettyref{tab:runtimes}, the keypoint detector is implemented in PyTorch and runs on an M3 Mac. Our method (S-Lemma Bounds) and RANSAG are implemented in Julia, while GRCC uses the released MATLAB implementation. All methods are running on a single CPU thread. For our method we omit the translation and rotation marginalization (which are simple linear algebra operations and have marginal effect on runtime). RANSAG and GRCC compute translation and rotation bounds separately;~\prettyref{tab:runtimes} adds the two times. The GRCC runtime is inflated due to its MATLAB implementation; we indicate this with an asterisk. Runtimes are reported for confidence $\alpha=0.1$.

\begin{table}[tb]
    \centering
    \caption{Comparison of Mean Runtimes on \textsf{\small LM-O}}
    \label{tab:runtimes}
    \begin{tabular}{rcc}
        \toprule
        Relaxation order & $\kappa=1$ (ms) & $\kappa=2$ (ms) \\
        \midrule
        Keypoint detection & 48.8 & 48.8\\
        SLUE & 39.0 & 548\\
        RANSAG~\cite{Yang23arxiv-ransag} & 238 & 3,671\\
        GRCC~\cite{Tang24l4dc-setMembership} & - & 36,531*\\
        \bottomrule
    \end{tabular}
\end{table}

Our method is significantly faster than prior work, running faster than even keypoint detection at first-order. We benefit from a simpler infinity-norm pose uncertainty set, solving only a single optimization problem, and from the more flexible ellipsoid volume objective. The computational benefits are even more extreme at second-order, where the quaternion implementation (\prettyref{appendix:quaternions}) of SLUE reduces the number of decision variables.

 \section{Conclusion}
We presented SLUE: a hierarchy of algorithms to generate ellipsoidal object pose uncertainty bounds which are statistically guaranteed to cover the true object pose with high probability. SLUE may be used to add statistically rigorous uncertainty to any pose estimator. It relies only on the assumption of keypoint noise which is bounded with high-probability. While the set of object pose constraints is non-convex, we use a generalization of the S-lemma to formulate a convex relaxation for the minimum volume ellipsoid bounding the pose uncertainty set. This relaxation admits a sum-of-squares hierarchy guaranteed to converge to the true minimum volume bounding ellipsoid. From the joint ellipsoid bounds we use a simple projection scheme to bound translation and axis-angle error separately. Real-world evaluations show our method achieves tighter translation bounds and competitive angular bounds while being significantly faster than prior work.

\textbf{Limitations and Future Work.} While our work provides a rigorous and practical strategy for obtaining tight pose uncertainty bounds from keypoint error bounds, we do not make use of the high-probability nature of the keypoint bounds. Future work should consider that some keypoints may be outside of their uncertainty sets, which leads to a combinatorial multiple-testing problem. This is already well-understood in the maximum likelihood literature as outlier-robust estimation and could be applied to obtain better pose estimates and uncertainty. For some computational penalty, our S-lemma relaxation could be modified to estimate the ellipsoid center jointly with its shape~\cite{Nie05optim-MinimalEnclosingEllipsoid}. It would also be useful to capture orientation uncertainty due to, for example, object symmetry. This would require a more expressive primitive or keypoint conformal score. From a more practical perspective, robotics applications often see sequential frames picturing the same object; in the tracking setting, the fast first-order uncertainty could be fused or incorporated into an active perception scheme.

\appendices
\renewcommand{\theequation}{A\arabic{equation}}
\renewcommand{\thetheorem}{A\arabic{theorem}}

\section{$2$-Norm Pose Uncertainty Constraint Set}
\label{appendix:2norm}
For completeness and alignment with \cite{Yang23arxiv-ransag}, we give the pose uncertainty constraint set when measurement noise is bounded in $2$-norm with high probability. As before, the measurement noise bounds give the following reprojection constraint:
\begin{equation}
    \label{eq:yminusgt2}
    \left\|\*y_i - \frac{\*K(\*R\*b_i + \*t)}{\*{\hat{e}}_3\cdot(\*R\*b_i + \*t)}\right\|_2 = \|\'\eps_i\|_2 \leq r_i.
\end{equation}

The chirality constraints are the same as~\eqref{eq:frontofcamera}. For the backprojection constraints, square the inequality~\eqref{eq:yminusgt2} and multiply through by the depth. The following inequality holds with probability at least $1-\alpha_i$:
\begin{equation}
    \label{eq:purse2_setconstraints}
    \tag{$\text{BP}_2$}
    \left\|
        (\*y_i\*{\hat{e}}_3^\T - \*K)(\*R\*b_i + \*t) 
    \right\|^2_2
    \leq
    (r_i\*{\hat{e}}_3^\T(\*R\*b_i + \*t))^2\!.\!
\end{equation}

The pose uncertainty constraints are simply the set of \eqref{eq:purse2_setconstraints} and \eqref{eq:frontofcamera} for each keypoint. Note that there is only one backprojection inequality for each keypoint, but it is \emph{quadratic} in $\*t$ and $\*R$. The $2$-norm uncertainty may be used in place of $\infty$-norm uncertainty for the remainder of SLUE.

\section{Explicit Forms for Constraint Matrices}
\label{appendix:roteq}
This section gives explicit forms for the constraint matrices $\*Q_k$ and $\*A_i$ used in~\prettyref{sec:uncertainty}. We write the constraint matrices in sparse form, specifying only the nonzero elements. The notation $(i,j,l)$ means $Q_{i,j}=Q_{j,i}=l$. Let $\*R\in\SO3$ and denote its columns by $\*r_i$, $i\in[3]$. Recall that $\*x \triangleq [1, \*r^\T \*t^\T]^\T$.

We first write the equality constraint matrices $\*Q_k$, which enforce $\*R\in\SO3$. Three matrices enforce constrain the columns of $\*R$ to unit 2-norm ($\|\*r_i\|^2 = 1$):
\begin{equation}
    \begin{array}{rl}
        \*Q_1:& (2,2,1), (3,3,1), ( 4,4,1), (1,1,-1),\\
        \*Q_2:& (5,2,1), (6,3,1), ( 7,4,1), (1,1,-1),\\
        \*Q_3:& (8,2,1), (9,3,1), (10,4,1), (1,1,-1).
    \end{array}
\end{equation}

Three matrices enforce orthogonal columns:
\begin{equation}
    \begin{array}{crl}
        \*r_1\cdot\*r_2 = 0 \Rightarrow & \hspace{-5pt} \*Q_4 \hspace{-8pt} &: (2,5,1), (3,6,1), (4,7,1),\\
        \*r_1\cdot\*r_3 = 0 \Rightarrow & \hspace{-5pt} \*Q_5 \hspace{-8pt} &: (2,8,1), (3,9,1), (4,10,1),\\
        \*r_3\cdot\*r_2 = 0 \Rightarrow & \hspace{-5pt} \*Q_6 \hspace{-8pt} &: (5,8,1), (6,9,1), (7,10,1).
    \end{array}
\end{equation}

The final $9$ equality matrices enforce the right hand rule cross products between the columns of $\*R$, element-wise. The constraint $\*r_1\times\*r_2 = \*r_3$ gives:
\begin{equation}
    \begin{array}{rl}
            \*Q_7:& (3,7,1), (4,6,-1), (1,8,-1),\\
            \*Q_8:& (4,5,1), (2,7,-1), (1,9,-1),\\
            \*Q_9:& (2,6,1), (3,5,-1), (1,10,-1).
    \end{array}
\end{equation}
The constraint $\*r_2\times\*r_3 = \*r_1 $ gives:
\begin{equation}
    \begin{array}{rl}
            \*Q_{10}:& (6,10,1), (7,9,-1), (1,2,-1),\\
            \*Q_{11}:& (7,8,1), (5,10,-1), (1,3,-1),\\
            \*Q_{12}:& (5,9,1), (6,8,-1), (1,4,-1).
    \end{array}
\end{equation}
Lastly, $\*r_3\times\*r_1 = \*r_2$ gives:
\begin{equation}
    \begin{array}{rl}
            \*Q_{13}:& (9,4,1), (10,3,-1), (1,5,-1),\\
            \*Q_{14}:& (10,2,1), (8,4,-1), (1,6,-1),\\
            \*Q_{15}:& (8,3,1), (9,2,-1), (1,7,-1).
    \end{array}
\end{equation}

In~\prettyref{eq:purse_qcqp}, the inequality constraint matrices $\*A_i$ correspond to the backprojection and chirality constraints. The first $N$ matrices capture chirality. Using the Kronecker identity, the chirality constraint is:
\begin{equation}
    \label{eq:focwhynot}
    \eqref{eq:frontofcamera}_i \iff \ez^\T ((\*b_i^\T \otimes \eye_3)\*r + \*t) > 0.
\end{equation}

Noting all terms are linear, \eqref{eq:focwhynot} is:
\begin{equation}
    \eqref{eq:frontofcamera}_i \iff \begin{bmatrix}
        \ez^\T(\*b_i^\T \otimes \eye_3) & \ez^\T
    \end{bmatrix}
    \begin{bmatrix}
        \*r \\ \*t
    \end{bmatrix} > 0.
\end{equation}

This expression gives the first row and column of $\*A_i$, which correspond to multiplying $\*r$ and $\*t$ by $1$. That is,
\begin{equation}
    \label{eq:chiralitymatrix}
    \*A_i = 
    -
    \begin{bmatrix}
    0 & \ez^\T(\*b_i^\T \otimes \eye_3) & \ez^\T\\
    (\*b_i \otimes \eye_3)\ez & \*0 & \*0\\
    \ez & \*0 & \*0
    \end{bmatrix}\!.
\end{equation}

The backprojection constraints are also rewritten using the Kronecker identity. For each keypoint $i$, let:
\begin{equation}
    \*d_{ij}^\pm \triangleq
    \begin{bmatrix}
        r_i (\*b_i^\T \otimes \eye_3)^\T \ez 
        \pm
        (\*K^\T - \ez \*y_i^\T)(\*b_i \otimes \eye_3) \*{\hat e}_j
        \\
        r_i \ez \pm (\*K^\T - \ez \*y_i^\T)\*{\hat e}_j
    \end{bmatrix}\!,
\end{equation}
for $j=1,2$. There are four backprojection constraints ($\pm$ for $j=1,2$). In linear form,
\begin{equation}
    \eqref{eq:bpinf}_i \iff \left(\*d_{ij}^\pm\right)^\T 
    \begin{bmatrix}
        \*r \\ \*t
    \end{bmatrix}
    \leq 0.
\end{equation}

These are $4N$ constraints and may be written in matrix form as~\eqref{eq:chiralitymatrix} to give the remaining $\*A_i$ matrices.

\section{Sum-of-Squares S-lemma}
\label{appendix:sosslem}

\subsection{Example SOS Inequality}
The SOS inequality is a convenient shorthand for a linear matrix inequality (LMI) arising from polynomials. To illustrate, we rewrite \eqref{eq:sosslem} for a second-order relaxation ($\kappa=1$) and $\*x = [1, x_2]^\T$. Assume without loss of generality that there are no equality constraints ($M=0$) and let $\lambda_i(\*x) = \*x^\T \*A \*x$ for $\*A\succeq 0$. Eq. \eqref{eq:sosslem} is:
\begin{equation}
    \label{eq:start}
    \*x^\T \*W \*x \preceq_{sos} \sum_{i=1}^N \*x^\T \*A_i \*x \*x^\T \*Y_i \*x.
\end{equation}

The right hand side is a fourth-order polynomial. To convert to an LMI we write each term in quadratic form with $[\*x]_2\triangleq [1, x_2, x_2^2]^\T\in\RR^{3}$. Consider $i=1$ and let:
\begin{equation}
    \*A_1 \triangleq \begin{bmatrix}
        a & b \\ b & c
    \end{bmatrix}\!.
\end{equation}

The polynomial $\*x^\T \*A_1 \*x$ is the sum of $3$ terms:
\begin{equation}
    \*x^\T \*A_i \*x = a + 2bx_2 + cx_2^2
\end{equation}

Thus, the product $\*x^\T \*A_1 \*x \*x^\T \*Y_1 \*x$ can be written as $[\*x]_2^\T \*{\hat Y}_1 [\*x]_2$, where $\*{\hat Y}_1$ is:
\begin{equation}
    \resizebox{0.9\columnwidth}{!}
    {$\displaystyle
    \left[
    \begin{array}{@{}c c@{}}
        a\*Y_1 & 
        \begin{matrix}
        0 \\ 0
        \end{matrix}
        \\
        \begin{matrix}
        0 & 0 \\
        \end{matrix} & 0
    \end{array}
    \right]
    +
    \left[
    \begin{array}{@{}c c@{}}
        \begin{matrix}
        0 \\ 0
        \end{matrix}
        & b\*Y_1
        \\
        0 &
        \begin{matrix}
        0 & 0
        \end{matrix}
    \end{array}
    \right]
    +
    \left[
    \begin{array}{@{}c c@{}}
        \begin{matrix}
        0 & 0
        \end{matrix}
        & 0
        \\
        b\*Y_1 & 
        \begin{matrix}
            0 \\ 0
        \end{matrix}
    \end{array}
    \right]
    +
    \left[
    \begin{array}{@{}c c@{}}
        0 &
        \begin{matrix}
        0 & 0
        \end{matrix}
        \\
        \begin{matrix}
            0 \\ 0
        \end{matrix}
        & c\*Y_1
    \end{array}
    \right]\!.
    $}
\end{equation}

Similarly, $\*x^\T \*W \*x$ can be written as $[\*x]_2^\T\*{\hat W}[\*x]_2$ where:
\begin{equation}
    \*{\hat W} \triangleq \begin{bmatrix} \*W & \*0\\ \*0 & 0 \end{bmatrix}\!.
\end{equation}

Repeating for all $i$, the matrix inequality form of \eqref{eq:start} is:
\begin{equation}
    \*{\hat W} \preceq \sum_{i=1}^N \*{\hat Y}_i,
\end{equation}
subject to $\*A_i \succeq 0$ for all $i$. There are automatic tools including TSSOS~\cite{Wang20arXiv-cs-tssos} which perform this calculation.

\subsection{Proof of \prettyref{thm:convergence}: Hierarchy Convergence}
Let $p^\star$ denote the optimal volume of the true minimum volume ellipsoid bounding \eqref{eq:purse_qcqp}. Let $d^\star_\kappa$ be the optimal ellipsoid volume from the $(\kappa+1)$-order dual given in eq. \eqref{eq:conformal_ellipsoid_higherorder}. \prettyref{prop:slemma_sos} guarantees $d^\star_\kappa \leq p^\star$ for all $\kappa$. Our proof will perturb $p^\star$ slightly and show it lower bounds $d^\star_\kappa$ for sufficiently large $\kappa$. 

We use the following positivestellensatz which holds whenever the set \eqref{eq:purse_qcqp} is compact.
\begin{theorem}[\cite{Nie05optim-MinimalEnclosingEllipsoid}]
    \label{thm:putinar}
    Define a compact set $\mathcal{P}\triangleq \{\*x\in\RR^n : h_i(\*x) \leq 0,\ i\in[N]\}$ with polynomials $h_i\in\RR[\*x]$. Any polynomial $p(\*x) < 0$ for $\*x\in\mathcal{P}$ satisfies:
    \begin{equation}
        p(\*x) \preceq_{sos} \sum_{i=1}^N \lambda_i(\*x) h_i(\*x),
    \end{equation}
    where each $\lambda_i(\*x)$ is an SOS polynomial.
\end{theorem}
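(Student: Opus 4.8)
The statement is Putinar's Positivstellensatz, lightly disguised by the $\preceq_{sos}$ notation. Unwinding the definition, the claim $p \preceq_{sos} \sum_{i=1}^N \lambda_i h_i$ asserts the existence of an SOS polynomial $\sigma_0 \triangleq \sum_i \lambda_i h_i - p$. Setting $f \triangleq -p$ and $g_i \triangleq -h_i$ (so that $g_i \geq 0$ on $\mathcal{P}$), this rearranges to
\[
    f \;=\; \sigma_0 + \sum_{i=1}^N \lambda_i\, g_i, \qquad \sigma_0,\ \lambda_i \text{ SOS,}
\]
i.e. $f$, which is \emph{strictly} positive on $\mathcal{P} = \{\*x : g_i(\*x) \geq 0\}$, lies in the quadratic module $M$ generated by $g_1,\dots,g_N$. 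The plan is to reproduce the functional-analytic proof of this membership. A preliminary step handles a well-known gap: bare compactness of $\mathcal{P}$ does not force $M$ to be \emph{Archimedean}. I therefore take (without loss of generality, as is done in~\cite{Nie05optim-MinimalEnclosingEllipsoid} and as holds in our application, where the $\SO3$ equalities $\|\*r_i\|_2^2=1$ and the geometry bound the variables) a redundant ball constraint $R^2 - \|\*x\|_2^2 \geq 0$ to be among the $g_i$. This leaves $\mathcal{P}$ and the representation form unchanged but renders $M$ Archimedean, meaning $C - \|\*x\|_2^2 \in M$ for some constant $C$, equivalently that the constant $1$ is an order unit for the cone $M$.

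The core is a separation-plus-representation argument. Suppose for contradiction $f \notin M$. Since $M$ is a convex cone in $\RR[\*x]$ and the order-unit property gives $1$ an algebraic interior point of $M$, the Eidelheit/Hahn--Banach separation theorem produces a nonzero linear functional $L : \RR[\*x] \to \RR$ with $L \geq 0$ on $M$ and $L(f) \leq 0$; normalize $L(1) = 1$. Because $L$ is nonnegative on every square, the bilinear form $(q,q') \mapsto L(q q')$ is positive semidefinite, and the Archimedean bound $C - \|\*x\|_2^2 \in M$ forces the moments $L(\|\*x\|_2^{2k})$ to grow no faster than $C^k$, so $L$ is a controlled moment sequence. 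By Haviland's theorem (equivalently a GNS construction followed by the spectral theorem for the commuting multiplication-by-$x_j$ operators), $L$ is represented by a positive Borel measure $\mu$ with $L(q) = \int q\, d\mu$. Finally, the inequalities $L(\sigma\, g_i) \geq 0$ for all SOS $\sigma$ pin the support, giving $\mathrm{supp}\,\mu \subseteq \{g_i \geq 0\} = \mathcal{P}$.

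The conclusion is then immediate: $L(f) = \int_{\mathcal{P}} f\, d\mu$, and since $f > 0$ on $\mathcal{P}$ while $\mu(\mathcal{P}) = L(1) = 1 > 0$, we obtain $L(f) > 0$, contradicting $L(f) \leq 0$. Hence $f \in M$, which is exactly the claimed SOS representation. I expect the representation step to be the main obstacle: the module $M$ need not be closed, so a crude separation would only place $f$ in its closure, and it is precisely the Archimedean condition that upgrades this to genuine membership by guaranteeing the separating functional is a \emph{moment} functional and by localizing its support inside $\mathcal{P}$. Verifying the moment growth bounds and the support localization $\mathrm{supp}\,\mu\subseteq\mathcal{P}$ are the delicate parts; the strict positivity of $f$ on the compact $\mathcal{P}$ then closes the argument, and the algebraic rearrangement back into the $\preceq_{sos}$ form is routine.
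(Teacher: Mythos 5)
Your proposal cannot be matched line-by-line against an in-paper argument, because the paper never proves \prettyref{thm:putinar}: the statement is imported by citation from~\cite{Nie05optim-MinimalEnclosingEllipsoid} and used as a black box inside the proof of \prettyref{thm:convergence}. What you have written is a correct outline of the standard functional-analytic proof of Putinar's Positivstellensatz: Eidelheit/Hahn--Banach separation using $1$ as an order unit of the quadratic module $M$, positivity of the separating functional $L$ on squares, a Haviland/GNS-plus-spectral-theorem representation of $L$ as integration against a positive Borel measure $\mu$, support localization $\mathrm{supp}\,\mu \subseteq \mathcal{P}$ from $L(\sigma g_i)\geq 0$ for all SOS $\sigma$, and the contradiction $0 \geq L(f) = \int_{\mathcal{P}} f\,d\mu > 0$. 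The steps you flag as delicate (that crude separation only reaches the closure of $M$, and that the Archimedean bound is what upgrades the functional to a moment functional with localized support) are exactly the right pressure points, so as a blind reconstruction of the cited result the sketch is sound.

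One part of your write-up deserves emphasis because it is more than housekeeping: your preliminary Archimedean step silently repairs an imprecision in the theorem as printed. Bare compactness of $\mathcal{P}$ does \emph{not} make the quadratic module generated by the $-h_i$ Archimedean (Jacobi--Prestel-type counterexamples exist), so the statement with single SOS multipliers $\lambda_i(\*x) h_i(\*x)$ and no ball constraint is not literally true under compactness alone; without the Archimedean hypothesis one only gets Schm\"udgen's representation, which requires products $h_i h_j$ among the generators and would correspondingly enlarge the multiplier set in the dual program \eqref{eq:conformal_ellipsoid_higherorder}. Appending a redundant constraint $R^2 - \|\*x\|_2^2 \geq 0$ is the standard fix, is harmless for \eqref{eq:purse_qcqp} (the $\SO3$ or unit-quaternion equalities bound the rotation variables, and compactness of the uncertainty set bounds $\*t$), and leaves the convergence argument of \prettyref{thm:convergence} intact --- but it should be stated explicitly as a hypothesis or as an added constraint, both in the theorem and in the hierarchy as implemented, for the guarantee to apply as claimed.
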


\prettyref{thm:putinar} is a powerful result connecting sum-of-squares polynomials with strictly positive polynomials. By replacing each equality with two inequalities, \eqref{eq:purse_qcqp} is composed of $5N+30$ polynomial inequality constraints. Thus, we can apply the theorem to any polynomial strictly negative on \eqref{eq:purse_qcqp}.

We'll choose a perturbation of the optimal ellipsoid. Let $g_{\*H}(\*x)\leq 0$ be the ellipsoid \eqref{eq:ellipse_problem} defined by $\*H$. Let the primal optimal ellipsoid matrix be $\*H^\star$. Perturb $\*H^\star$ to a slightly larger ellipsoid by $\*{\tilde H} \triangleq \*H^\star - \epsilon \eye_{12}$ with $\epsilon > 0$. We have $\log\det(\*{\tilde H}) = p^\star - o(\epsilon)$. The perturbation implies $g_{\*{\tilde H}}(\*x) < 0$ on the set \eqref{eq:purse_qcqp}. By \prettyref{thm:putinar}:
\begin{equation}
    g_{\*{\tilde H}}(\*x) \preceq_{sos} \sum_{i=1}^N \lambda_i(\*x) h_i(\*x),
\end{equation}
for some SOS polynomials $\lambda_i(\*x)$, $i\in[5N+30]$.

This is exactly the constraint enforced by the dual problem, with the distinction that $\lambda_i(\*x)$ may be a polynomial \emph{of any order}. Thus, the ellipsoid $\*{\tilde H}$ is feasible for \eqref{eq:conformal_ellipsoid_higherorder} if $\kappa$ is large enough. In practice we must restrict the polynomial order for a computationally tractable problem. However, in the limit of unbounded $\kappa$, the following holds for all $\epsilon > 0$:
\begin{equation}
    p^\star - o(\epsilon) \leq \lim_{\kappa\rightarrow\infty} d^\star_\kappa \leq p^\star.
\end{equation}

Thus, the dual hierarchy converges to the primal. $\qed$

\section{Faster Uncertainty Bounds via Quaternions}
\label{appendix:quaternions}
We derive the backprojection and chirality constraints for the quaternion representation of~\eqref{eq:purse}. This representation reduces the number of variables, significantly improving computation for higher-order relaxations.

\subsection{Quaternion Arithmetic}
The unit quaternion representation of a rotation about axis $\'\omega\in\mathbb{S}^2$ by angle $\theta$ is:
\begin{equation}
    \label{eq:quataxang}
    \*q = \begin{bmatrix}
        \cos(\theta/2)\\
        \'\omega\sin(\theta/2)
    \end{bmatrix}\!,
\end{equation}
where $q_1$ is the scalar part by convention. Unit quaternions have double coverage of the rotation space ($\*q$ and $-\*q$ represent the same rotation). Negating just the vector part gives $\*q^{-1}$. Applying a rotation to a vector requires quaternion algebra. To rotate a point $\*y\in\RR^3$:
\begin{equation}
    \label{eq:quatprodrot}
    \*q\circ
    \begin{bmatrix}
        0\\\*y
    \end{bmatrix}
    \circ\*q^{-1}
     =
     \begin{bmatrix}
        0\\
        \*R\*y
     \end{bmatrix}\!,
\end{equation}
where $\circ$ denotes the \emph{quaternion product} and $\*R\in\SO3$ is the rotation matrix corresponding to the quaternion $\*q$. It also composes rotations: $\*q_1\circ\*q_2 = \*R_1\*R_2$.

Quaternion products can be written as matrix-vector products. Given $\*a\in\RR^4$ and $\*b\in\RR^4$,
\begin{equation}
    \label{eq:quatmath}
    \*a \circ \*b = \*\Omega_1(\*a)\*b = \*\Omega_2(\*b)\*a.
\end{equation}
which defines the following product matrices~\cite{Yang19iccv-QUASAR}:
\begin{equation}
    \label{eq:omegas}
    \resizebox{0.9\columnwidth}{!}
    {$\displaystyle
    \*\Omega_1(\*a) \triangleq
    \begin{bmatrix}
        a_1 & -a_2 & -a_3 & -a_4\\
        a_2 & a_1 & -a_4 & a_3 \\
        a_3 & a_4 & a_1 & -a_2\\
        a_4 & -a_3 & a_2 & a_1
    \end{bmatrix}
    \!\text{, }
    \*\Omega_2(\*a) \triangleq
    \begin{bmatrix}
        a_1 & -a_2 & -a_3 & -a_4\\
        a_2 & a_1 & a_4 & -a_3 \\
        a_3 & -a_4 & a_1 & a_2\\
        a_4 & a_3 & -a_2 & a_1
    \end{bmatrix}\!.
    $}
\end{equation}

When we apply $\*\Omega_1$ or $\*\Omega_2$ to vectors $\*c\in\RR^3$ we implicitly homogenize with a leading $0$, making the matrices skew symmetric. These matrices allow converting between the matrix and quaternion representations of rotation with only vector products.
\begin{lemma}
    Let $\*x,\*y\in\RR^3$ and let $\*q\in\mathbb{S}^3$ and $\*R\in\SO{3}$ be the quaternion and matrix representations of the rotation $(\'\omega, \theta)$. Then,
    \begin{equation}
        \*x^\T\*R\*y
        = 
        -\*q^\T \*\Omega_1({\*x})\*\Omega_2({\*y})\*q.
    \end{equation}
\end{lemma}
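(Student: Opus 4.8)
The plan is to reduce the claimed quadratic identity to the quaternion rotation formula \eqref{eq:quatprodrot} together with a handful of elementary algebraic properties of the product matrices in \eqref{eq:omegas}. Throughout I would write $\bar{\*x}\triangleq[0,\,\*x^\T]^\T$ and $\bar{\*y}\triangleq[0,\,\*y^\T]^\T$ for the homogenized vectors, so that by the homogenization convention $\*\Omega_1(\*x)=\*\Omega_1(\bar{\*x})$ and $\*\Omega_2(\*y)=\*\Omega_2(\bar{\*y})$.

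First I would rewrite the left-hand side. Since $\*x$ and $\bar{\*x}$ share their last three coordinates while the leading entry of $\bar{\*x}$ vanishes, we have $\*x^\T\*R\*y=\bar{\*x}^\T[0,\,(\*R\*y)^\T]^\T$, and \eqref{eq:quatprodrot} turns the second factor into $\*q\circ\bar{\*y}\circ\*q^{-1}$. Applying \eqref{eq:quatmath} twice, first as $\bar{\*y}\circ\*q^{-1}=\*\Omega_2(\*q^{-1})\bar{\*y}$ and then left-multiplying by $\*q$ via $\*\Omega_1(\*q)$, yields $\*x^\T\*R\*y=\bar{\*x}^\T\*\Omega_1(\*q)\*\Omega_2(\*q^{-1})\bar{\*y}$. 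The goal then becomes showing that the right-hand side $-\*q^\T\*\Omega_1(\bar{\*x})\*\Omega_2(\bar{\*y})\*q$ equals this same expression.

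The three facts I would establish and invoke are: (a) for a vector with zero scalar part $\*\Omega_1(\bar{\*x})$ is skew-symmetric, as asserted in the text following \eqref{eq:omegas}; (b) for a unit quaternion $\*\Omega_2(\*q)^\T=\*\Omega_2(\*q^{-1})$, which follows by direct inspection of \eqref{eq:omegas} because transposing $\*\Omega_2$ negates the vector part $q_2,q_3,q_4$, i.e.\ conjugates, and the conjugate of a unit quaternion is its inverse; and (c) the left- and right-multiplication matrices commute, $\*\Omega_1(\*a)\*\Omega_2(\*b)=\*\Omega_2(\*b)\*\Omega_1(\*a)$, which is just associativity $(\*a\circ\*p)\circ\*b=\*a\circ(\*p\circ\*b)$ read through \eqref{eq:quatmath}. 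With these, I would transform the right-hand side: using \eqref{eq:quatmath} as $\*\Omega_1(\bar{\*x})\*q=\*\Omega_2(\*q)\bar{\*x}$ and then (a), we get $\*q^\T\*\Omega_1(\bar{\*x})=\big(\*\Omega_1(\bar{\*x})^\T\*q\big)^\T=-\bar{\*x}^\T\*\Omega_2(\*q)^\T=-\bar{\*x}^\T\*\Omega_2(\*q^{-1})$ by (b); likewise $\*\Omega_2(\bar{\*y})\*q=\*q\circ\bar{\*y}=\*\Omega_1(\*q)\bar{\*y}$. Substituting these gives $-\*q^\T\*\Omega_1(\bar{\*x})\*\Omega_2(\bar{\*y})\*q=\bar{\*x}^\T\*\Omega_2(\*q^{-1})\*\Omega_1(\*q)\bar{\*y}$, and a single application of the commutativity (c) matches this to $\bar{\*x}^\T\*\Omega_1(\*q)\*\Omega_2(\*q^{-1})\bar{\*y}$, which is exactly the left-hand side derived above.

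The main obstacle is purely bookkeeping rather than conceptual: tracking where the single overall minus sign originates (it is precisely the skew-symmetry of $\*\Omega_1(\bar{\*x})$) and keeping the conjugation $\*q\mapsto\*q^{-1}$ consistent when transposing $\*\Omega_2(\*q)$. Every step is an application of the two multiplication conventions in \eqref{eq:quatmath}; the only property not already stated in the surrounding text that I would verify explicitly is $\*\Omega_2(\*q)^\T=\*\Omega_2(\*q^{-1})$, which is a one-line check against \eqref{eq:omegas}.
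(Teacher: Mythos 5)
Your proof is correct and uses essentially the same argument as the paper's: both rest on the rotation identity \eqref{eq:quatprodrot}, the conjugation identity $\*\Omega_2(\*q^{-1})=\*\Omega_2(\*q)^\T$, the commutation of the left- and right-multiplication matrices, and the skew-symmetry of $\*\Omega_1(\bar{\*x})$. The only differences are organizational---you transform both sides to the common expression $\bar{\*x}^\T\*\Omega_1(\*q)\*\Omega_2(\*q^{-1})\bar{\*y}$ whereas the paper chains directly from the left-hand side to the right-hand side, and your justification of the commutativity via associativity of the quaternion product is cleaner than the paper's ``check by substitution.''
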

\begin{proof}
    From~\eqref{eq:quatprodrot} and~\eqref{eq:quatmath},
    \begin{equation}
        \begin{bmatrix} 0 \\ \*R\*y\end{bmatrix}
        = \*q \circ \*\Omega_2(\*q^{-1})\begin{bmatrix}0 \\ \*y\end{bmatrix}
        = \*\Omega_1(\*q)\*\Omega_2(\*q)^\T\begin{bmatrix}0 \\ \*y\end{bmatrix},
    \end{equation}
    where we use the identity $\*\Omega_2(\*q^{-1}) = \*\Omega_2(\*q)^\T$. Taking the inner product with $\*x$, we arrive at the identity:
    \begin{multline}
        \*x^\T\*R\*y =  \left(\*\Omega_2(\*q)\begin{bmatrix}0 \\ \*x\end{bmatrix}\right)^\T\left(\*\Omega_1(\*q)\begin{bmatrix}0 \\ \*y\end{bmatrix}\right)\\
    = \*q^\T \*\Omega_1(\*x)^\T\*\Omega_2(\*y)\*q.
    \end{multline}
    The property $\*\Omega_1(\*q)\*\Omega_2(\*q)^\T = \*\Omega_2(\*q)^\T\*\Omega_1(\*q)$ may be checked by substitution. Applying the skew symmetric property yields the lemma.
\end{proof}

\subsection{Quaternion Uncertainty Bounds}
Notice that the chirality constraints~\eqref{eq:frontofcamera} and backprojection constraints~\eqref{eq:bpinf} are linear in the rotation matrix $\*R$. For computational efficiency, we can rewrite these constraints in terms of the corresponding unit quaternion $\*q$. The chirality constraint is:
\begin{equation}
    \label{eq:foc_quat}
    \eqref{eq:frontofcamera} \iff 
    -\*q^\T \'\Omega_1(\*{\hat e}_3) \'\Omega_2(\*b_i) \*q
    + \*{\hat e}_3^\T\*t \geq 0,
\end{equation}
and the backprojection constraints are:
\begin{multline}
    \label{eq:bpinf_quat}
    \eqref{eq:bpinf}\! \iff\!
    \*q^\T \'\Omega_1\!\left(
        r_i\ez \pm (\*K - \*y_i\ez^\T)^\T\*{\hat e}_j
    \right)
    \'\Omega_2(\*b_i) \*q \\
    \leq 
    \left(
        r_i\ez \pm (\*K - \*y_i\ez^\T)^\T\*{\hat e}_j
    \right)^\T\! \*t.
\end{multline}
Note there are $4$ backprojection constraints for each keypoint $i$ ($\pm$ for $j=1,2$).

To enforce that $\*q$ represents a rigid rotation we simply require it have unit $2$-norm: $\*q^\T\*q=1$. Additionally, to handle double coverage we add the additional constraint $\*q^\T\*{\bar q} > 0$ to force the quaternion into the same hemisphere as the rotation estimate $\*{\bar q}$.

We now write the quadratic form of~\eqref{eq:purse} in terms of unit quaternions. Let $\*x \triangleq [1, \*q, \*t]$. In quadratic form, the pose uncertainty set is
\begin{equation}
    \label{eq:purse_qcqp_quat}
    \left\{
        \begin{array}{cc}
            \*x\in\RR^{8}\\
            x_{1} = 1
        \end{array}
        \left|\vphantom{\sum_{i}^N}\right.
        \begin{array}{rlc}
            \*x^\T\*Q\*x\hspace{-6pt} &= 0\\
            \*x^\T\*A_i\*x\hspace{-6pt} &\leq 0,\:\: i\in[5N+1]
        \end{array}
    \right\}\!.
\end{equation}
In the above, $\*Q$ enforces the unit quaternion constraint, while the inequality matrices $\*A_i$ enforce the chirality, backprojection, and double coverage constraints. Bounding this set with an ellipse is identical to~\prettyref{sec:uncertainty}.

Crucially, the quaternion representation enables substantial computational speedup by reasoning over a smaller $8\times 8$ lifted matrix instead of a $13\times 13$ matrix. In practice, the first-order relaxation fails to solve but the second-order relaxation is much tighter than the first-order in matrix form and solves in about $500\ \mathrm{ms}$.

\subsection{Angular Ellipse Marginalization}
The joint quaternion-translation ellipse matrix $\*H\in\mathcal{S}^8$ implies a translation and axis-angle ellipse similar to~\prettyref{sec:uncertainty_bounds}. Marginalizing to a translation-only ellipse requires projection onto the last three coordinates as before. The quaternion representation allows a slightly different axis-angle representation which eliminates the need for a bounded angular deviation.

First, we project $\*H$ to its first four coordinates to obtain an ellipse in $\*q$. Let $\*P_q \triangleq \begin{bmatrix} \eye_4 & \*0_{4\times 3}\end{bmatrix}$ and $\*H_q \triangleq \left(\*P_q \*H^{-1} \*P_q^\T\right)^{-1}$. Then the joint ellipse implies:
\begin{equation}
    (\*q - \*{\bar q})^\T\*H_q(\*q - \*{\bar q}) \leq 1.
\end{equation}

Let $\*q_\theta$ perturb $\*q$ by angle $\theta$ about axis $\'\omega\in\mathbb{S}^2$. We can rewrite $\*q = \*q_\theta \circ \*{\bar q}$. Thus,
\begin{equation}
    \label{eq:qqbar}
    \*q - \*{\bar q} = \left(\*q_\theta - \begin{bmatrix}
        1 \\ \*0
    \end{bmatrix}\right)\circ\*{\bar q} = 
    \begin{bmatrix}
        \cos(\theta/2) - 1\\
        \'\omega\sin(\theta/2)
    \end{bmatrix}\circ\*{\bar q}.
\end{equation}

We choose $\'\omega\sin(\theta/2)$ as our axis-angle representation. Note that we no longer require $\theta\leq90^\circ$ to uniquely recover the axis and angle from this representation. From~\eqref{eq:qqbar} and the identity~\eqref{eq:quatmath},
\begin{equation}
    \*H_\theta \triangleq \left(\*P_\theta
    \left[
        \*\Omega_2(\*{\bar q})^\T \*H_q \*\Omega_2(\*{\bar q})
    \right]^{-1}
    \*P_\theta^\T\right)^{-1},
\end{equation}
where $\*P_\theta$ simply projects onto the last $3$ coordinates.

The joint ellipse implies the following ellipsoidal constraint, completing the marginalization.
\begin{equation}
    (\'\omega\sin(\theta/2))^\T\*H_\theta(\'\omega\sin(\theta/2)) \leq 1.
\end{equation}

\section{Additional Experimental Results}
\label{appendix:conformal_exp}
We provide additional results regarding pose estimation (\prettyref{appendix:poseest}), Bayesian uncertainty (\prettyref{appendix:bayesian}), ellipsoid bounds at confidence $\alpha=0.4$ (\prettyref{appendix:vols}), and several ablations of SLUE (\prettyref{appendix:ablations}).

\subsection{Pose Estimation}
\label{appendix:poseest}
In our experiments, we choose perspective-n-point (PnP)~\cite{Terzakis20eccv-sqpnp} for pose estimation, which gives the center of ellipsoidal bound. Notably, our PnP formulation uses a Gaussian noise assumption informed by the keypoint bounds. We do not find a significant difference between this estimate and random sample averaging (RANSAG) \cite{Yang23arxiv-ransag}, a heuristic which averages perspective-3-point solutions sampled from the keypoint bounds. We first present our PnP formulation and then explicitly compare the performance of the two estimators. The effectiveness of PnP highlights the flexibility of uncertainty quantification. SLUE generates accurate uncertainty sets regardless of the estimator priors.

\begin{table}
    \centering
    \caption{Pose 2D Projection Error (\%)}
    \label{tab:poses}
    \begin{tabular}{rcccc}
        \toprule
        & \multicolumn{2}{c}{$\alpha=0.1$} & \multicolumn{2}{c}{$\alpha=0.4$}\\ 
        \cmidrule(lr){2-3}\cmidrule(lr){4-5}
        & PnP & RANSAG & PnP & RANSAG\\
        \midrule
        \textsf{CAST}     & 17.27 & 22.65 & 58.43 & 54.94 \\
        \textsf{LM-O}     & 72.50 & 72.53 & 72.32 & 72.20 \\
        \textsf{YCB-V}    & 51.75 & 45.12 & 53.81 & 50.77 \\
        \bottomrule
    \end{tabular}
\end{table}

\textbf{PnP Formulation.}
We solve the backprojection form of PnP using a second-order semidefinite relaxation. Assuming $\'\eps_i\sim\mathcal{N}(0, \sigma_i\eye_2)$ in \eqref{eq:conformal_meas}, the maximum likelihood estimator for object pose is:

\begin{equation}
    \label{eq:pnp}
    \min_{\substack{\*R\in\SO3, \\\*t\in\RR^3}} \sum_{i=1}^N
    \frac{1}{\sigma_i^2}
    \left\|
        \*{\hat{e}}_3\cdot(\*R\*b_i + \*t)\*y_i - \*K(\*R\*b_i + \*t)
    \right\|^2\!.
\end{equation}

Note that the rotation constraints $\*R\in\SO3$ may be written as $15$ quadratic equalities as in \prettyref{appendix:roteq}. We simply set the variance to the keypoint bound: $r_i = \sigma_i$. We solve \eqref{eq:pnp} using a second-order semidefinite relaxation \cite{Lasserre01siopt-LasserreHierarchy}. This relaxation is empirically tight for all examples, meaning it always returns the maximum likelihood pose estimate.

\textbf{Comparison with RANSAG.} 
We compare Gaussian PnP against RANSAG~\cite{Yang23arxiv-ransag}. For a fair comparison we use the $2$-norm bounds for both methods. \prettyref{tab:poses} computes the \emph{five-pixel 2D projection error} metric~\cite{Brachmann14cvpr} for each dataset. Neither method clearly outperforms the other, and both suffer when keypoint bounds are conservative, which is particularly acute in \textsf{\small CAST}.

\subsection{Comparison With Bayesian Methods}
\label{appendix:bayesian}
The Gaussian formulation for PnP suggests a simpler Bayesian approach to uncertainty quantification. In Bayesian uncertainty quantification, we assume some distribution on keypoint noise and a prior on object pose~\cite{Barfoot17book}. The posterior distribution, obtained via Bayes rule, is a distributional model of uncertainty. Contrary to SLUE, this model comes with no high-probability guarantees like \prettyref{prop:pus} unless the distributional assumptions are exactly satisfied. In this section we show a popular Bayesian approach~\cite{Barfoot17book}, which assumes Gaussian keypoint noise and linearizes the rotation matrix, does not give an accurate uncertainty measure.

First, we rewrite \eqref{eq:pnp} as $\min_{\*R, \*t} \sum_{i=1}^N \|\*q_i\|^2$, where
\begin{equation}
    \*q_i \triangleq \frac1{\sigma_i} \underbrace{(\eye_3 - \*y_i \*{\hat{e}}^\T)\*K}_{\triangleq \*U_i}(\*R \*b_i + \*t).
\end{equation}

\begin{table}[tb]
    \centering
    \caption{Coverage (\%) of Gaussian Model Quantiles}
    \label{tab:bayesian}
    \adjustbox{width=0.9\linewidth}
    {\begin{tabular}{rcccccc}
        \toprule
        Quantile & 0.1 & 0.6 & 0.9 & 0.95 & 0.99 \\
        \midrule
        \textsf{CAST} 
        & 1.5 & 9.9 & 21.2 & 27.5 & 39.3
        \\
        \textsf{LM-O}
        & 59.0 & 87.6 & 93.0 & 94.5 & 96.7
        \\
        \textsf{YCB-V}
        & 75.7 & 91.2 & 93.8 & 94.1 & 94.8
        \\
        \bottomrule
    \end{tabular}}
\end{table}

\begin{figure*}[tb!]
    \centering
\subfloat[Translation uncertainty ellipsoid ($\alpha=0.4$).]{\includegraphics[width=0.35\linewidth]{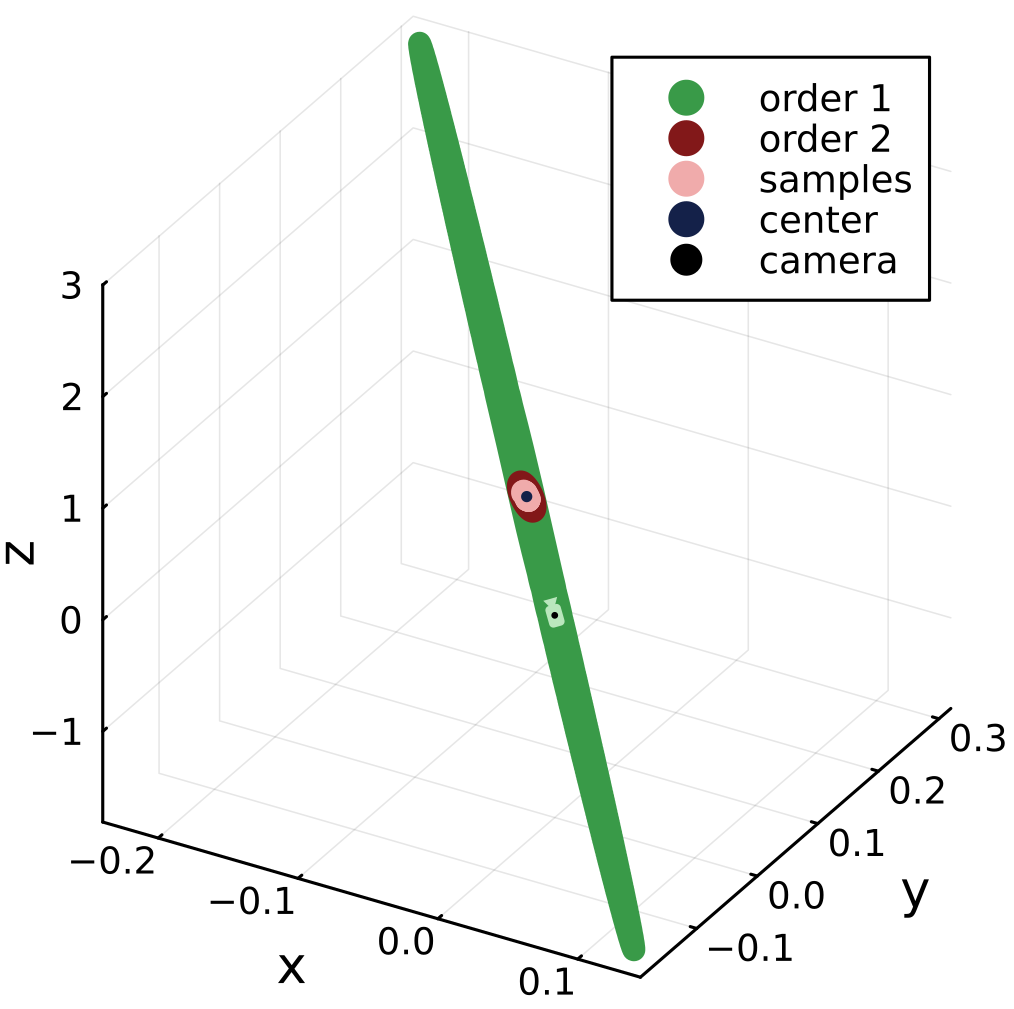}}
    \hspace{0.5cm}
    \subfloat[Orientation uncertainty ellipsoid ($\alpha=0.4$).]{\includegraphics[width=0.4\linewidth]{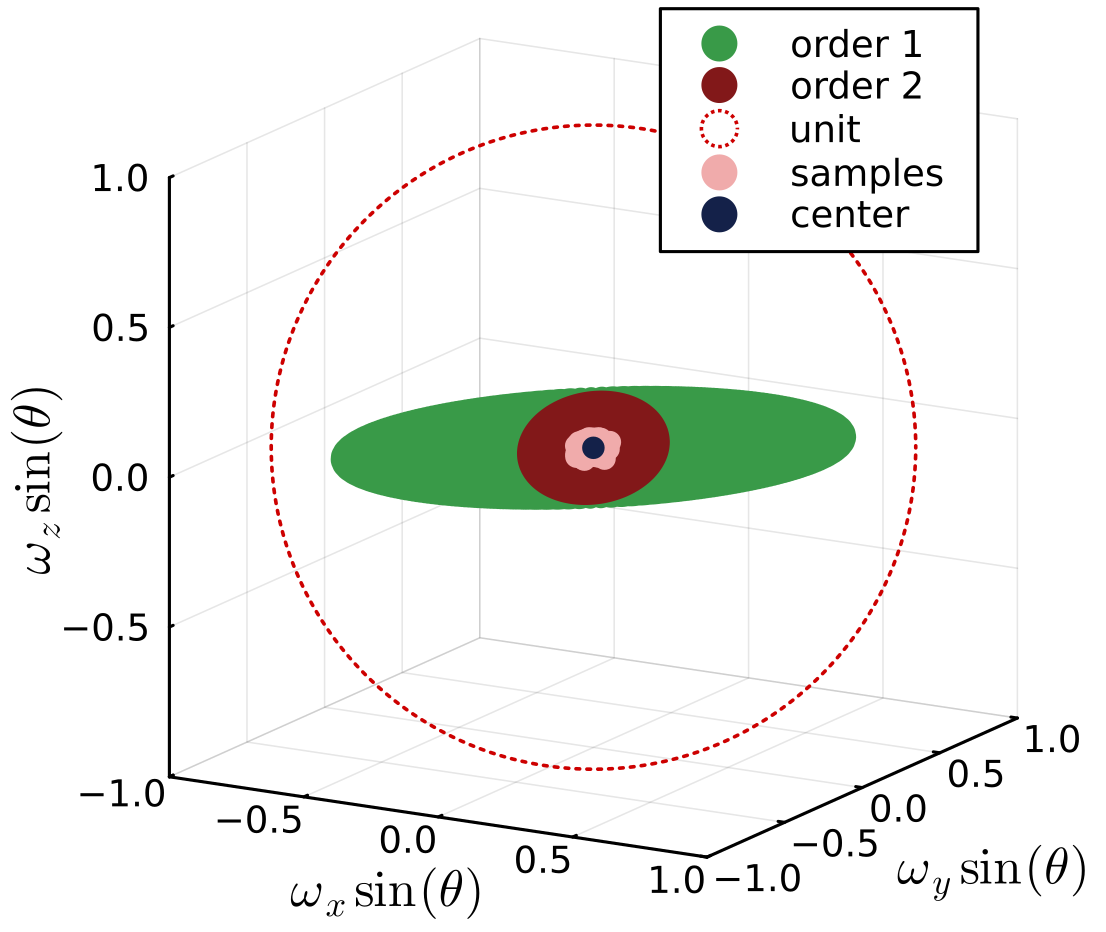}}
    \caption{\textbf{Qualitative bounding sets in rotation and translation space.} Projections of the first and second-order SLUE uncertainty ellipses for the duck object on \textsf{LM-O}. The $\alpha=0.4$ case gives a smaller uncertainty set at the cost of lower confidence. We omit baselines for visual clarity.}
    \label{fig:ellipses_40}
\end{figure*}

\begin{figure*}[htb!]
    \centering
    \subfloat[\textsf{LM-O} uncertainty]{\includegraphics[width=0.47\linewidth]{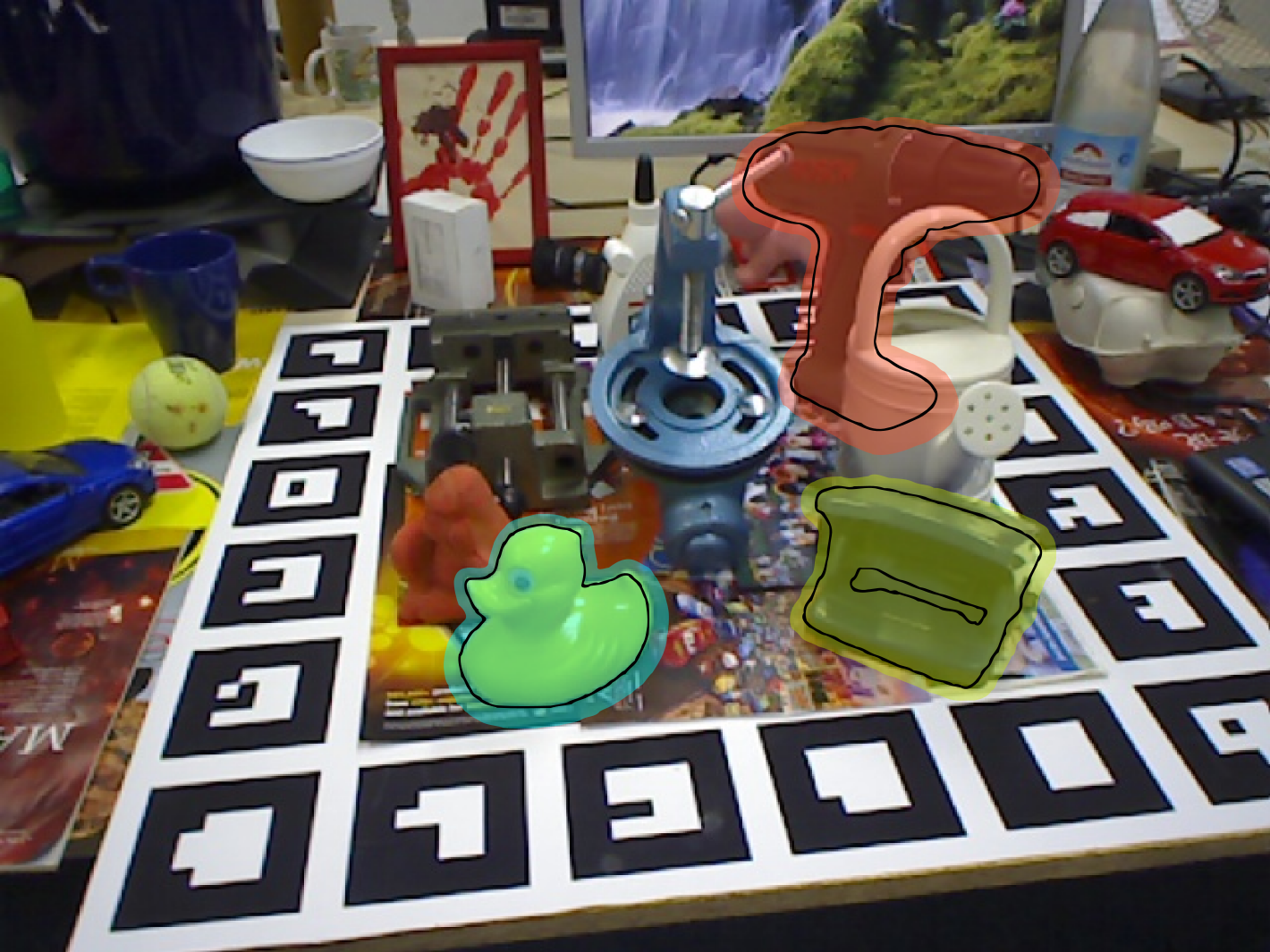}}
    \hspace{0.2cm}
    \subfloat[\textsf{YCB-V} uncertainty]{\includegraphics[width=0.47\linewidth]{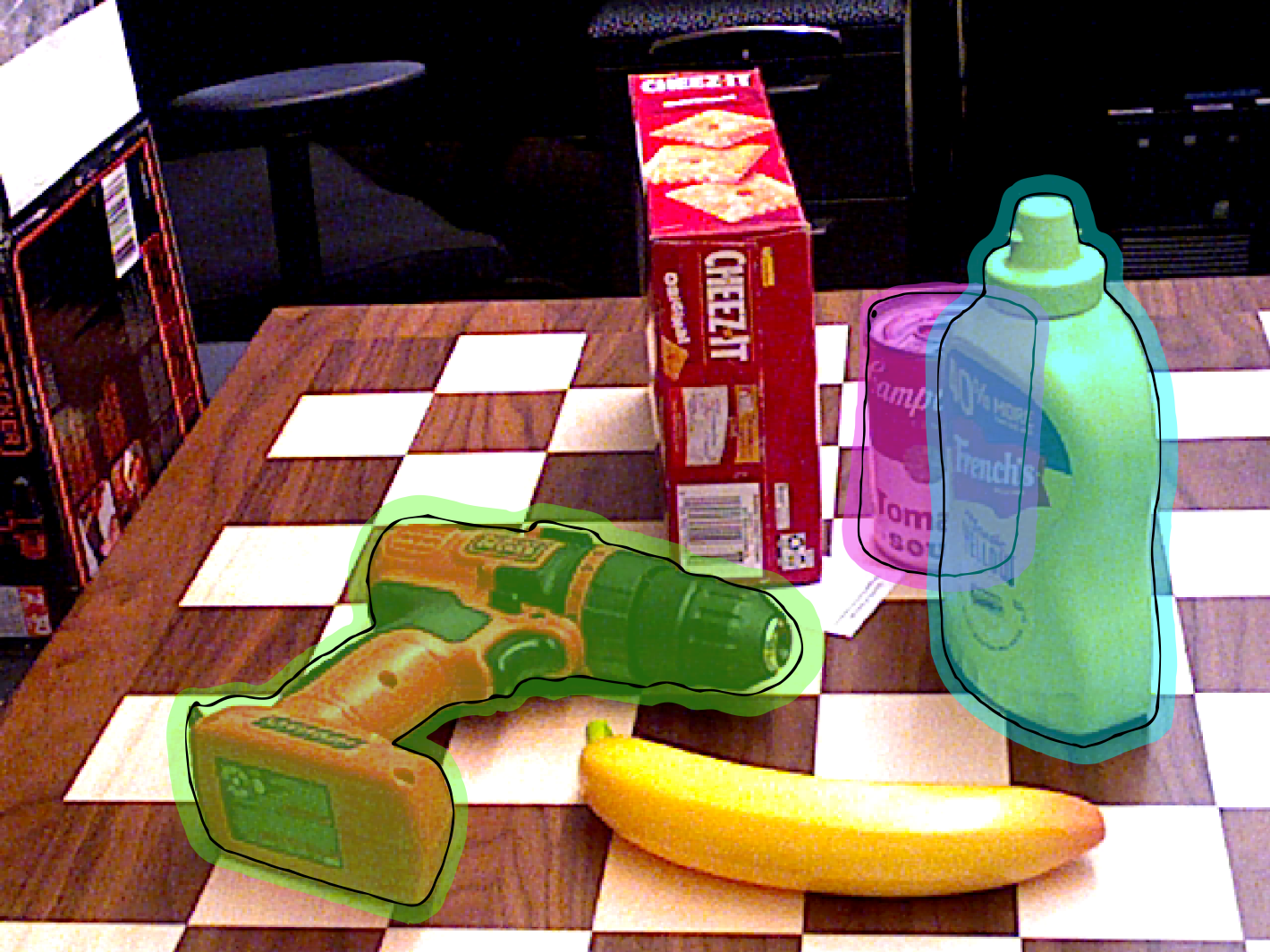}}
    \caption{\textbf{Image-plane projections of ellipsoidal pose uncertainty.} Plots show the set of possible poses in the second-order joint ellipsoidal bound for $\alpha=0.4$, which are significantly smaller than $\alpha=0.1$. We omit \textsf{\small CAST}.}
    \label{fig:qual2}
\end{figure*}

We adopt the tangent space linearization in \cite{Barfoot17book}. That is, $\*R\*b_i = \exp(\widehat{\'\omega\theta})\*{\bar R}\*b_i \approx \*{\bar R}\*b_i - (\widehat{\*{\bar R} \*b_i})$, where the hat operator takes the skew symmetric matrix of a vector as in \eqref{eq:skew}. Thus, the residual is:
\begin{equation}
    \*q_i \approx \frac1{\sigma_i}\*U_i\*{\bar R}\*b_i +
    \underbrace{\frac{1}{\sigma_i}\begin{bmatrix}
        \*U_i (\widehat{\*{\bar R}\*b_i}) &
        \*U_i
    \end{bmatrix}}_{\triangleq \*J_i}
    \begin{bmatrix}
        \'\omega \theta\\
        \*t
    \end{bmatrix}\!.
\end{equation}

Under this linearized residual, the posterior is Gaussian with information matrix $\*H_b \triangleq \sum_{i=1}^N \*J_i^\T \*J_i$. Quantiles of this distribution are proportional to ellipsoids with matrix $\*H_b$ and radius given by quantiles of the $\chi^2$ distribution with $6$ degrees of freedom.

To obtain this model, we made two approximations: we assumed Gaussian measurement noise and linearized the rotation. While it is a convenient approximation, the Gaussian model comes with no guarantees. \prettyref{tab:bayesian} shows the quantiles of this distribution are incorrect in real-world data, and are not guaranteed to lower bound coverage.

\subsection{Qualitative Bounds at $\alpha=0.4$}
\label{appendix:vols}
We provide additional qualitative ellipsoid bounds generated with confidence $\alpha=0.4$, which corresponds to smaller keypoint uncertainty bounds. These smaller bounds generally yield smaller uncertainty sets (which also hold with lower confidence). SLUE in turn produces smaller uncertainty ellipsoids, as shown in the ellipsoids in \prettyref{fig:ellipses_40} and image-plane projections in \prettyref{fig:qual2}. Both figures show the same image frames as \prettyref{sec:results_ellipse}. The figures still show similar ellipsoid trends. In particular, the first-order translation ellipsoid is elongated along the optical axis while the second-order ellipsoid is significantly tighter.

\subsection{Separate Ellipsoid Bounds and Relaxation Order}
\label{appendix:ablations}
Lastly, we provide two ablations to justify our design choices. Specifically, we compare bound volumes when solving for rotation and translation ellipsoids separately, and we compare with results for a third-order relaxation. Both of these design changes improves individual bounds but come at a cost of substantially increased runtime.

\textbf{Separate Rotation and Translation Ellipsoids.} We first compare SLUE with optimizing for rotational and translational ellipsoid volumes separately. This should always produce tighter independent rotational and translational bounds compared with the joint ellipsoid objective. We find optimizing for separate ellipsoids is computationally expensive at higher relaxation orders and offers only a small improvement on bounds. Further, using a joint ellipsoid is more accurate under correlation between translation and orientation uncertainty.

In \eqref{eq:conformal_ellipsoid}, the rotation-only ellipsoid objective corresponds to setting $\*H$ as follows:
\begin{equation}
    \*H \triangleq \begin{bmatrix}
        \*H_r & \*0 \\
        \*0 & \*0_{3\times3}
    \end{bmatrix}\!,
\end{equation}
where $\*H_r \in \mathcal{S}^9$ defines the rotation-only ellipsoid. We must also replace the objective with $\log\det(\*H_r)$; otherwise, it is always $0$.

Similarly, the translation-only ellipsoid objective corresponds to defining $\*H$ as:
\begin{equation}
    \*H \triangleq \begin{bmatrix}
        \*0_{9\times9} & \*0 \\
        \*0 & \*H_t
    \end{bmatrix}\!,
\end{equation}
where $\*H_t \in \mathcal{S}^3$ and we again replace the objective with $\log\det{\*H_t}$, but redefine $\*H$ as above elsewhere.

\prettyref{tab:objective} compares the bound volumes using SLUE (joint), which optimizes for a joint ellipsoid and then projects to rotation and translation ellipsoids, against separately optimizing for rotation and translation ellipsoids directly (split). We also show the total runtime of each approach. With one exception (likely a numerical issue), the split approach produces smaller volume bounds. At first-order, the split approach is also faster. For second-order bounds, however, the split approach is significantly slower because it must solve two large SDPs. In both cases the split approach only offers a minor volume improvement. Compute-limited applications which only require independent bounds should consider using the split formulation.

\begin{table}[tb]
    \centering
    \caption{Ablation: Median Bound Volumes and Runtimes for Split and Joint Ellipsoid Bounds on LM-O}
    \label{tab:objective}
    \adjustbox{width=\linewidth}
    {\begin{tabular}{rcccccc}
        \toprule
        & \multicolumn{2}{c}{Trans. ($\mathrm{m}^3\times10^{-3}$)} & \multicolumn{2}{c}{Ang. ($\mathrm{deg}^3\times10^5$)} & \multicolumn{2}{c}{Time (ms)}\\
        \cmidrule(lr){2-3} \cmidrule(lr){4-5} \cmidrule(lr){6-7}
        & Joint & Split & Joint & Split & Joint & Split \\
        \midrule
        $\kappa=1$
        & 2.7 & 1.4
        & 2.35 & 2.40
        & 37 & 31
        \\
        $\kappa=2$
        & 0.29 & 0.16
        & 4.90 & 3.51
        & 519 & 1056
        \\
        \bottomrule
    \end{tabular}}
\end{table}

\textbf{Relaxation Order.} Although we have only shown first and second-order results, the relaxation hierarchy extends beyond this and is only guaranteed to converge to the minimum volume bound as relaxation order $\kappa\rightarrow\infty$. Implicit in the hierarchy is a computational trade-off between accuracy, which improves at higher order, and runtime, which deteriorates. 

\prettyref{tab:higherorder} gives bound volumes and runtimes up to a third-order relaxation. Across datasets, the third-order relaxation takes a median time of about $7$ seconds, compared to under half a second for the second-order relaxation. The bound volumes do not improve proportionally. Although there is still some improvement in volume between second and third-order, it is significantly smaller than the improvement from first to second-order. In most applications, the precision of a second-order bound will likely be enough.

\begin{table}[tb]
    \centering
    \caption{Median Bound Volumes and Runtimes Across Relaxation Orders}
    \label{tab:higherorder}
    \adjustbox{width=\linewidth}
    {\begin{tabular}{rccccccccc}
        \toprule
        & \multicolumn{3}{c}{Trans. ($\mathrm{m}^3\times10^{-3}$)} & \multicolumn{3}{c}{Ang. ($\mathrm{deg}^3\times10^5$)} & \multicolumn{3}{c}{Time (ms)}\\
        \cmidrule(lr){2-4} \cmidrule(lr){5-7} \cmidrule(lr){8-10}
        $\kappa=$ & 1 & 2 & 3 & 1 & 2 & 3 & 1 & 2 & 3 \\
        \midrule
        \textsf{CAST} 
        & 154 & $9.7$ & $6.1$
        & 4.3 & 2.2 & 1.4
        & 35 & 284 & 7044
        \\
        \textsf{LM-O}
        & 2.7 & 0.29 & 0.19
        & 2.3 & 0.49 & 0.37
        & 37.4 & 520 & 7697
        \\
        \textsf{YCB-V}
        & 5.8 & 0.70 & 0.41
        & 4.2 & 1.6 & 1.0
        & 42.9 & 363 & 7145
        \\
        \bottomrule
    \end{tabular}}
\end{table} 

\bibliographystyle{plain}
\scriptsize{
    
}
\normalsize
\vspace{-30pt}

\begin{IEEEbiography}[{\includegraphics[width=1in,height=1in,clip,keepaspectratio]{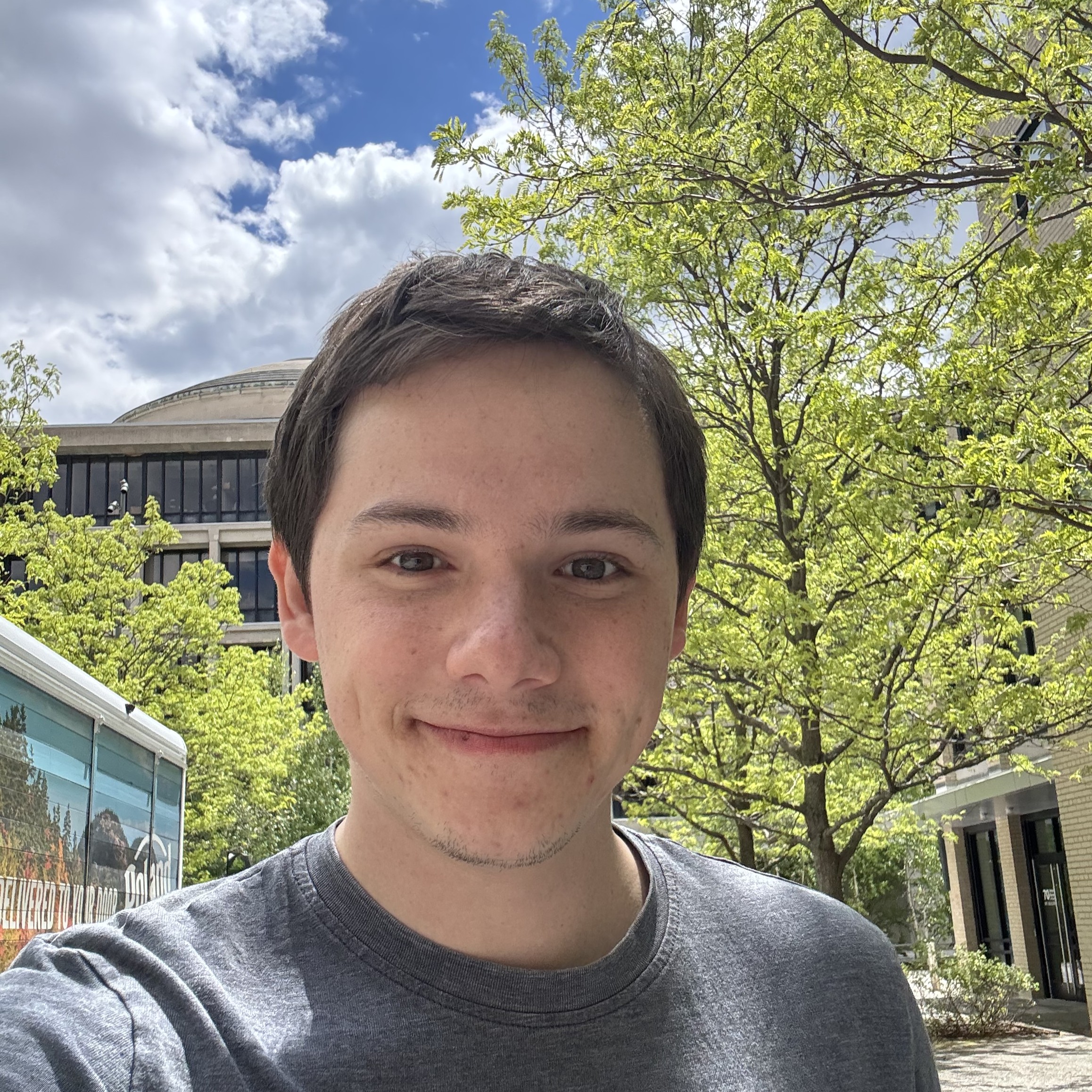}}]{Lorenzo Shaikewitz} (Graduate Student Member, IEEE) received the B.Sc. degree in Mechanical Engineering from the California Institute of Technology, Pasadena, CA, USA in 2023, and the M.Sc. degree in aeronautics and astronautics from the Massachusetts Institute of Technology, Cambridge, MA, USA in 2025.

He is currently a PhD student with the Massachusetts Institute of Technology, Cambridge, MA, USA. His research interests include certificate perception, optimization, and safety guarantees for robot perception and manipulation. His work is supported by an NSF graduate fellowship.
\end{IEEEbiography}
\vspace{-35pt}

\begin{IEEEbiography}[{\includegraphics[width=1in,height=1in,clip,keepaspectratio]{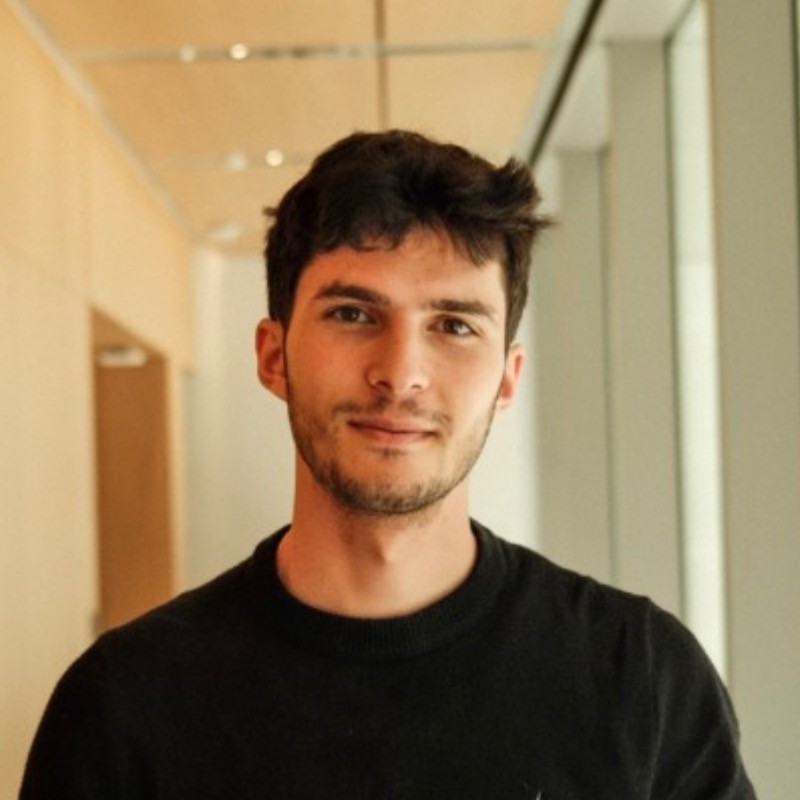}}]{Charis Georgiou} received the B.S. degree in computer science and engineering, and in mathematics from the Massachusetts Institute of Technology, Cambridge, MA, USA, in 2025. 
    
He is currently the Co-Founder and CEO of Minoic Intelligence Inc., Cambridge, MA, USA, a company developing computer vision and autonomous systems for the mining and aggregates industry. His research interests include perception and autonomy for robots operating in extreme, unstructured environments, with a focus on enabling reliable real-time decision-making for systems deployed in harsh industrial field conditions.
\end{IEEEbiography}
\vspace{-35pt}

\begin{IEEEbiography}[{\includegraphics[width=1in,height=1in,clip,keepaspectratio]{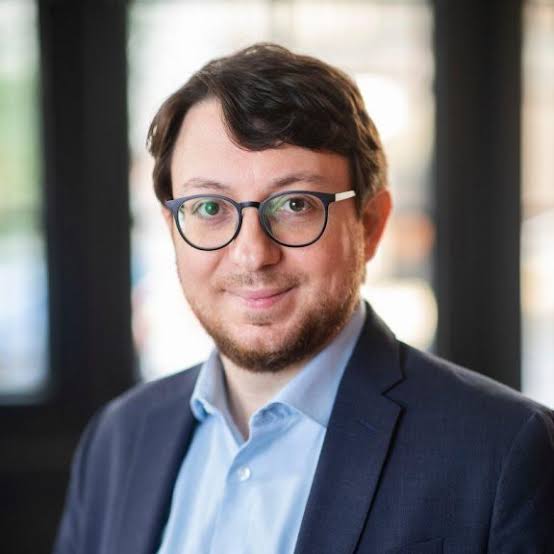}}]{Luca Carlone} (Senior Member, IEEE) received the Ph.D. degree in robotics from the Polytechnic University of Turin, Turin, Italy, in 2012. 
    
He is currently an Associate Professor with the Department of Aeronautics and Astronautics at the Massachusetts Institute of Technology, Cambridge, MA, USA, where he is also a Principal Investigator with the Laboratory for Information and Decision Systems (LIDS). His research interests include estimation, optimization, and learning for robot perception and decision-making. He has received several honors, including the NSF CAREER Award, the RSS Early Career Award, a Sloan Research Fellowship, a National Academy of Sciences Kavli Fellowship, and multiple best paper awards at leading robotics conferences and journals.
\end{IEEEbiography}

\end{document}